\documentclass{article} 
\usepackage{iclr2026_conference,times}
\definecolor{piBO}{HTML}{6A5ACD}   
\definecolor{piBOi}{HTML}{20B2AA}  
\iclrfinalcopy

\usepackage{amsmath,amsfonts,bm}









\def\eqref#1{equation~\ref{#1}}









\def\1{\bm{1}}










\DeclareMathAlphabet{\mathsfit}{\encodingdefault}{\sfdefault}{m}{sl}
\SetMathAlphabet{\mathsfit}{bold}{\encodingdefault}{\sfdefault}{bx}{n}













\usepackage{hyperref}
\usepackage{url}
\usepackage{tabularx}
\usepackage{tcolorbox}
\usepackage{array}  
\usepackage{subcaption}
 \usepackage{amsmath}
\usepackage[utf8]{inputenc} 
\usepackage{xcolor}
\usepackage{multirow}
\usepackage{graphicx}
\definecolor{lightblue}{RGB}{220, 235, 255}
\definecolor{lightgreen}{RGB}{220, 255, 220}
\definecolor{lightorange}{RGB}{255, 240, 220}
\definecolor{lightpurple}{RGB}{240, 230, 255}
\definecolor{lightgray}{gray}{0.9}
\definecolor{burntorange}{RGB}{230,159,0}
\definecolor{softorange}{RGB}{251,193,94}
\definecolor{cobaltblue}{RGB}{0,114,178}
\definecolor{charcoal}{RGB}{0,0,0}
\definecolor{darkgreen}{RGB}{0,158,115}
\definecolor{mintgreen}{RGB}{102,221,170}
\definecolor{magenta}{RGB}{204,121,167}
\definecolor{peach}{RGB}{244,165,193}
\definecolor{brickred}{RGB}{213,94,0}
\definecolor{slategray}{RGB}{153,153,153}
\definecolor{3dblue}{RGB}{0,0,139}
\definecolor{3dgreen}{RGB}{0,100,0}
\definecolor{3dgray}{RGB}{128,128,128}

\newcommand{\solidline}[2]{\textcolor{#1}{\rule[0.5ex]{#2}{0.8pt}}}

\newcommand{\dashedline}[2]{\textcolor{#1}{
  \rule[0.5ex]{0.3em}{0.8pt}\hspace{0.2em}%
  \rule[0.5ex]{0.3em}{0.8pt}\hspace{0.2em}%
  \rule[0.5ex]{0.3em}{0.8pt}
}}
\usepackage[T1]{fontenc}    
\usepackage{hyperref}       
\usepackage{url}            
\usepackage{booktabs}       
\usepackage{amsfonts}       
\usepackage{nicefrac}    
\usepackage{hyperref}
\usepackage{microtype}      
\usepackage{xcolor}         
\usepackage{enumitem} 
\usepackage{algorithm}
\usepackage{graphicx} 
\usepackage{amssymb}
\usepackage{comment}
\usepackage{algpseudocode}
\usepackage{amsthm}

\newtheorem{theorem}{Theorem}

\newtheorem{lemma}{Lemma}

\theoremstyle{definition}

\newtheorem{assumption}{Assumption}

\title{{LLINBO}: Trustworthy LLM-in-the-Loop \\ Bayesian Optimization}

\author{
Chih-Yu Chang \quad Milad Azvar \quad Chinedum Okwudire \quad Raed Al Kontar \\
University of Michigan, Ann Arbor\\
\texttt{\{cchihyu, mazvar, okwudire, alkontar\}@umich.edu}
}

%

\begin{document}

\maketitle

\begin{abstract}
Bayesian optimization (BO) is a sequential decision-making tool widely used for optimizing expensive black-box functions. Recently, Large Language Models (LLMs) have shown remarkable adaptability in low-data regimes, making them promising tools for black-box optimization by leveraging contextual knowledge to propose high-quality query points. However, relying solely on LLMs as optimization agents introduces risks due to their lack of explicit surrogate modeling and calibrated uncertainty, as well as their inherently opaque internal mechanisms. This structural opacity makes it difficult to characterize or control the exploration–exploitation trade-off, ultimately undermining theoretical tractability and reliability. To address this, we propose \texttt{LLINBO}: LLM-in-the-Loop BO, a hybrid framework for BO that combines LLMs with statistical surrogate experts (e.g., Gaussian Processes ($\mathcal{GP}$)). The core philosophy is to leverage contextual reasoning strengths of LLMs for early exploration, while relying on principled statistical models to guide efficient exploitation. Specifically, we introduce three mechanisms that enable this collaboration and establish their theoretical guarantees. We end the paper with a real-life proof-of-concept in the context of 3D printing. The code to reproduce the results can be found at \url{https://github.com/UMDataScienceLab/LLM-in-the-Loop-BO}.
\end{abstract}
\begin{figure}[!htbp]
\vspace{-1.7em}
\centering
\includegraphics[width=.6\textwidth]{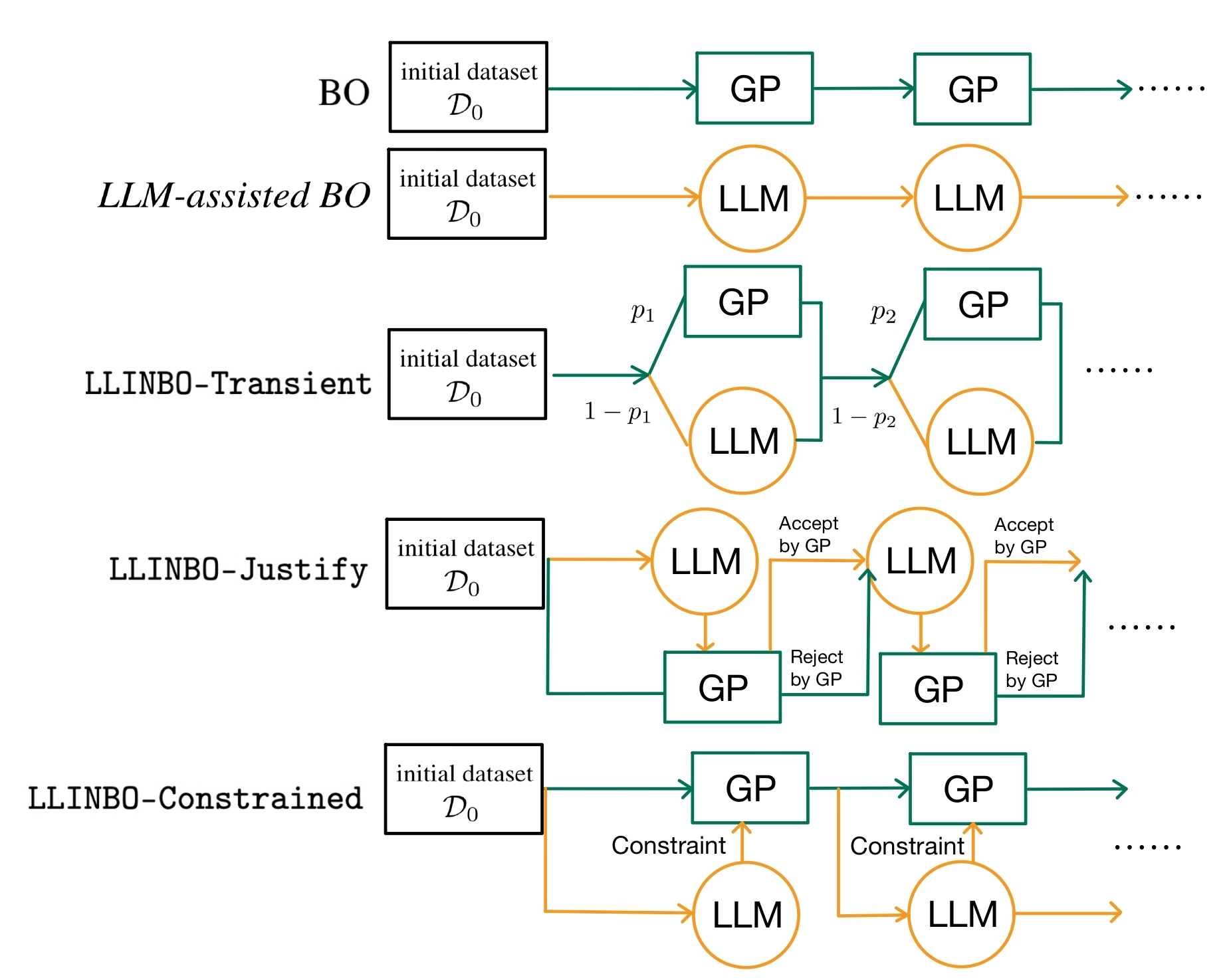}
\caption{Diagrams of existing methods and the proposed algorithms: \texttt{LLINBO-Transient}, \texttt{LLINBO-Justify}, and \texttt{LLINBO-Constrained}, introduced in Secs.~\ref{sec:A1}--\ref{sec:A3}.}
\end{figure}

\section{Introduction}\label{sec:intro}

BO has emerged as a powerful tool for black-box optimization (BBO), providing a principled framework for balancing exploration and exploitation. BO is particularly useful in scenarios where function evaluations are costly, such as in drug discovery (\cite{korovina2020chembo}), interaction design (\cite{liao2023interaction}), and hyperparameter tuning (HPT) (\cite{cho2020basic}). Starting with an initial dataset, BO employs a surrogate model, most commonly a $\mathcal{GP}$. The $\mathcal{GP}$ is capable of quantifying uncertainty and is used to approximate both the mean and variance of the black-box function. The next query point, hereafter referred to as a design, is then selected by maximizing an acquisition function (AF) that quantifies the potential benefit of evaluating a particular point, thereby strategically balancing exploration and exploitation. BO then augments the dataset with the new design–outcome tuple and proceeds sequentially. The past decade has witnessed many success stories for BO, and its theoretical guarantees have been well established for a range of commonly used AFs (\cite{srinivas2009gaussian,agrawal2012analysis}). These guarantees are typically regret-based, ensuring that, with high probability, one can asymptotically recover an optimal design. 

Recently, the few-shot learning capabilities of LLMs and their ability to generate high-quality outputs from minimal examples have made them attractive tools for optimization tasks (\cite{yang2023large}). In particular, LLMs have shown strong empirical performance over random search (\cite{liu2024large}), largely due to their ability to leverage problem context to fast-track the exploration of promising designs. Intuitively, LLMs act like \textit{domain experts}, using contextual cues to identify high-quality designs early in the optimization process. At each iteration, different phases of BO, including initial data generation, proposing new designs, and surrogate modeling, are carried out by the LLM through appropriately tailored prompts (\cite{liu2024large,yang2023large}). These prompts incorporate the current dataset, typically presented as a list of design-response pairs, together with the problem context, enabling the LLM to function as an optimizer. This prompting framework allows LLMs to act as potential agents for BBO without the need for explicit surrogate modeling or large amounts of observed data. We refer to this class of approaches, where LLMs are solely responsible for proposing design candidates and serve as the surrogate model in BO, as \textit{LLM-assisted BO}.

\paragraph{Main considerations and contributions.}
While recent work on \textit{LLM-assisted BO} (\cite{liu2024large,guo2023towards,song2024position,yang2023large}) has demonstrated promise in generating reasonable query designs, several limitations hinder its broader applicability. Most importantly, LLMs do not provide explicit surrogate modeling or calibrated uncertainty, both of which are essential for principled exploration–exploitation trade-offs. Consequently, although LLMs can accelerate optimization in the early stages, their effectiveness systematically degrades as more data are collected and surrogate models strengthen. As we highlight later, this degradation is a central characteristic that we explicitly model and hedge against in our proposed framework.

Moreover, LLMs remain inherently opaque, making the aforementioned trade-off difficult to interpret or control. This structural opacity, combined with their inability to quantify uncertainty in a principled way, introduces significant risks, particularly in applications where cost or safety is critical, ultimately undermining theoretical tractability and reliability. For instance, in the case of smooth functions, the predictive capability of $\mathcal{GP}$s, in terms of both the predicted mean and variance as measured by generalization bounds, has a known rate of improvement as more data is gathered (\cite{srinivas2009gaussian}). The same result is hard to characterize for LLMs, whose internal mechanisms for interpolating black-box functions are not fully understood and which currently lack calibrated uncertainty estimates.

With this in mind, we propose \texttt{LLINBO}, a framework that combines the contextual reasoning strengths of LLMs with the principled uncertainty quantification offered by statistical surrogates to enable more trustworthy optimization. To operationalize this collaboration, we introduce a general framework grounded in the philosophy of using LLM-suggested designs to sequentially refine and tailor BO. Within this framework, we propose three approaches, which are inspired by recent developments in federated learning, and analyze the theoretical properties of each. Through extensive simulations and a real-world proof-of-concept in 3D printing, we demonstrate the effectiveness and robustness of the proposed methods.

\paragraph{Relation to previous works.} LLMs' ability to utilize problem context has been actively investigated. Recent work has also demonstrated that LLMs can generalize effectively from limited in-context information (\cite{lampinen2025generalization,brown2020language}), making them particularly promising for BBO, where the objective function is unknown and historical observations are limited (\cite{liu2024large}). The use of LLMs for optimization is a growing research direction. An overview of existing \textit{LLM-assisted BO} can be found in Appendix~\ref{app:rw}. 

Based on our best knowledge, the proposed \texttt{LLINBO} is the first hybrid framework that integrates both LLMs and $\mathcal{GP}$s into the BO process to accelerate decision-making. We acknowledge that incorporating external information into BO has been investigated in other settings. For example, in Federated BO (F-BO, \cite{dai2020federated,yue2025collaborative,chen2025multi,dai2024federated}), clients cooperatively perform BO under different sharing schemes. In Human–AI Collaborative BO (HAIC-BO, \cite{hvarfnerpi,xu2024principled,adachi2024looping}), human preferences or belief distributions are incorporated into the BO process. By contrast, the role of LLMs in our framework is fundamentally different from the role of clients in F-BO or humans in HAIC-BO. The few-shot learning ability of LLMs enables the generation of high-quality candidate points in low-data regimes (\cite{liu2024large,brown2020language}); however, this ability systematically degrades relative to surrogate models as more data accumulate (also demonstrated in our experiments). clients and humans in F-BO and HAIC-BO do not exhibit such properties. This distinction underpins the novelty of our work: \texttt{LLINBO} explicitly models this degradation and introduces principled mechanisms to hedge against LLM unreliability while leveraging their early-stage strengths in tandem with $\mathcal{GP}$s.


We also acknowledge that \textit{LLM-assisted BO} is still in its infancy. Existing work primarily focuses on eliciting potentially good designs to evaluate directly from the LLM. This contrasts with BO frameworks that incorporate external guidance, particularly HAIC-BO, where the information elicited from humans is much richer. For instance, $\pi$BO introduced by \cite{hvarfnerpi} requires a preference function from humans, while the method of \cite{xu2024principled} relies on an expert function. In comparison, the possibility of eliciting richer forms of information from LLMs beyond a single candidate design per iteration remains largely unexplored. While we see this as an exciting direction for future research, the scope of this paper is on ensuring the safe and trustworthy use of LLM-suggested designs by validating and hedging them with surrogate models.


A detailed review of existing F-BO and HAIC-BO is provided in Appendix~\ref{app:rw}; here, we focus on the works that are most directly relevant to the proposed method. In \cite{dai2020federated,dai2021differentially}, Federated Thompson Sampling for BO was introduced, where clients share $\mathcal{GP}$ Random Fourier Features \cite{rahimi2007random}. Each client then selects the next design to query either based on its own features or on those of another randomly chosen client. Alternatively, \citet{chen2025multi} proposed a constraint-sharing strategy, where clients resample their surrogates using shared constraints to guide the next evaluation. While our framework differs in its ultimate objective, these principles have directly inspired our hybrid collaboration between LLMs and statistical surrogates.

\section{\texttt{LLINBO}: LLM-in-the Loop BO} 
\subsection{Preliminaries}
BO aims to find an optimal design $x^*$ that maximizes a black-box function $f$ over a domain \( \mathcal{X} \) by sequentially selecting query designs. Given a total budget of \( T \) evaluations, the data at iteration \( t \in [T] \) is denoted as \( \mathcal{D}_{t-1} = \{(x_i, y_i)\}_{i=1}^{t-1} \), where \( y_i = f(x_i) + \epsilon_i \) and \( \epsilon_i \sim \mathcal{N}(0, \lambda^2) \).

At time \( t \), BO selects the next design, denoted by $x_t$, to observe by maximizing an AF, \( \alpha(x, F_{t-1}) \), where \( F_{t-1} \) is the posterior belief of \( f \) conditioned on \( \mathcal{D}_{t-1} \). After selecting \( x_t \), a noisy observation \( y_t = f(x_t) + \epsilon_t \) is obtained, and the dataset is updated as \( \mathcal{D}_t = \mathcal{D}_{t-1} \cup \{(x_t, y_t)\} \). This process is then repeated until $T$ is exhausted. The posterior belief is typically modeled using a $\mathcal{GP}$ (\cite{kushner1964new}), which requires a prior mean function \( \mu(x) \) (often set to zero) and a kernel function \( k(x, x') \) encoding the smoothness of the function. This yields a posterior predictive distribution for $f$ given as
\begin{align*}
f(x) \mid \mathcal{D}_{t-1} \sim \mathcal{GP}(\mu_{t-1}(x), \sigma^2_{t-1}(x)), 
\end{align*}
with $\mu_{t-1}(x) = k_{t-1}(x)^\top (K + \lambda^2 I)^{-1}y$ and $
\sigma^2_{t-1}(x) = k(x, x) - k_{t-1}(x)^\top (K + \lambda^2 I)^{-1} k_{t-1}(x),$ where $K$ is the Gram matrix of the training inputs with \( K_{ij} = k(x_i, x_j),\ \forall i,j\in[t-1] \), \( k_{t-1}(x) = [k(x, x_1), \ldots, k(x, x_{t-1})]^\top \) being the covariance vector between the input \( x \) and the training inputs, and \( y = [y_1, \ldots, y_{t-1}]^\top \) is the vector of observed responses.

The posterior mean \( \mu_{t-1}(x) \) and variance \( \sigma^2_{t-1}(x) \) quantify our posterior belief about the function's value and uncertainty over \( \mathcal{X} \), which we denote compactly as \( F_{t-1} = \mathcal{GP}(\mathcal{D}_{t-1}) \). While many AFs have been proposed and their utility demonstrated, we focus without loss of generality on the Upper Confidence Bound (UCB, \cite{srinivas2009gaussian}), a widely used AF defined as
\begin{align}
\alpha_{\text{UCB}}(x, F_{t-1}) = \mu_{t-1}(x) + \beta_t \sigma_{t-1}(x),
\label{eq:UCB}
\end{align}
where \( \beta_t \) is a parameter that controls the trade-off between exploration and exploitation.

\subsection{LLM-in-the Loop BO Framework}
\label{sec:A0}
We start by introducing the general framework and define the entity running BO as the client. At each iteration $t$, we assume that the client can prompt an LLM agent \( \mathcal{A} \), such as ChatGPT, to suggest a candidate design to query, denoted \( x_{\text{LLM},t} \). This interaction can be implemented using a direct prompt from the client to obtain a query design, or through recently developed approaches and prompt templates tailored to the task at hand (\cite{liu2024large,liu2025large}).
Simultaneously, the client learns the posterior belief via a statistical surrogate conditioned on $\mathcal{D}_{t-1}$ and evaluates $x_{\text{LLM},t}$ accordingly. While our framework does not prescribe a specific surrogate model, we assume without loss of generality that the posterior belief is derived from a $\mathcal{GP}$ model, namely, $F_{t-1}$. Specifically, \( F_{t-1} \) contains the information of \( \mu_{t-1}(x_{\text{LLM},t}) \) and \( \sigma_{t-1}^2(x_{\text{LLM},t}) \), which are used to evaluate \( x_{\text{LLM},t} \) with respect to its predicted performance and associated uncertainty. Following this, the client may choose to retain, refine, or reject agent $\mathcal{A}$’s suggestion. For now, we describe this decision step only at a high level in Algorithm~\ref{algo:LLMIBO}, as it will be detailed in the three algorithms presented later.

\begin{algorithm}
\caption{LLM-in-the Loop BO Framework \texttt{(LLINBO)}}
\textbf{Input:} $\mathcal{D}_0$, $T$, LLM Agent $\mathcal{A}$, kernel function $k$, AF $\alpha$. 
\begin{algorithmic}[1]
\For{$t = 1$ to $T$}
    \State Compute $F_{t-1}=\mathcal{GP}(\mathcal{D}_{t-1})$
    \State Compute $x_{\mathcal{GP},t}$ by finding the maximizer of $\alpha(x,F_{t-1})$.
    \State \textcolor{cyan}{Query $\mathcal{A}$ for a suggested design point: $x_{\text{LLM},t}$}
    \State \textcolor{cyan}{Evaluate $x_{\text{LLM},t}$ using $F_{t-1}$}
    \State \textcolor{cyan}{Generate $x_t$ by refining, retaining or rejecting $x_{\text{LLM},t}$ using mechanisms in Secs.~\ref{sec:A1}--\ref{sec:A3}}
    \State Obtain $y_t = f(x_t)+\epsilon_t$ and update the dataset: $\mathcal{D}_t \gets \mathcal{D}_{t-1} \cup (x_t, y_t)$
\EndFor \\

\Return $\text{argmax}_{x_i} \{ y_i \mid (x_i, y_i) \in \mathcal{D}_T \}$
    
\end{algorithmic}
\label{algo:LLMIBO}
\end{algorithm}

Without steps 4–6 in Algorithm~\ref{algo:LLMIBO}, this reduces to BO by selecting $x_t$ as $x_{\mathcal{GP},t}$, and focusing only on step 4 we recover \textit{LLM-assisted BO} approaches, as in \cite{liu2024large}. The added steps aim to guide the sampling decision toward more grounded and theoretically justifiable choices that leverage contextual LLM knowledge along with calibrated $\mathcal{GP}$ surrogates and their uncertainty.

We define the instantaneous regret at time \( t \) as \( r_t = f(x^*) - f(x_t) \), and the cumulative regret as \( R_T = \sum_{t=1}^T r_t \). The goal is to establish an upper bound on the $R_T$ for all mechanisms to ensure no regret as \( T \to \infty \). Our theoretical developments follow the assumptions below:

\begin{assumption}
\label{assumption:rkhs_noise} $f$ belongs to a Reproducing Kernel Hilbert Space (RKHS) \( \mathcal{H}_k \) with kernel \( k \), such that \( \|f\|_{\mathcal{H}_k} \leq B \) for some constant $B\geq 0$ and the kernel satisfies \( k(x,x') \leq 1 \) for all \( x, x' \in \mathcal{X} \). The observational noise $\epsilon_t$ is conditionally \( R \)-sub-Gaussian for some \( R \geq 0 \) for all $t\in[T]$.
\end{assumption}

\begin{assumption}
\label{assumption:UCB}
Let $\gamma_{t-1}$ denote the maximum information gain after time $t-1$, as defined in Equation (4) of \cite{vakili2021information}. AF is defined as in (\ref{eq:UCB}), where $\beta_{t}$ is defined as \[\beta_t=B+R\sqrt{2(\gamma_{t-1}+1+\text{log}\frac{1}{\delta})} \text{ for some } \delta\in(0,1).\] 
\end{assumption}

\subsection{\texttt{LLINBO-Transient}: Exploration by LLMs then Exploitation by $\mathcal{GP}s$ \label{sec:A1}} 
Perhaps the most natural form of collaboration between an LLM and a BO method is to leverage the LLM’s contextual reasoning early in the process, initially placing greater attention on \( x_{\text{LLM},t} \), and gradually transition to the $\mathcal{GP}$'s suggestion \( x_{\mathcal{GP},t} \) as more data are collected. The $\mathcal{GP}$, with its ability to systematically interpolate observed data and calibrate uncertainty, becomes increasingly reliable for guiding exploitation (\cite{gramacy2020surrogates}).

More specifically, we propose that the query design \( x_t \) at iteration \( t \) be selected as follows:
\begin{align*}
  z_t\sim\text{Bernoulli}(p=p_t),\ \ x_t=z_t \cdot x_{\mathcal{GP},t}+(1-z_t) \cdot x_{\text{LLM},t},   
\end{align*}
where \( p_t \) is a monotonically increasing sequence approaching 1 as \( t \) increases. Specifically, with probability \( p_t \), \( x_t \) is set to \( x_{\mathcal{GP},t} \), and with probability \( 1 - p_t \), it is set to \( x_{\text{LLM},t} \). The proposed \texttt{LLINBO-Transient} algorithm distributes exploration and exploitation across different models: LLMs facilitate early-stage exploration, while $\mathcal{GP}$s focus on exploitation as more data becomes available. Theoretically, this approach has the following guarantee.
\begin{theorem}[Proof in Appendix~\ref{sec:theorem1}]
\label{thm:llmrgp}
Suppose that Assumptions~\ref{assumption:rkhs_noise}-\ref{assumption:UCB} hold. Let $p_t\in[0,1]$ be chosen such that $1-p_t\in\mathcal{O}(1/t)$, Then, with probability at least $1-\delta$, $R_T$ is upper bounded by 
\[
R_T\leq B\mathcal{O}(\sqrt{T})+\beta_T\mathcal{O}(\sqrt{T\gamma_T}). 
\]
\end{theorem}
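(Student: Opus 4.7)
My plan is to adapt the standard GP-UCB regret analysis to the randomized mixed sampling policy by splitting the cumulative regret according to the realization of $z_t$. For the indices where $z_t=1$ (so $x_t = x_{\mathcal{GP},t}$), the usual UCB argument yields a per-step regret bounded by twice the posterior standard deviation; for indices where $z_t=0$ (so $x_t = x_{\text{LLM},t}$), I have no such guarantee and will fall back on the trivial bound $r_t \leq 2B$, which follows from the reproducing property $|f(x)| \leq \|f\|_{\mathcal{H}_k}\sqrt{k(x,x)} \leq B$ under Assumption~\ref{assumption:rkhs_noise}. Controlling the total regret therefore reduces to (i) a UCB-style control of the posterior standard deviations summed over all queried points, and (ii) a high-probability bound on the number of LLM-selected rounds $|T_{\text{LLM}}| := \sum_t (1-z_t)$.

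For (i), I would invoke the self-normalizing RKHS concentration bound for $\mathcal{GP}$ regression that motivates the choice of $\beta_t$ in Assumption~\ref{assumption:UCB} (as in Chowdhury--Gopalan 2017 / Vakili et al.\ 2021), so that on an event of probability at least $1-\delta$, the uniform deviation $|f(x)-\mu_{t-1}(x)| \leq \beta_t\sigma_{t-1}(x)$ holds for all $t\in[T]$ and $x\in\mathcal{X}$. On this event, the standard UCB chain $f(x^*)\leq\mu_{t-1}(x^*)+\beta_t\sigma_{t-1}(x^*)\leq\mu_{t-1}(x_{\mathcal{GP},t})+\beta_t\sigma_{t-1}(x_{\mathcal{GP},t})$, combined with the lower tail $\mu_{t-1}(x_t)-\beta_t\sigma_{t-1}(x_t)\leq f(x_t)$, gives $r_t\leq 2\beta_t\sigma_{t-1}(x_t)$ whenever $z_t=1$. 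Summing and extending the posterior-variance sum to all $t$ produces
\[
R_T \;\leq\; 2B\,|T_{\text{LLM}}| \;+\; 2\beta_T\sum_{t=1}^T\sigma_{t-1}(x_t).
\]
I would then apply Cauchy--Schwarz and the standard maximum-information-gain inequality $\sum_{t=1}^T\sigma_{t-1}^2(x_t)\leq C\gamma_T$ to control the second term by $\mathcal{O}(\beta_T\sqrt{T\gamma_T})$; a point worth flagging is that this information-gain bound depends only on the actually observed query sequence and therefore holds verbatim under the mixed LLINBO-Transient policy (since every round still produces a noisy observation that updates the GP posterior).

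For (ii), I would bound the random count $|T_{\text{LLM}}|$ via Hoeffding's inequality on the independent Bernoulli variables $\{1-z_t\}_{t=1}^T$: since $1-p_t\in\mathcal{O}(1/t)$ gives $\mathbb{E}[|T_{\text{LLM}}|]=\sum_{t=1}^T(1-p_t)\in\mathcal{O}(\log T)$, a standard concentration argument yields $|T_{\text{LLM}}|\in\mathcal{O}(\sqrt{T})$ with high probability. A union bound over the RKHS-concentration event and the Bernoulli-concentration event then delivers the claim $R_T \leq B\mathcal{O}(\sqrt{T}) + \beta_T\mathcal{O}(\sqrt{T\gamma_T})$. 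The step I expect to require the most care is justifying that the information-gain bound on $\sum\sigma_{t-1}^2(x_t)$ remains valid for the adaptively-random query sequence produced by mixing a UCB rule with an opaque LLM proposer; beyond this, the argument is a fairly direct decomposition of regret between the two sampling regimes.
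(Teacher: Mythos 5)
Your proof is correct, but it follows a genuinely different route from the paper. The paper (following \citet{dai2020federated}) works with the \emph{conditional expected} regret: it writes $\mathbb{E}[r_t\mid\mathcal{F}_{t-1}] \leq p_t\,2\beta_t\sigma_{t-1}(x_{\mathcal{GP},t}) + (1-p_t)\nu_t$, builds the supermartingale $Y_t=\sum_{s\le t}\bigl(r_s-\mathbb{E}[r_s\mid\mathcal{F}_{s-1}]\bigr)$, and converts back to realized regret via Azuma--Hoeffding; the LLM rounds then enter only through the weighted term $\sum_t(1-p_t)\nu_t = B\,\mathcal{O}(\log T)$, while the Azuma deviation contributes the $B\,\mathcal{O}(\sqrt{T})$ term. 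You instead decompose the \emph{realized} regret by the coin flips $z_t$: the UCB bound $r_t\le 2\beta_t\sigma_{t-1}(x_t)$ on GP rounds, the trivial RKHS bound $r_t\le 2B$ on LLM rounds, and a Hoeffding bound on the Binomial count $|T_{\mathrm{LLM}}|$ (mean $\mathcal{O}(\log T)$, hence $\mathcal{O}(\sqrt{T})$ with high probability), which yields the same final rate $B\,\mathcal{O}(\sqrt{T})+\beta_T\,\mathcal{O}(\sqrt{T\gamma_T})$. Your version is more elementary (no supermartingale construction) and has the additional merit that the variance sum $\sum_t\sigma_{t-1}(x_t)$ is taken only over \emph{actually queried} points, which is exactly the setting in which Lemma~\ref{lemma:sumvarbd} applies; the paper instead applies that lemma to $\sigma_{t-1}(x_{\mathcal{GP},t})$ including rounds where $x_{\mathcal{GP},t}$ was not queried, a step your decomposition sidesteps. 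The paper's conditional-expectation/Azuma machinery, on the other hand, would extend more readily to schemes where the mixing probability depends on the history rather than being an exogenous deterministic sequence. Your flagged concern is resolved exactly as you suspect: the self-normalized concentration behind $\beta_t$ and the information-gain bound $\sum_t\sigma_{t-1}^2(x_t)\le \mathcal{O}(\gamma_T)$ hold for any adaptively chosen query sequence, since $\gamma_T$ is a maximum over all size-$T$ design sets, so the opacity of the LLM proposer is immaterial.
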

The assumption on \( p_t \) implies that \( p_t \to 1 \) at rate \( 1 - \mathcal{O}\left( \frac{1}{t} \right) \). For example, one may choose \( p_t = 1 - \frac{1}{t^2} \). With this assumption, the algorithm effectively controls the long-term risk of relying on LLM suggestions throughout the optimization process. Based on this assumption, we apply the Azuma–Hoeffding inequality introduced in \cite{hoeffding1963probability} to upper bound the cumulative regret with high probability, which is a standard technique in the BO literature \cite{dai2020federated}.

\subsection{\texttt{LLINBO-Justify}: Surrogate-driven Rejection of LLM's Suggestions\label{sec:A2}}
In contrast to the approach in Sec.~\ref{sec:A1}, where $x_{\text{LLM},t}$ is directly incorporated during early exploration, here we exploit the posterior believe $F_{t-1}$ as an evaluator for $x_{\text{LLM},t}$. If the LLM suggestion is found to be substantially worse than the current AF maximizer, it is rejected, and $x_{\mathcal{GP},t}$ is used instead. Fundamentally, our goal is to enable the safe integration of LLMs into BO by rejecting suggestions that significantly contradict a client's optimal utility; an approach denoted as \texttt{LLINBO-Justify}.

Specifically, given $x_{\text{LLM},t}$ and the AF constructed by $F_{t-1}$, the client rejects \(x_{\text{LLM},t}\) if
\begin{equation} \label{eq:just}
    \alpha_{\text{UCB}}(x_{\text{LLM},t},F_{t-1})\leq\max_x\alpha_{\text{UCB}}(x,F_{t-1})-\psi_t,
\end{equation}
where $\psi_t$ is the client-selected confidence parameter. The maximum value of the AF, together with the selected \( \psi_t \), defines the \( \psi_t \)-suboptimal region of the AF. Accordingly, \( x_{\text{LLM},t} \) is accepted and assigned as \( x_t \) if it lies within this region; otherwise, \( x_t = x_{\mathcal{GP},t} \).

In the early stages, when the client places greater trust in the LLM’s suggestions, a larger \( \psi_t \) can be chosen to promote broader exploration around \( x_{\text{LLM},t} \), effectively enlarging \( \psi_t \)-suboptimal region of the AF to investigate a wider area influenced by the LLM. Over time, we recommend gradually decreasing \( \psi_t \) to rely more on the $\mathcal{GP}$, whose uncertainty estimates become increasingly well-calibrated as more data is collected.

An upper bound on $R_T$ for \texttt{LLINBO-Justify} is provided in Theorem~\ref{thm:gpj}. From (\ref{eq:just}), we observe that, regardless of whether \( x_{\text{LLM},t} \) is accepted or not, the next query design $x_t$ (either $x_{\text{LLM},t}$ or $x_{\mathcal{GP},t}$) always lies within the \( \psi_t \)-suboptimal region of \( \alpha_{\text{UCB}}(x,F_{t-1}) \). Leveraging this observation along with classical UCB analysis techniques in \cite{srinivas2009gaussian}, the result follows directly.

\begin{theorem}[Proof in Appendix~\ref{sec:theorem2}\label{thm:gpj}]
Suppose that Assumptions~\ref{assumption:rkhs_noise}-\ref{assumption:UCB} hold and $\psi_t\in\mathcal{O}(1/\sqrt{t})$. Then, with probability at least $1-\delta$, $R_T$ is upper bounded by\[
R_T=\sum_{t=1}^Tr_t\leq\sum_{i=1}^T\psi_t+2\beta_T\sum_{i=1}^T\sigma_{t-1}(x_t)=\mathcal{O}(\sqrt{T})+\beta_T\mathcal{O}(\sqrt{T\gamma_T}).
\] 
\end{theorem}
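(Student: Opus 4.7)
}

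The plan is to exploit the single structural fact highlighted in the paragraph preceding the theorem: regardless of whether the LLM suggestion is accepted or rejected, the executed query $x_t$ always satisfies
\[
\alpha_{\text{UCB}}(x_t, F_{t-1}) \;\geq\; \max_{x\in\mathcal{X}} \alpha_{\text{UCB}}(x, F_{t-1}) - \psi_t.
\]
Indeed, if $x_{\text{LLM},t}$ is rejected, then $x_t = x_{\mathcal{GP},t}$ is the exact maximizer and the slack $\psi_t$ is not even needed; if accepted, the condition is precisely the acceptance rule. Thus $x_t$ is always a $\psi_t$-approximate maximizer of $\alpha_{\text{UCB}}(\cdot, F_{t-1})$, which is the key handle I will use.

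First, I would invoke the RKHS confidence bound under Assumptions~\ref{assumption:rkhs_noise}--\ref{assumption:UCB} (as in Theorem~2 of Chowdhury \& Gopalan, 2017; this is the same tool used in the standard UCB regret proof of \cite{srinivas2009gaussian}). With the prescribed $\beta_t$, it yields, with probability at least $1-\delta$,
\[
|f(x) - \mu_{t-1}(x)| \;\leq\; \beta_t\,\sigma_{t-1}(x) \quad \text{for all } x\in\mathcal{X},\ t\in[T].
\]
Conditioning on this event, I can decompose the instantaneous regret by inserting the UCB at $x^\ast$ and then using $\psi_t$-approximate optimality:
\[
r_t \;=\; f(x^\ast) - f(x_t) \;\leq\; \alpha_{\text{UCB}}(x^\ast, F_{t-1}) - f(x_t) \;\leq\; \alpha_{\text{UCB}}(x_t, F_{t-1}) + \psi_t - f(x_t).
\]
A second application of the confidence bound (this time at $x_t$) gives the pointwise regret inequality $r_t \leq 2\beta_t\sigma_{t-1}(x_t) + \psi_t$, which is exactly what appears in the statement (up to the typo $\delta_t \leftrightarrow \psi_t$).

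Summing over $t$, the cumulative regret splits into a slack term $\sum_{t=1}^T \psi_t$ and a variance term $2\beta_T\sum_{t=1}^T\sigma_{t-1}(x_t)$ (using monotonicity of $\beta_t$ in $t$). The slack term is controlled by the assumption $\psi_t \in \mathcal{O}(1/\sqrt{t})$, since $\sum_{t=1}^T 1/\sqrt{t} = \mathcal{O}(\sqrt{T})$. The variance term is handled by the standard information-gain argument: Cauchy--Schwarz plus the identity relating $\sum_t \sigma_{t-1}^2(x_t)$ to the maximum information gain $\gamma_T$ (Lemma~5.4 of \cite{srinivas2009gaussian}) yields $\sum_{t=1}^T \sigma_{t-1}(x_t) \leq \mathcal{O}(\sqrt{T\gamma_T})$.

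The main conceptual step, and the only one that differs from the classical UCB analysis, is the insertion of $\psi_t$ in the regret decomposition; I expect no real obstacle here because the $\psi_t$-suboptimality of $x_t$ is directly built into the algorithm's acceptance rule. The remaining work is bookkeeping: verifying that the confidence event has probability $1-\delta$ uniformly in $t$ (already delivered by Assumption~\ref{assumption:UCB}'s choice of $\beta_t$), and checking that the two rate bounds $\mathcal{O}(\sqrt{T})$ and $\mathcal{O}(\sqrt{T\gamma_T})$ assemble cleanly. The upshot is that \texttt{LLINBO-Justify} inherits the no-regret guarantee of vanilla GP-UCB, with only an additive $\mathcal{O}(\sqrt{T})$ price for permitting LLM-driven deviations inside the shrinking $\psi_t$-suboptimal region.
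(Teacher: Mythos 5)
Your proposal is correct and follows essentially the same route as the paper's proof: the same $\psi_t$-approximate-maximizer observation, the same RKHS confidence bound (Theorem 2 of Chowdhury and Gopalan, the paper's Lemma~\ref{lemma:UCBconcentrate}) applied at $x^\ast$ and at $x_t$ to get $r_t \leq \psi_t + 2\beta_t\sigma_{t-1}(x_t)$, and the same summation using $\sum_t \psi_t = \mathcal{O}(\sqrt{T})$ together with the information-gain bound on $\sum_t \sigma_{t-1}(x_t)$ (the paper's Lemma~\ref{lemma:sumvarbd}). The only cosmetic difference is that the paper routes the first inequality explicitly through $x_{\mathcal{GP},t}$ before invoking $\psi_t$-suboptimality, whereas you compress those two steps into one, and you correctly flag that the $\delta_t$ in the theorem statement should read $\psi_t$.
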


\subsection{\texttt{LLINBO-Constrained}: Constrain Surrogates on LLM's Suggestions\label{sec:A3}}

Apart from the two approaches above that depend on defining \( p_t \) in \texttt{LLINBO-Transient} and \( \psi_t \) in \texttt{LLINBO-Justify}, our third mechanism takes a different approach: it \textbf{directly refines the $\mathcal{GP}$} toward potential regions of improvement using \( x_{\text{LLM},t} \), without requiring such predefined tuning.

Upon receiving $x_{\text{LLM},t}$, a client treats this as potentially good design. Namely, assumes that $f(x_{\text{LLM},t}) > \kappa_{t-1}$, where $\kappa_{t-1} \triangleq \max_{x}\mu_{t-1}(x)$ is the posterior mean maximizer. In other words, $x_{\text{LLM},t}$ is treated as a design that can potentially improve upon the current belief of the largest value of $f$. Notice that this constraint may not hold, and we will show shortly how it can be automatically accounted for. With this, the updated posterior belief is given as 
\begin{align}
   F^+_{t-1} \triangleq \mathcal{GP}(\mathcal{D}_{t-1}) \mid \left\{ f(x_{\text{LLM},t}) > \kappa_{t-1} \right\} \label{eq:constr}
\end{align}
This essentially leads to a constrained $\mathcal{GP}$, a $\mathcal{CGP}$. While \( \mathcal{CGP} \) does not admit a closed-form posterior, one can readily draw function realizations from it via rejection sampling and approximate the AF using Monte Carlo (MC, \cite{chen2025multi}). 

In practice, to sample from \( F^+_{t-1} \), one can draw \( S_t \) realizations, denoted \( \tilde{f}_{t-1,s}(x_{\text{LLM},t}) \) for \( s \in [S_t] \), from \( F_{t-1} \). We retain only those samples satisfying the constraint in~(\ref{eq:constr}), i.e., \( \tilde{f}_{t-1,s}(x_{\text{LLM},t}) > \kappa_{t-1} \). Let \( I_t = \{s \mid \tilde{f}_{t-1,s}(x_{\text{LLM},t}) > \kappa_{t-1} \} \) denote the index set of retained samples. For each \( s \in I_t \), we construct a \( \mathcal{GP} \) based \( \mathcal{D}_{t-1}\cup \{(x_{\text{LLM},t}, \tilde{f}_{t-1,s}(x_{\text{LLM},t}))\} \), and denote its posterior mean and variance by \( \mu_{t-1,s}^+(x) \) and \( \sigma_{t-1,s}^+(x)^2 \), respectively.

Fig.~\ref{fig:llmCGP} illustrates the behavior of \texttt{LLINBO-Constrained}. Critically, more output samples are retained when the constraint is satisfied, reflecting posterior support for \( x_{\text{LLM},t} \) as a high-quality candidate. In such cases, the mean function under the updated surrogate \( F_{t-1}^+ \) becomes elevated near \( x_{\text{LLM},t} \), highlighting promising regions for subsequent exploration (see Fig.~\ref{fig:llmCGP}(a)--(b)). Conversely, when \( x_{\text{LLM},t} \) strongly contradicts the current posterior, no samples are retained (\( |I_t| = 0 \)), and the surrogate remains unchanged, i.e., \( F_{t-1} = F_{t-1}^+ \), effectively discarding $x_{\text{LLM},t}$ in favor of \( x_{\mathcal{GP},t} \) (see Fig.~\ref{fig:llmCGP}(c)--(d)). This selective retention mechanism is key to maintaining the trustworthiness of the BO process and underpins the theoretical guarantees discussed later.

\begin{figure}[!htbp]
    \centering
    
    \begin{subfigure}[b]{0.46\textwidth}
        \centering
        \includegraphics[width = 1\textwidth,height=0.31\textwidth]{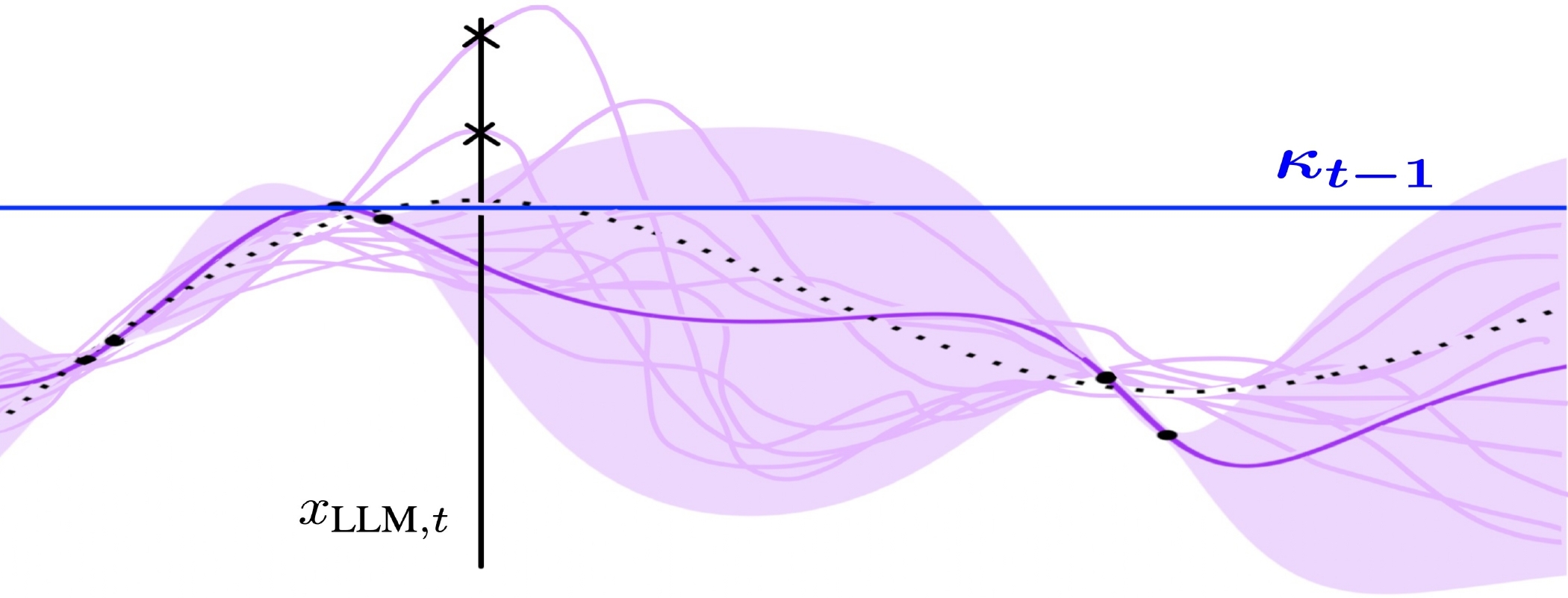}
        \caption{10 realizations are sampled from $F_{t-1}$ (the light purple curves). Only the points at $x_{\text{LLM},t}$ that are greater than $\kappa_{t-1}$ are retained (the two crosses).}
    \end{subfigure}\hfill
    \begin{subfigure}[b]{0.46\textwidth}
         \centering
        \includegraphics[width = 1\textwidth,height=0.31\textwidth]{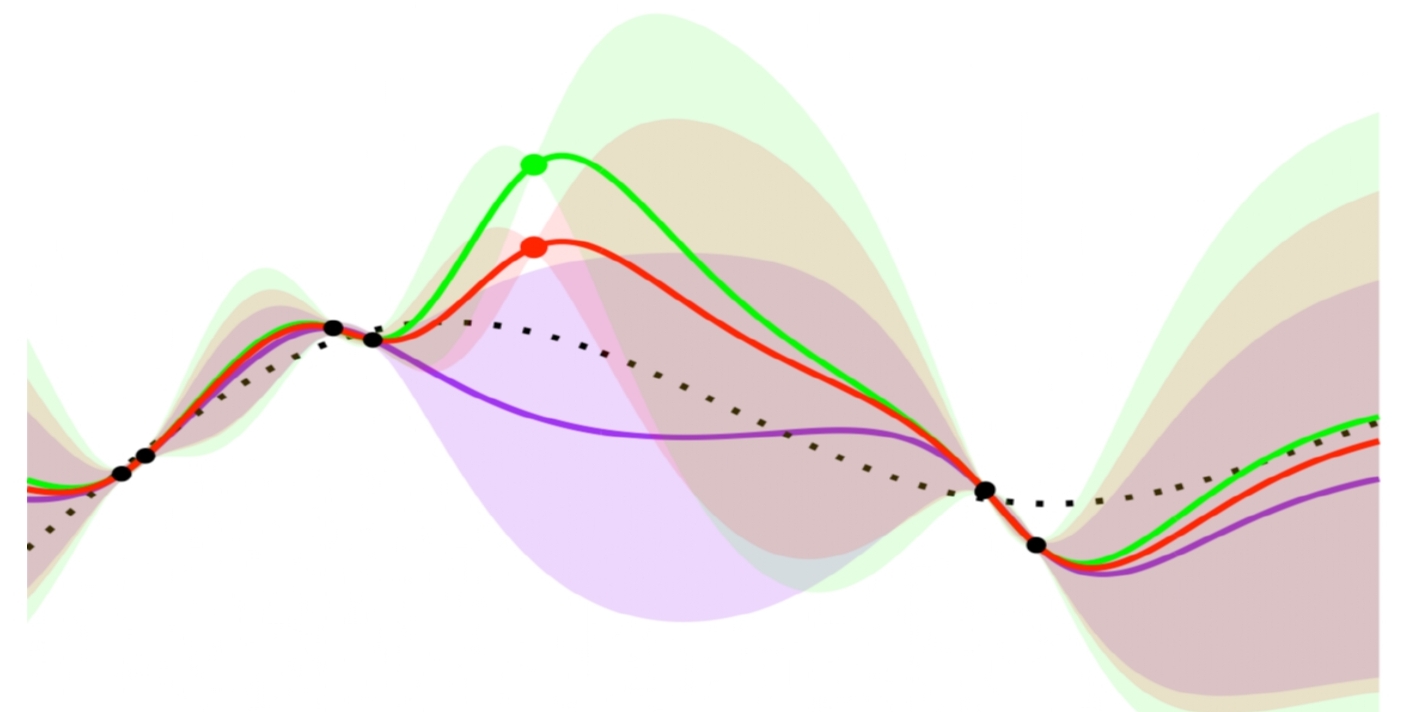}
        \caption{Two $\mathcal{GP}$s (red and green curves and shaded areas) are constructed based on the union of each retained sample and $\mathcal{D}_{t-1}$.}
        \label{fig:qq}
    \end{subfigure}
    
    \vskip\baselineskip
    
    \begin{subfigure}[b]{0.46\textwidth}
        \centering
        \includegraphics[width = 1\textwidth,height=0.31\textwidth]{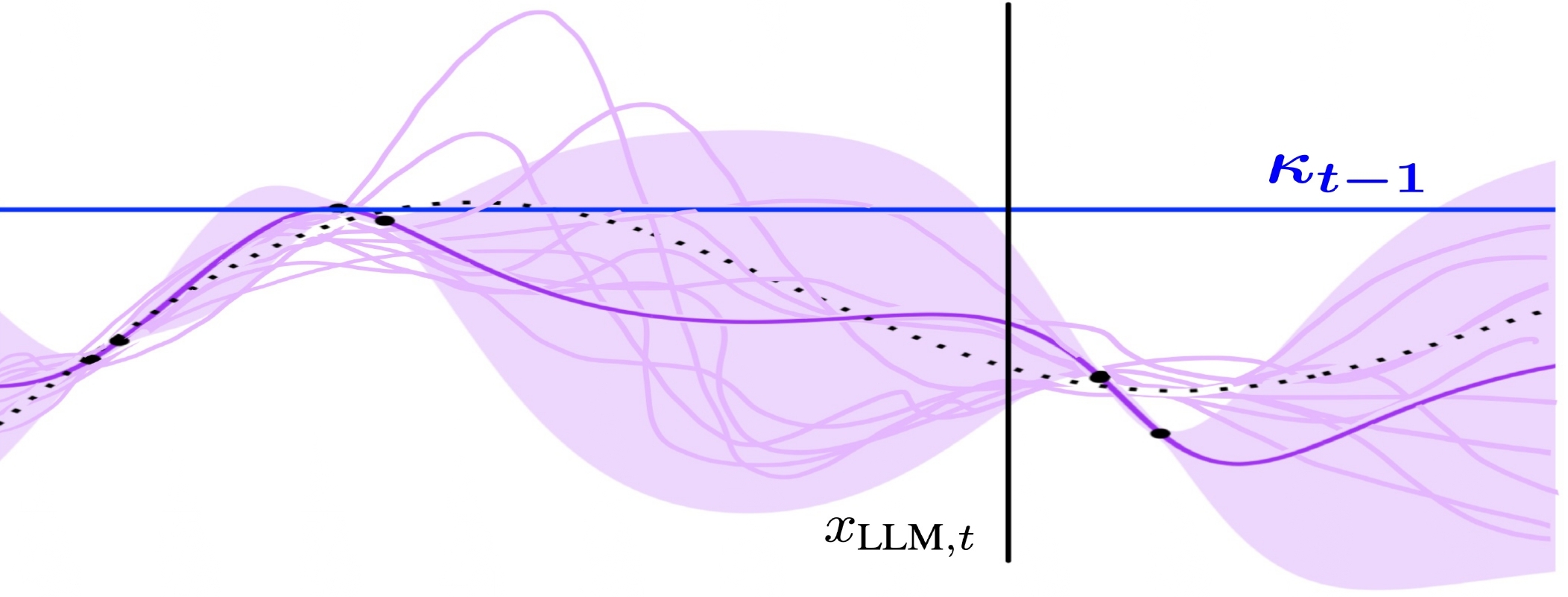}
        \caption{All points lie below than $\kappa_{t-1}$.}
    \end{subfigure}\hfill
    \begin{subfigure}[b]{0.46\textwidth}
        \centering
        \includegraphics[width = 1\textwidth,height=0.31\textwidth]{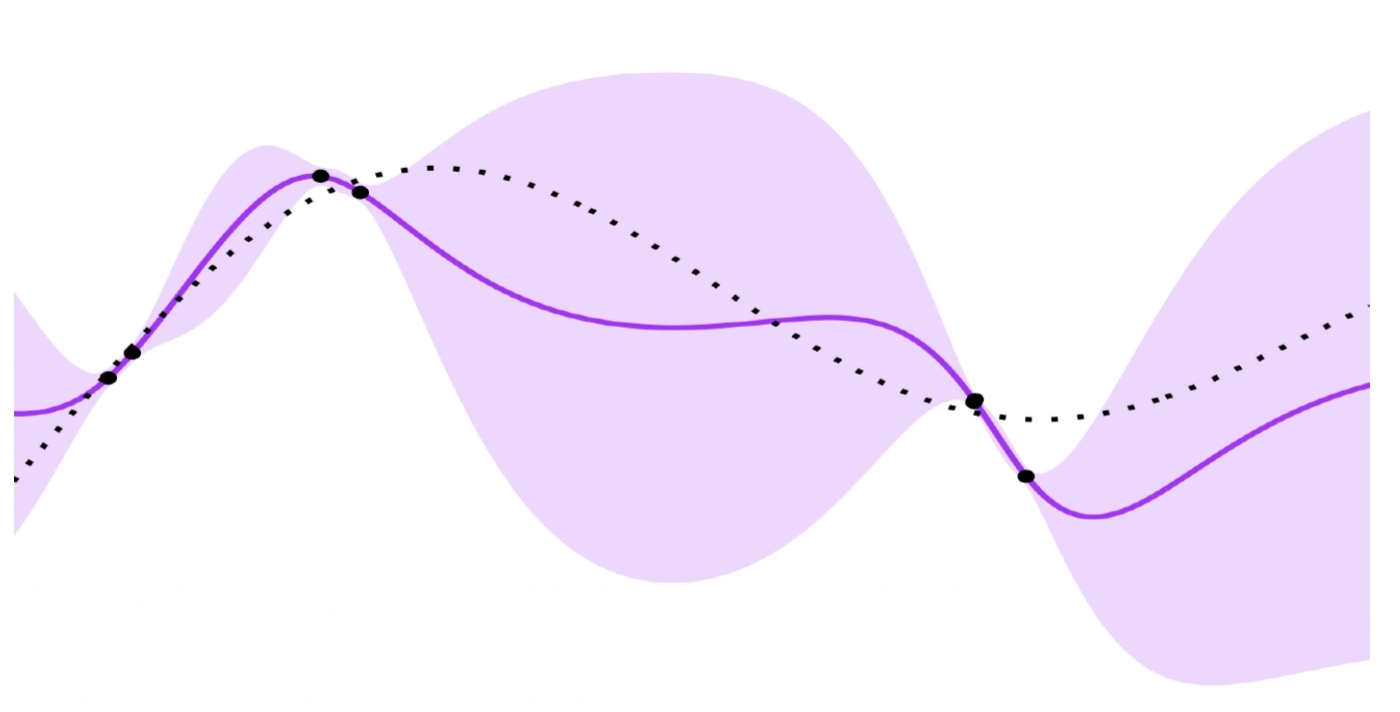}
        \caption{The posterior remains unchanged.}
    \end{subfigure}
    
    \caption{Graphical illustration of \texttt{LLINBO-Constrained}: solid curve shows $\mathcal{GP}$ mean, shaded area is the confidence interval, and dashed line is the true function $f$.}
    \label{fig:llmCGP}
\end{figure}

With these \( \mathcal{GP} \)s, each constructed from the union of \( \mathcal{D}_{t-1} \) and a retained sample, the AF can be approximated via MC methods. Without loss of generality, and focusing on UCB, we can approximate the AF using the law of total variance as
\begin{align*}
& \alpha_{\text{$\mathcal{CGP}$-UCB}}(x,F_{t-1}^+)=\bar{\mu}_{t-1}^+(x)+\tilde{\beta}_t\sqrt{\sigma^+_{t-1}(x)^2+s^2_{t-1}(x)}, \quad \text{where}\\  
&\bar{\mu}^+_{t-1}(x) = \sum_{s\in I_t} \mu_{t-1,s}^+(x),\ \ s_{t-1}(x) = \frac{1}{|I_t|-1} \sum_{s\in I_t} \left(\mu_{t-1,s}^+(x)-\bar{\mu}_{t-1}^+(x)\right)^2,
\end{align*}
where $\tilde{\beta}_t$ is the client-specified confidence parameter, which will be discussed in Theorem~\ref{thm:uniform_bound_cgp}. Notice that the index \( s \) is omitted from \( \sigma_{t-1,s}^+(x) \) since it is identical for all \( s \). This is because the covariance function of a \( \mathcal{GP} \) depends only on the input \( x \), which is the same across all samples, and not on the sampled responses \( \tilde{f}_{t-1,s}(x_{\text{LLM},t}) \). Finally, we acquire \( x_t \) by solving \(x_t = \arg\max_{x\in \mathcal{X}} \alpha_{\text{\(\mathcal{CGP}\)-UCB}}(x,F_{t-1}^+)\).

\begin{theorem}[Proof in Appendix~\ref{sec:theorem34}]\label{thm:uniform_bound_cgp}
Suppose Assumption~\ref{assumption:rkhs_noise} holds. Then, for any \( \delta \in (0, 1) \) and \( T \in \mathbb{N} \), with probability at least \( 1 - \frac{\delta}{T} \), the following bound holds uniformly for all \( t \in [T] \), all retained indices \( s \in I_t \), and all inputs \( x \in \mathcal{X} \):
\[
\left| \mu^+_{t-1,s}(x) - f(x) \right| \leq \tilde{\beta}_t \sigma^+_{t-1}(x),
\]
where \( \tilde{\beta}_t \) is given by $\tilde{\beta}_t = 2B + 2R \sqrt{2\left( \gamma_t + 1 + \ln\left( \tfrac{4T}{\delta} \right) \right)} + \sqrt{2 \ln\left( \tfrac{4 S_t T}{\delta} \right)}.$
\end{theorem}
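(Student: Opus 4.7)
The goal is an RKHS-style uniform confidence bound on $\mu^+_{t-1,s}$. The nonstandard feature is that the ``observation'' at $x_{\text{LLM},t}$ is a sample from $F_{t-1}$ rather than a noisy measurement of $f$, so the bound of Chowdhury--Gopalan / Vakili et al.\ cannot be applied directly to the augmented dataset. My plan is to split $\mu^+_{t-1,s}(x) - f(x)$ into (i) the error of an auxiliary ``ideal'' posterior that would have been obtained by observing the true $f(x_{\text{LLM},t})$, and (ii) a perturbation term quantifying how much the random sample $\tilde f_{t-1,s}(x_{\text{LLM},t})$ departs from $f(x_{\text{LLM},t})$.

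First, I would invoke the standard self-normalized RKHS confidence bound on the pre-augmentation posterior $(\mu_{t-1},\sigma_{t-1})$: under Assumption~\ref{assumption:rkhs_noise}, on an event $E_1$ of probability at least $1-\delta/(4T)$, uniformly over $t$ and $x$,
\[
|\mu_{t-1}(x) - f(x)| \leq \bigl(B + R\sqrt{2(\gamma_{t-1}+1+\ln(4T/\delta))}\bigr)\,\sigma_{t-1}(x).
\]
Since $\tilde f_{t-1,s}(x_{\text{LLM},t})\mid\mathcal{D}_{t-1}\sim\mathcal{N}(\mu_{t-1}(x_{\text{LLM},t}),\sigma^2_{t-1}(x_{\text{LLM},t}))$, a Gaussian tail bound combined with a union bound over $s\in[S_t]$ and $t\in[T]$ yields an event $E_2$ of probability $\geq 1-\delta/(4T)$ on which
\[
|\tilde f_{t-1,s}(x_{\text{LLM},t}) - \mu_{t-1}(x_{\text{LLM},t})| \leq \sigma_{t-1}(x_{\text{LLM},t})\sqrt{2\ln(4S_tT/\delta)}.
\]
Combining $E_1$ and $E_2$ via the triangle inequality bounds the effective synthetic-observation noise $|\tilde f_{t-1,s}(x_{\text{LLM},t}) - f(x_{\text{LLM},t})|$ by an explicit multiple of $\sigma_{t-1}(x_{\text{LLM},t})$.

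Next, I would introduce the auxiliary posterior mean $\bar\mu^+_{t-1}(x)$ of the GP fit to $\mathcal{D}_{t-1}\cup\{(x_{\text{LLM},t}, f(x_{\text{LLM},t}))\}$. Because the Gram matrix is unchanged, this auxiliary GP shares the posterior variance $\sigma^+_{t-1}(x)$ with the $\mathcal{CGP}$. A second invocation of the RKHS bound, now on the $t$-point dataset (information gain $\gamma_t$, failure budget $\delta/(4T)$), gives on an event $E_3$
\[
|\bar\mu^+_{t-1}(x) - f(x)| \leq \bigl(B + R\sqrt{2(\gamma_t+1+\ln(4T/\delta))}\bigr)\,\sigma^+_{t-1}(x).
\]
For the remaining gap, linearity of the GP posterior mean in the observations yields $\mu^+_{t-1,s}(x) - \bar\mu^+_{t-1}(x) = w_t(x)\bigl(\tilde f_{t-1,s}(x_{\text{LLM},t}) - f(x_{\text{LLM},t})\bigr)$ with weight $w_t(x) = k^{(t-1)}(x,x_{\text{LLM},t}) / (\sigma^2_{t-1}(x_{\text{LLM},t})+\lambda^2)$, where $k^{(t-1)}$ is the pre-augmentation posterior covariance. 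Rewriting the product $|w_t(x)|\sigma_{t-1}(x_{\text{LLM},t})$ through the variance-reduction identity
\[
\sigma^+_{t-1}(x)^2 = \sigma^2_{t-1}(x) - \frac{k^{(t-1)}(x,x_{\text{LLM},t})^2}{\sigma^2_{t-1}(x_{\text{LLM},t})+\lambda^2}
\]
exposes its dependence on $\sigma^+_{t-1}(x)$; combined with the effective noise bound, this controls $|\mu^+_{t-1,s}(x) - \bar\mu^+_{t-1}(x)|$. A final triangle inequality with the bound from $E_3$, plus a union bound over $E_1,E_2,E_3$ (total failure mass $\leq 3\delta/(4T)\leq \delta/T$), delivers $\tilde\beta_t\,\sigma^+_{t-1}(x)$.

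\paragraph{Main obstacle.}
The delicate step is the transfer bound: converting $|\mu^+_{t-1,s}(x) - \bar\mu^+_{t-1}(x)|$ into a quantity scaling with $\sigma^+_{t-1}(x)$ rather than $\sigma_{t-1}(x)$. The naive Cauchy--Schwarz estimate $|k^{(t-1)}(x,x_{\text{LLM},t})|\leq \sigma_{t-1}(x)\sigma_{t-1}(x_{\text{LLM},t})$ yields only $|w_t(x)|\sigma_{t-1}(x_{\text{LLM},t})\leq \sqrt{\sigma^2_{t-1}(x)-\sigma^+_{t-1}(x)^2}$, which is at most $\sigma_{t-1}(x)$ but not, in general, $\sigma^+_{t-1}(x)$. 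Extracting the sharper $\sigma^+_{t-1}(x)$ factor is precisely what produces the doubled coefficient $2B + 2R\sqrt{2(\gamma_t+1+\ln(4T/\delta))}$ in $\tilde\beta_t$ (one copy from the auxiliary bound in $E_3$, one from the transfer bound). The secondary challenge is bookkeeping the $1-\delta/T$ failure budget across the three events, threading the union bound over $S_t$ so that the Gaussian-tail logarithm collapses to $\ln(4S_tT/\delta)$ as written.
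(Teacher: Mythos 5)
Your overall architecture is essentially the paper's: the paper also splits \( \mu^+_{t-1,s}(x)-f(x) \) into the error of the posterior fed with the \emph{true} value \( f(x_{\text{LLM},t}) \) (its terms A and B, bounded by the standard RKHS self-normalized bound with information gain \( \gamma_t \)) plus a rank-one perturbation driven by \( \tilde\delta = f(x_{\text{LLM},t})-\tilde f_{t-1,s}(x_{\text{LLM},t}) \), and it controls \( |\tilde\delta| \) exactly as you do, by combining the pre-augmentation concentration at \( x_{\text{LLM},t} \) with a Gaussian tail bound union-bounded over \( s\in[S_t] \) and \( t\in[T] \); your \( \delta/(4T) \) bookkeeping and your account of where the doubled constant comes from are likewise consistent with the paper.

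However, the step you yourself flag as the ``main obstacle'' is a genuine gap, and the device you sketch for it does not work. The variance-reduction identity plus Cauchy--Schwarz only yields \( |\mu^+_{t-1,s}(x)-\bar\mu^+_{t-1}(x)| \le \sqrt{\sigma^2_{t-1}(x)-\sigma^+_{t-1}(x)^2}\,\cdot|\tilde\delta|/\sigma_{t-1}(x_{\text{LLM},t}) \)-type quantities, and \( \sqrt{\sigma^2_{t-1}(x)-\sigma^+_{t-1}(x)^2} \) is \emph{not} \( O(\sigma^+_{t-1}(x)) \) uniformly: at \( x=x_{\text{LLM},t} \) with large \( \sigma_{t-1}(x_{\text{LLM},t}) \), the augmented standard deviation collapses to roughly the noise level while the gap term remains of order \( \sigma_{t-1}(x_{\text{LLM},t}) \), so the ratio scales like \( \sigma_{t-1}(x_{\text{LLM},t})/\lambda \). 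The paper sidesteps this entirely by never passing through the explicit rank-one weight \( w_t(x) \): it writes the perturbation as \( k^+_{t-1}(x)^\top (K^+_{t-1}+\hat\lambda_t I)^{-1}[0,\ldots,0,\tilde\delta]^\top \) and applies the algebraic self-normalized inequality (Lemma~\ref{lemma:algebra}, Appendix C of Chowdhury--Gopalan) with the \emph{augmented} Gram matrix, giving a bound of the form \( \hat\lambda_t^{-1/2}\,\sigma^+_{t-1}(x)\,\bigl([0,\tilde\delta]\,K^+_{t-1}(K^+_{t-1}+\hat\lambda_t I)^{-1}[0,\tilde\delta]^\top\bigr)^{1/2} \le \sigma^+_{t-1}(x)\,|\tilde\delta| \); the factor \( \sigma^+_{t-1}(x) \) thus comes directly out of the quadratic-form bound, not from any comparison between \( \sigma_{t-1} \) and \( \sigma^+_{t-1} \). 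With that lemma (and \( \sigma_{t-1}(x_{\text{LLM},t})\le 1 \) since \( k\le 1 \)) your outline closes and yields exactly the stated \( \tilde\beta_t \); without it, or an equivalent substitute, the claimed uniform bound in terms of \( \sigma^+_{t-1}(x) \) is not established.
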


Compared to Assumption~\ref{assumption:UCB}, Theorem~\ref{thm:uniform_bound_cgp} includes an additional term involving \( S_t \), reflecting the cost of sampling uncertainty. As \( S_t \) grows, the potential for deviation increases, requiring a larger \( \tilde{\beta}_t \) to maintain the same confidence level. As such, Theorem~\ref{thm:uniform_bound_cgp} builds a uniform high-probability bound between the posterior mean of the $\mathcal{CGP}$ and $f$. With this, Theorem~\ref{thm:CGPRegret} then upper bounds $R_T$ for \texttt{LLINBO-Constrained}.

\begin{theorem}[Proof in Appendix~\ref{sec:theorem34}]
\label{thm:CGPRegret}
Assume the conditions for Theorem~\ref{thm:uniform_bound_cgp} hold and suppose \( S_t \in \mathcal{O}(1/t) \). Then, with probability at least $1-\delta$, $R_T$ satisfies
\[
R_T = \sum_{t=1}^T r_t \leq \mathcal{O}\left( \sqrt{T\gamma_T(\gamma_T+\ln(T))} \right).
\]
\end{theorem}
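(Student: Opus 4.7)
My plan is to mirror the standard UCB regret analysis, but route it through the uniform $\mathcal{CGP}$-concentration guarantee of Theorem~\ref{thm:uniform_bound_cgp} and aggregate via the maximum-information-gain inequality. I would first split the iterations $t \in [T]$ into those with $|I_t| = 0$, where $x_t = x_{\mathcal{GP},t}$ and the standard UCB bound from the proof of Theorem~\ref{thm:gpj} applies directly, and the substantive case $|I_t| > 0$, in which $F_{t-1}^+$ genuinely differs from $F_{t-1}$ and the $\mathcal{CGP}$-UCB acquisition is nontrivial.

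For $|I_t| > 0$ the key observation is that $\sigma_{t-1,s}^+(x)$ does not depend on the retained index $s$, so averaging Theorem~\ref{thm:uniform_bound_cgp} over $s \in I_t$ yields $|\bar{\mu}_{t-1}^+(x) - f(x)| \leq \tilde{\beta}_t \sigma_{t-1}^+(x)$ uniformly in $x$. Evaluating at $x^*$ and using $\sigma_{t-1}^+(x^*) \leq \sqrt{\sigma_{t-1}^+(x^*)^2 + s_{t-1}(x^*)^2}$ gives $f(x^*) \leq \alpha_{\text{$\mathcal{CGP}$-UCB}}(x^*, F_{t-1}^+)$; the optimality of $x_t$ under the $\mathcal{CGP}$-UCB acquisition then upgrades this to $f(x^*) \leq \alpha_{\text{$\mathcal{CGP}$-UCB}}(x_t, F_{t-1}^+)$. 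Subtracting $f(x_t)$ and applying the lower side of the uniform bound, $\bar{\mu}_{t-1}^+(x_t) - f(x_t) \leq \tilde{\beta}_t \sigma_{t-1}^+(x_t)$, delivers the per-round inequality
\[
r_t \;\leq\; 2\tilde{\beta}_t \sqrt{\sigma_{t-1}^+(x_t)^2 + s_{t-1}(x_t)^2}.
\]
A Cauchy--Schwarz step across $t$ then bounds $R_T$ by $2\tilde{\beta}_T \sqrt{T \sum_{t=1}^T \bigl(\sigma_{t-1}^+(x_t)^2 + s_{t-1}(x_t)^2\bigr)}$.

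The first summand is easily controlled: because $F_{t-1}^+$ is the posterior conditioned on one extra pseudo-observation at $x_{\text{LLM},t}$ on top of $\mathcal{D}_{t-1}$, variance monotonicity of GP posteriors gives $\sigma_{t-1}^+(x_t) \leq \sigma_{t-1}(x_t)$, and the classical information-gain inequality $\sum_t \sigma_{t-1}(x_t)^2 \leq C \gamma_T$ closes it. The main obstacle is $\sum_t s_{t-1}(x_t)^2$, the empirical spread of posterior means across the retained pseudo-responses. My approach is to invoke the closed-form GP update: $\mu_{t-1,s}^+(x) - \mu_{t-1}(x)$ is a linear function of $\tilde{f}_{t-1,s}(x_{\text{LLM},t})$ with slope $k^{\mathrm{post}}_{t-1}(x, x_{\text{LLM},t}) / (\sigma_{t-1}^2(x_{\text{LLM},t}) + \lambda^2)$, so $s_{t-1}(x)^2$ factors into this slope squared times the sample variance of the retained $\tilde{f}_{t-1,s}(x_{\text{LLM},t})$. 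The latter is at most the variance of the underlying truncated Gaussian, which is itself bounded by $\sigma_{t-1}^2(x_{\text{LLM},t})$ up to a Monte Carlo error that the assumption on $S_t$ suppresses; Cauchy--Schwarz on the posterior covariance, $k^{\mathrm{post}}_{t-1}(x, x_{\text{LLM},t})^2 \leq \sigma_{t-1}^2(x)\,\sigma_{t-1}^2(x_{\text{LLM},t})$, then collapses the bound to $s_{t-1}(x_t)^2 \leq \sigma_{t-1}^2(x_t)$, so that $\sum_t s_{t-1}(x_t)^2 = \mathcal{O}(\gamma_T)$ as well. Combining these sums with $\tilde{\beta}_T = \mathcal{O}(\sqrt{\gamma_T + \ln T})$, which is exactly what the hypothesis $S_t \in \mathcal{O}(1/t)$ secures inside Theorem~\ref{thm:uniform_bound_cgp}, yields $R_T = \mathcal{O}\bigl(\sqrt{T\gamma_T(\gamma_T + \ln T)}\bigr)$.
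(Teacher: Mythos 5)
Your overall skeleton coincides with the paper's argument: average Theorem~\ref{thm:uniform_bound_cgp} over $s\in I_t$ to get $|\bar{\mu}^+_{t-1}(x)-f(x)|\leq\tilde{\beta}_t\sigma^+_{t-1}(x)$, use optimality of $x_t$ under $\alpha_{\text{$\mathcal{CGP}$-UCB}}$ to obtain a per-round bound of the form $r_t\lesssim\tilde{\beta}_t\bigl(\sigma^+_{t-1}(x_t)+s_{t-1}(x_t)\bigr)$, and sum via the information-gain bound (Lemma~\ref{lemma:sumvarbd}); your explicit split on $|I_t|=0$ and the use of $\sigma^+_{t-1}\leq\sigma_{t-1}$ are fine. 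The genuine gap is in your treatment of $\sum_t s_{t-1}(x_t)^2$. The rank-one-update factorization and the Cauchy--Schwarz step on the posterior covariance are correct, but the claim that the sample variance of the retained draws $\tilde{f}_{t-1,s}(x_{\text{LLM},t})$, $s\in I_t$, is at most the population variance of the truncated Gaussian ``up to a Monte Carlo error that the assumption on $S_t$ suppresses'' does not hold as stated. A sample variance is not deterministically dominated by the population variance (it exceeds it with constant probability), and since $|I_t|$ can be as small as $2$, its fluctuations are not negligible; a high-probability concentration argument is required. Moreover, $S_t\in\mathcal{O}(1/t)$ works \emph{against} you here: fewer retained samples makes the Monte Carlo fluctuation of the sample variance worse, not better. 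The real role of the condition on $S_t$ (and of constant $S_t$, which the paper says also suffices) is only to keep the $\ln(4S_tT/\delta)$ term inside $\tilde{\beta}_t$ of order $\ln T$; it does not shrink sampling error in $s_{t-1}^2$.

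The paper closes exactly this step with a different, high-probability argument: it bounds each retained pseudo-mean perturbation by $\hat{\lambda}_t^{-1/2}\sigma^+_{t-1}(x)\,|\tilde{f}_{t-1,s}(x_{\text{LLM},t})-\mu_{t-1}(x_{\text{LLM},t})|$ via Lemma~\ref{lemma:algebra}, controls the maximum deviation over the $S_t$ Gaussian draws by a Chernoff/union bound, $|\tilde{f}_{t-1,s}(x_{\text{LLM},t})-\mu_{t-1}(x_{\text{LLM},t})|\leq\sqrt{2\ln(4S_tT/\delta)}\,\sigma_{t-1}(x_{\text{LLM},t})$, and then applies Lemma~\ref{lemma:varbd} (sample variance of values bounded by $c$ is at most $2c^2$) to get $s^2_{t-1}(x)\leq 4\sigma^+_{t-1}(x)^2\ln(4S_tT/\delta)$, which is then absorbed into the stated rate. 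If you replace your truncated-Gaussian claim with this max-deviation argument (or an equivalent concentration bound), your route goes through and recovers the theorem by the same accounting as the paper; as written, however, that step is unjustified and the stated reason for it is directionally wrong.
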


While our theory holds for constant choices of $S_t$, we recommend decreasing \( S_t \) as more data is collected, since the surrogate model becomes better calibrated and more reliable over time.

\section{Numerical Studies} \label{sec:num}
We evaluate the proposed methods on two core BO tasks: BBO and HPT, using two representative benchmarks: BO and \texttt{LLAMBO}, the most recent state-of-the-art framework introduced by \citet{liu2024large}. While effective, implementing \texttt{LLAMBO} can be computationally expensive due to the extensive prompting required to generate multiple candidate designs and surrogate evaluations. To mitigate this overhead, we develop \texttt{LLAMBO-light}, a lightweight alternative that directly prompts the LLM with the problem context and historical observations to produce the next evaluation design. \texttt{LLAMBO-light} serves both as the embedded LLM agent within our proposed three mechanisms and as a baseline. We should note that this is still an emerging area with limited prior work. For a full description of our experimental setup and prompt designs, see Appendix~\ref{sec:appb}.

For each task with a $D$-dimensional design space, we generate an initial dataset $\mathcal{D}_0$ with $D$ observations. This is done via prompting within the problem context, also known as warmstarting, for methods that utilize LLMs (ours, \texttt{LLAMBO}, \texttt{LLAMBO-light}), and via random sampling for BO. To capture the uncertainty in each method’s performance, we perform a total of 10 replications. We use UCB as the AF, and set the relevant parameters as follows: $p_t = \min(t^2/T, 1)$, $S_t = 10^4/t^2$, $\psi_t = \frac{1}{t} \sigma_0(x_{\text{LLM},1})$ and $\beta_t=2\text{log}\frac{tD\pi^2}{0.1*6}$ (as shown effective by \citet{srinivas2009gaussian}). 

\paragraph{BBO task.} We utilize six commonly used simulation functions: Levy-2$D$, Rastrigin-2$D$, Branin-2$D$, Bukin-2$D$, Hartmann-4$D$, and Ackley-6$D$ from \cite{surjanovic2013virtual}. For each function, its characteristic patterns and the objective of the problem are incorporated into the prompts as part of the problem context (see Appendix~\ref{sec:appb1}). Performance is reported in terms of the best observed regret, defined as \( G_t = f(x^*) - y^*_t \), where \( y^*_t \) is the best outcome observed up to time \( t \), and \( f(x^*) \) denotes the true global maximum. The total budget is set to $T = 10D$.

\begin{figure}[ht]
    \centering

    \begin{subfigure}[t]{0.31\textwidth}
        \includegraphics[width=\linewidth]{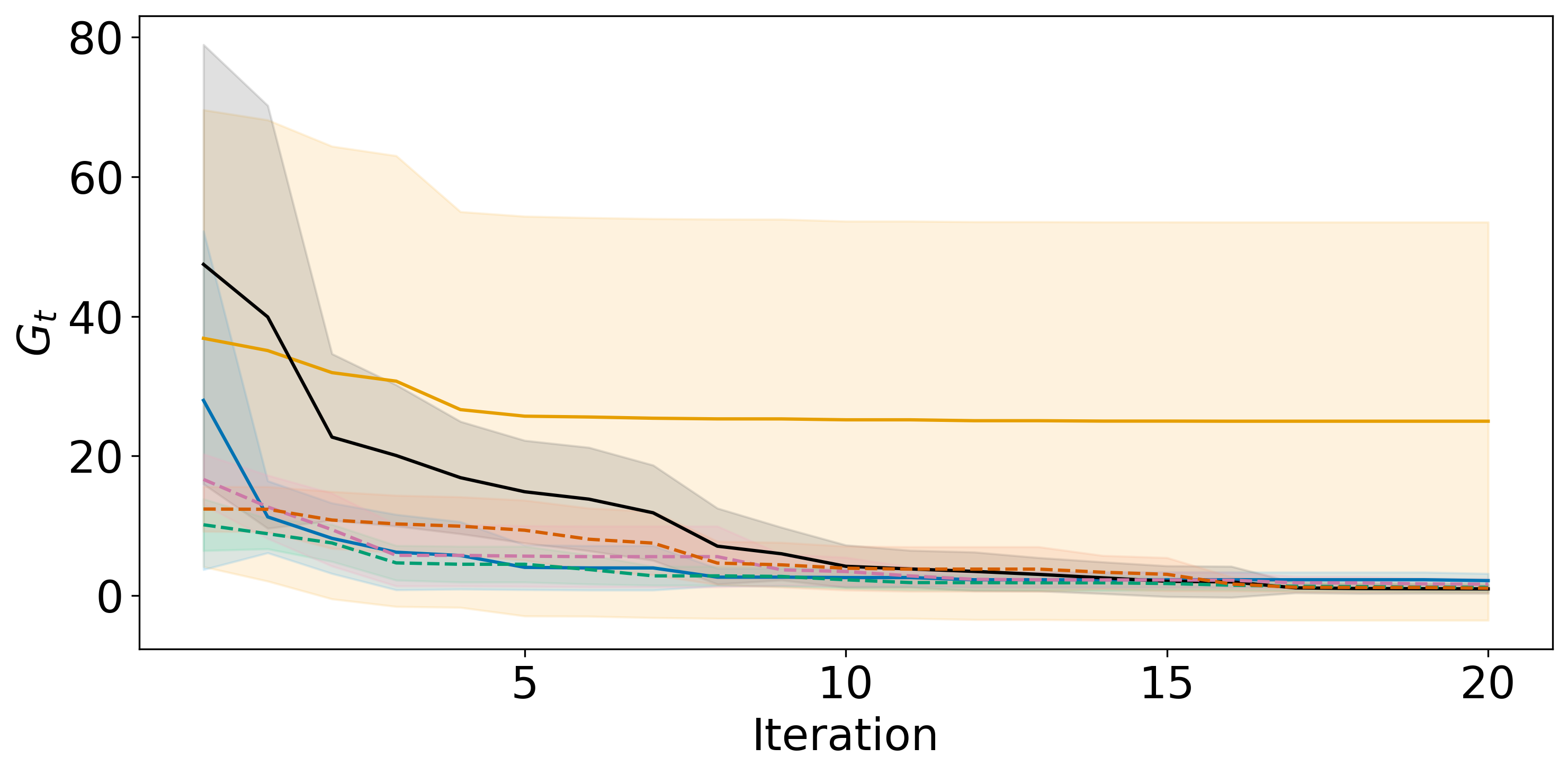}
        \caption{Branin-2$D$}
    \end{subfigure}
    \hfill
    \begin{subfigure}[t]{0.31\textwidth}
        \includegraphics[width=\linewidth]{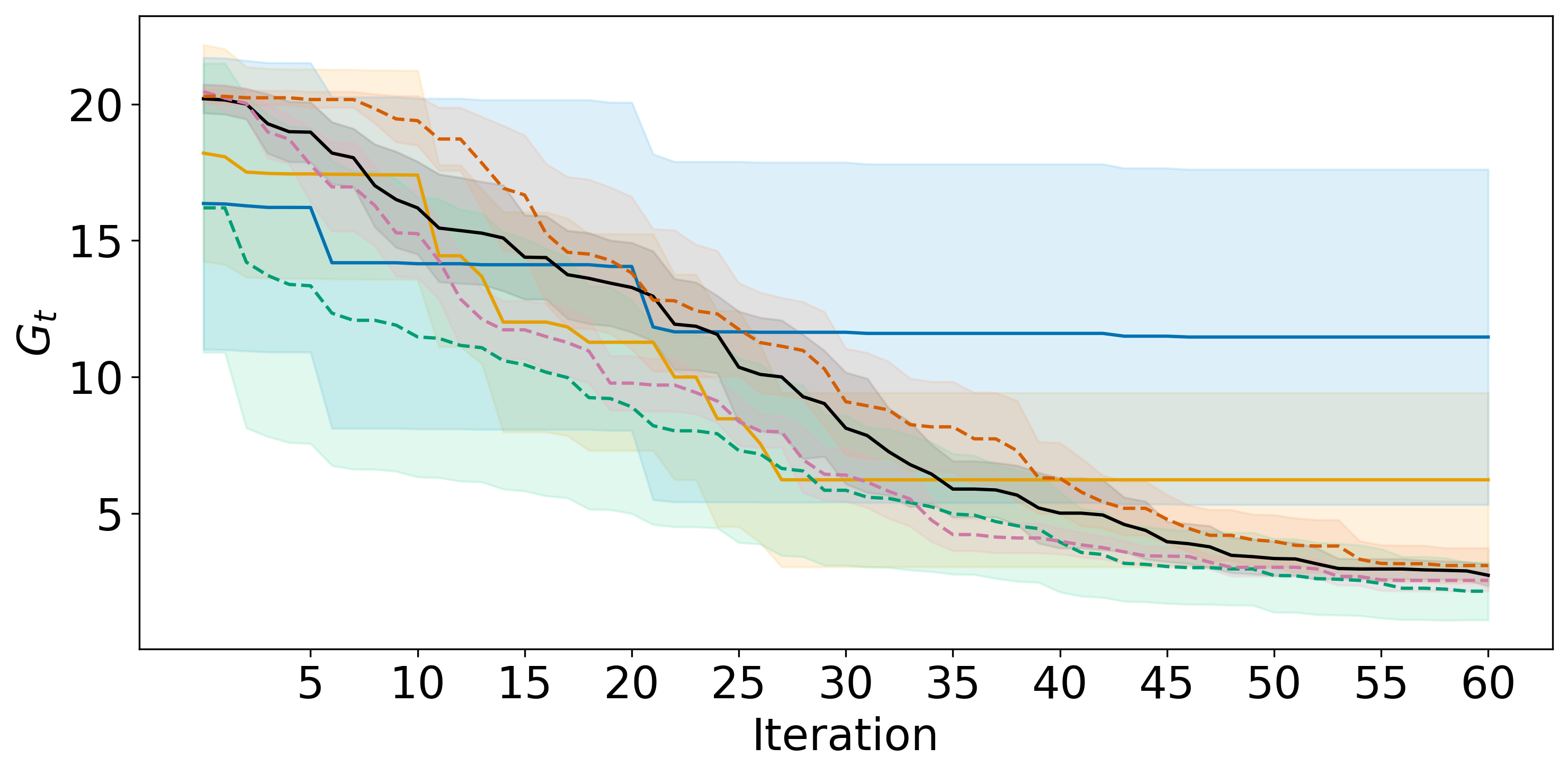}
        \caption{Ackley-6$D$}
    \end{subfigure}
    \hfill
    \begin{subfigure}[t]{0.31\textwidth}
        \includegraphics[width=\linewidth]{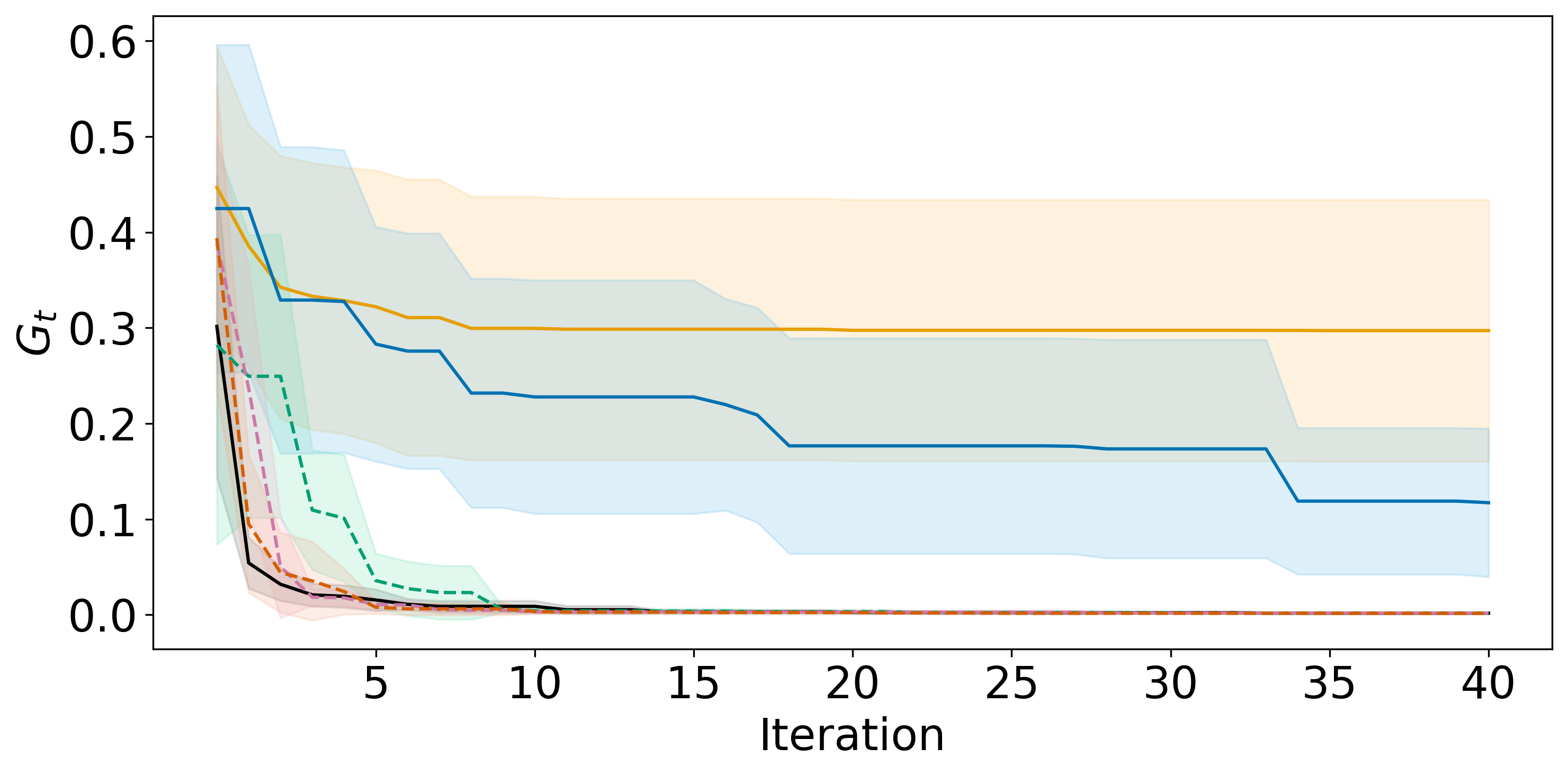}
        \caption{Hartmann-4$D$}
    \end{subfigure}

    \vspace{0.1em}

    \begin{subfigure}[t]{0.31\textwidth}
        \includegraphics[width=\linewidth]{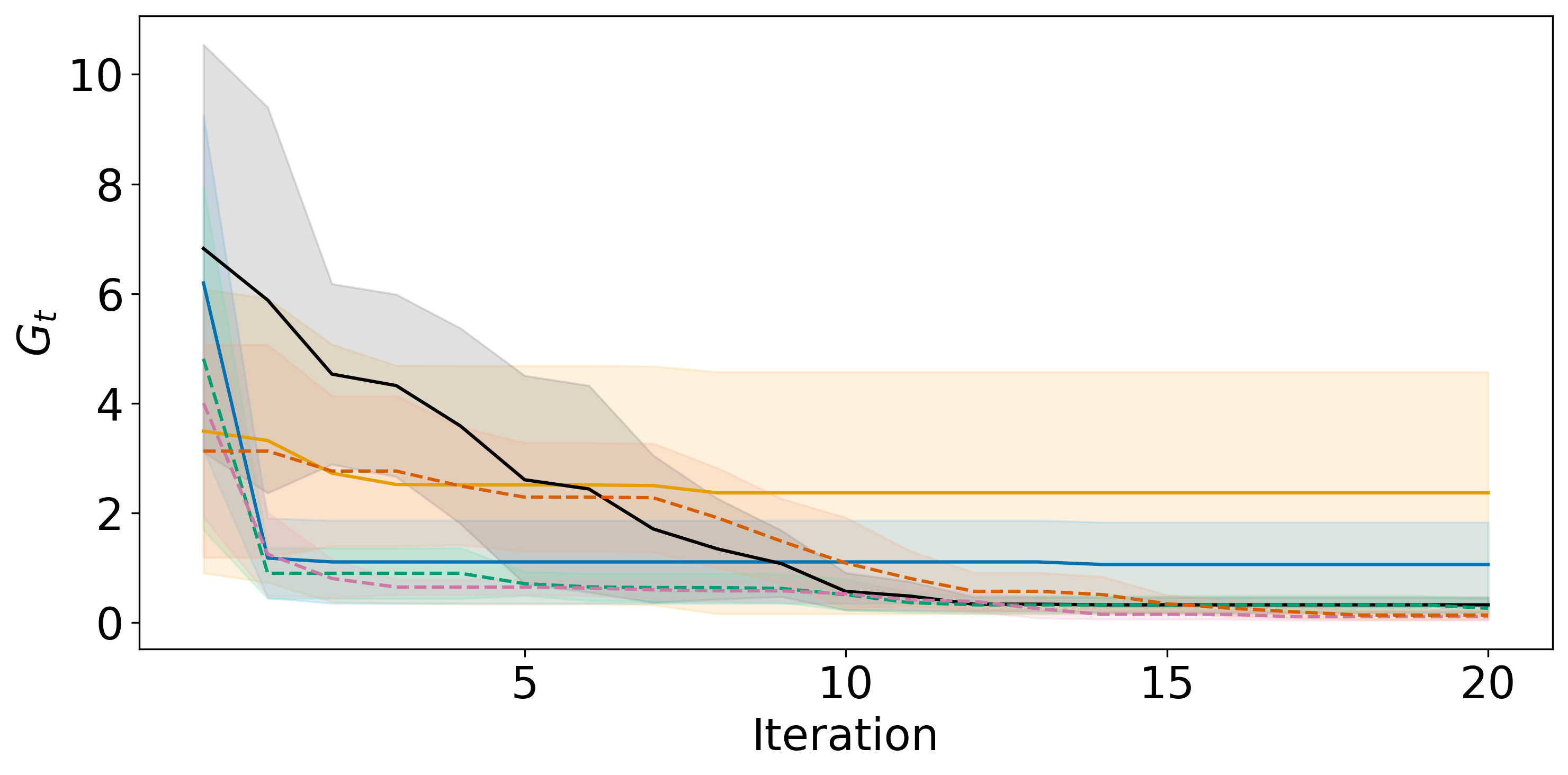}
        \caption{Levy-2$D$}
    \end{subfigure}
    \hfill
    \begin{subfigure}[t]{0.31\textwidth}
        \includegraphics[width=\linewidth]{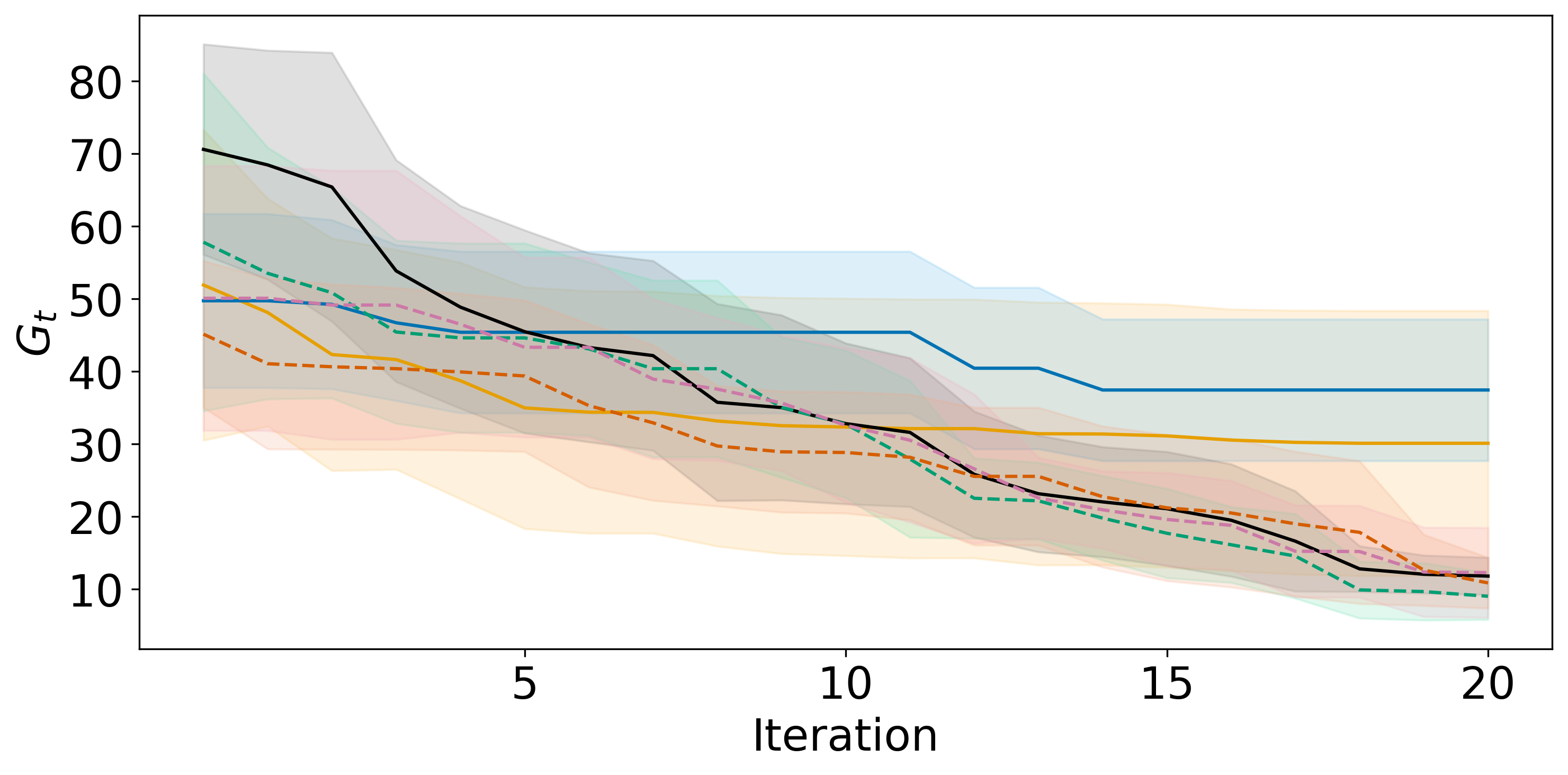}
        \caption{Bukin-2$D$}
    \end{subfigure}
    \hfill
    \begin{subfigure}[t]{0.31\textwidth}
        \includegraphics[width=\linewidth]{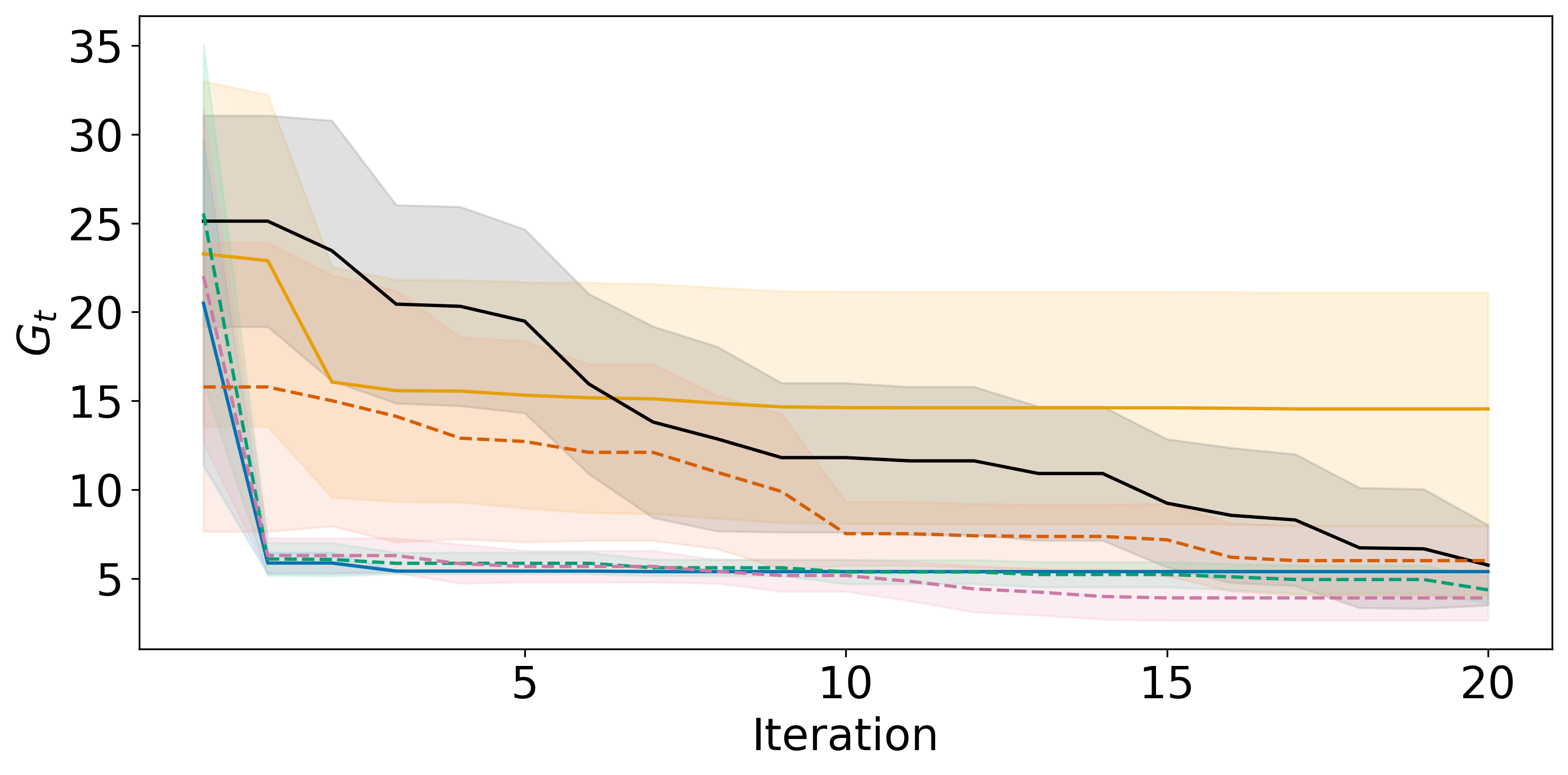}
        \caption{Rastrigin-2$D$}
    \end{subfigure}

\caption{
$G_t$ comparison for BBO. Each line shows the mean regret, shaded with 95\% confidence intervals.
\textbf{Proposed methods}: \dashedline{darkgreen}{1.5em} \texttt{LLINBO-Transient}, 
\dashedline{magenta}{1.5em} \texttt{LLINBO-Justify}, 
\dashedline{brickred}{1.5em} \texttt{LLINBO-Constrained}. \textbf{Baselines}:
\solidline{burntorange}{1.5em} \texttt{LLAMBO}, 
\solidline{cobaltblue}{1.5em} \texttt{LLAMBO-light}, 
\solidline{black}{1.5em} BO.
}
\label{fig:bbfores}
\end{figure}

Fig.~\ref{fig:bbfores} shows the regret curves for all methods across the six benchmark functions. Based on these results, we highlight several key insights. First, and perhaps most evidently, \textit{LLM-assisted BO} (\texttt{LLAMBO-light} and \texttt{LLAMBO}) significantly underperform compared to other benchmarks. In many cases, their regret curves remain flat, especially in higher dimensions. This supports our motivation: LLMs can assist with BBO but are not yet reliable as standalone agents. Second, methods involving LLMs, including ours, achieve a strong early lead. This suggests that LLMs can effectively leverage problem context to quickly identify promising regions, making them a useful complement to BO frameworks. Third, we observe that our hybrid mechanisms consistently outperform the benchmarks across all functions. This superiority is especially evident in the early rounds and gradually diminishes as more data is collected. This trend is not surprising; statistical surrogate models become more accurate with additional data, aligning with our core philosophy of reducing reliance on LLMs as the optimization process progresses. 

\paragraph{HPT task.} We consider two physical simulation functions: the piston (\cite{kenett1998modern}) and robot arm (\cite{an2001quasi}), along with three regression models: Random Forest (RF-4$D$), Support Vector Regression (SVR-3$D$), and XGBoost (XGB-4$D$). The total budget is set to $T=5D.$ For each simulation function, we generate 1,000 data points and define the regret as the best-observed Mean Squared Error (MSE) at each iteration, where the MSE is obtained by fitting the corresponding regression model and evaluating it via 10-fold cross-validation. A detailed description of each data–regression model pair, along with the corresponding problem formulation, is provided in the prompt (see Appendix~\ref{sec:appb2}). The results in Fig.~\ref{fig:hpores} once again confirm the insights from the BBO task. Namely, we find that LLMs are often capable of generating high-quality designs in the early iterations by leveraging the problem context. Furthermore, our proposed LLM-$\mathcal{GP}$ collaborative mechanisms yield significantly lower MSE compared to all benchmarks across the tasks.

\begin{figure}[ht]

    \centering

    \begin{subfigure}[t]{0.31\textwidth}
        \includegraphics[width=\linewidth]{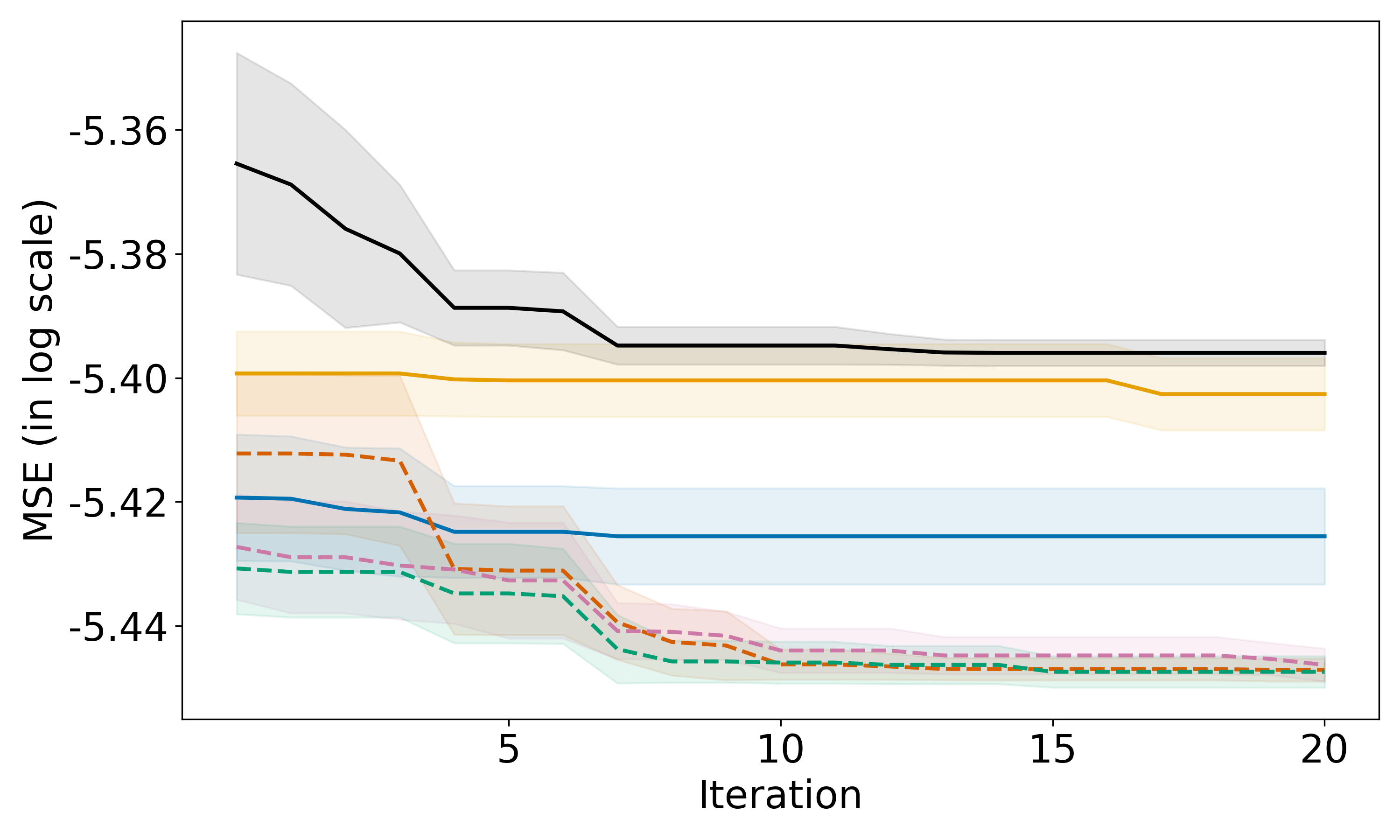}
        \caption{Piston with RF-4$D$}
    \end{subfigure}
    \hfill
    \begin{subfigure}[t]{0.31\textwidth}
        \includegraphics[width=\linewidth]{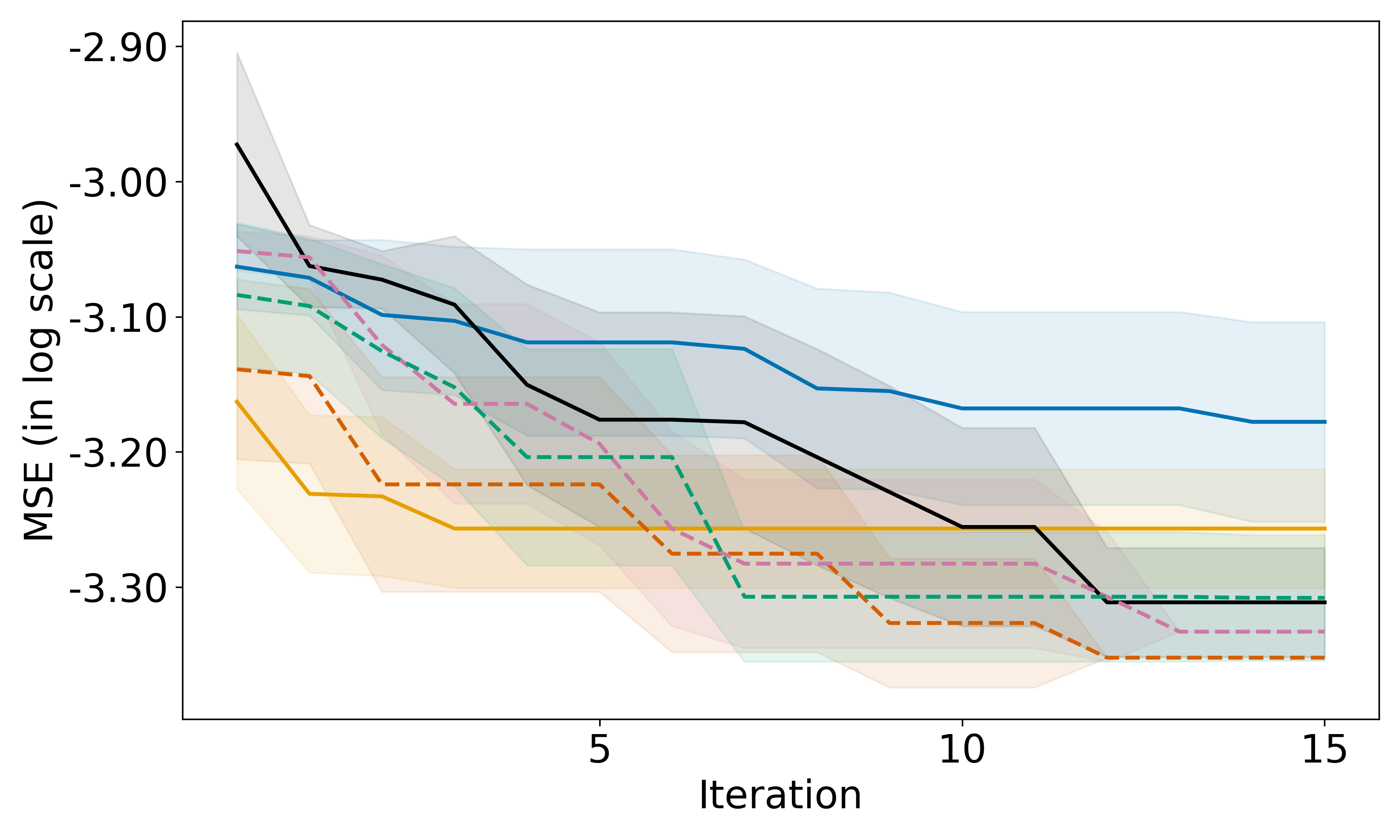}
        \caption{Piston with SVR-3$D$}
    \end{subfigure}
    \hfill
    \begin{subfigure}[t]{0.31\textwidth}
        \includegraphics[width=\linewidth]{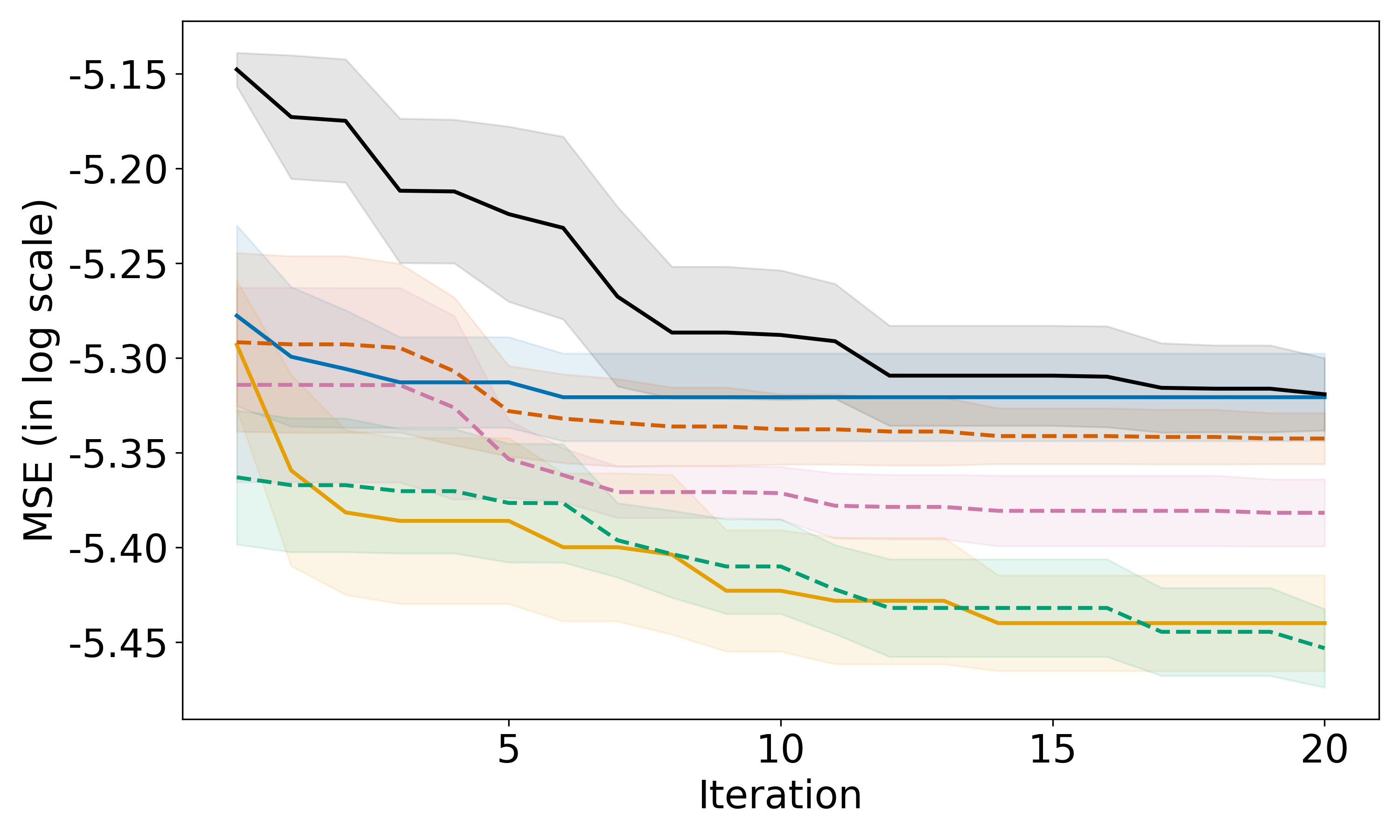}
        \caption{Piston with XGB-4$D$}
    \end{subfigure}

    \vspace{0.1em}

    \begin{subfigure}[t]{0.31\textwidth}
        \includegraphics[width=\linewidth]{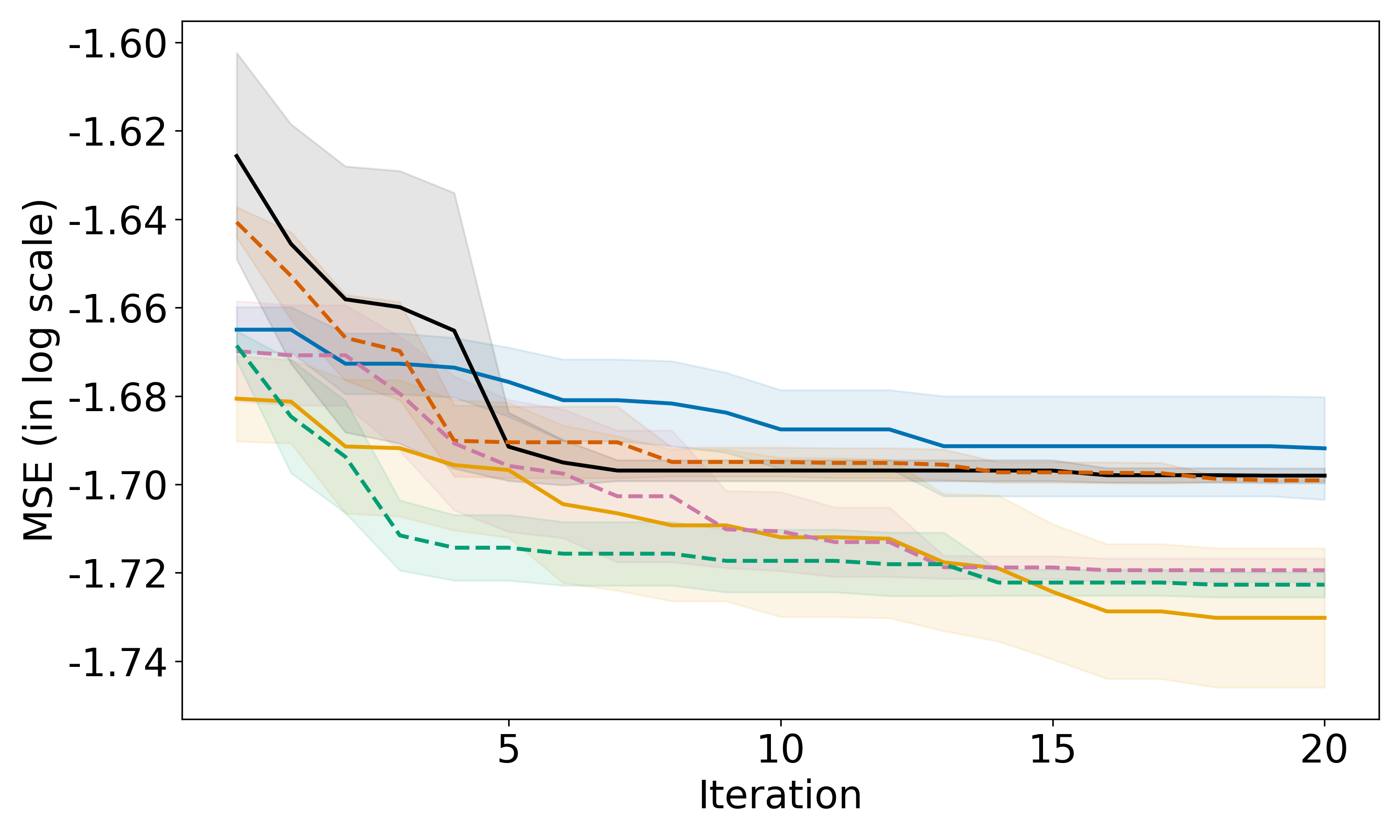}
        \caption{Robot with RF-4$D$}
    \end{subfigure}
    \hfill
    \begin{subfigure}[t]{0.31\textwidth}
        \includegraphics[width=\linewidth]{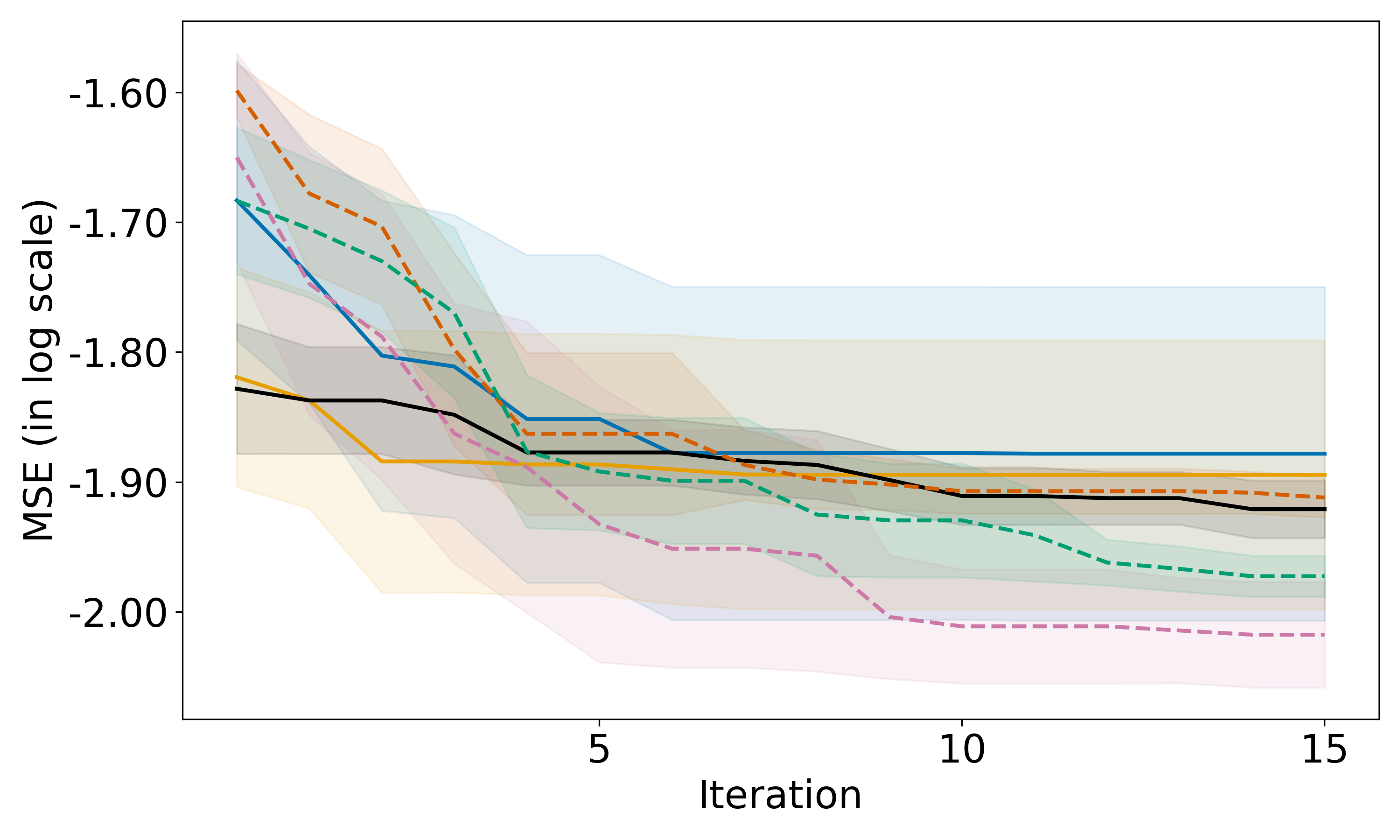}
        \caption{Robot with SVR-3$D$}
    \end{subfigure}
    \hfill
    \begin{subfigure}[t]{0.31\textwidth}
        \includegraphics[width=\linewidth]{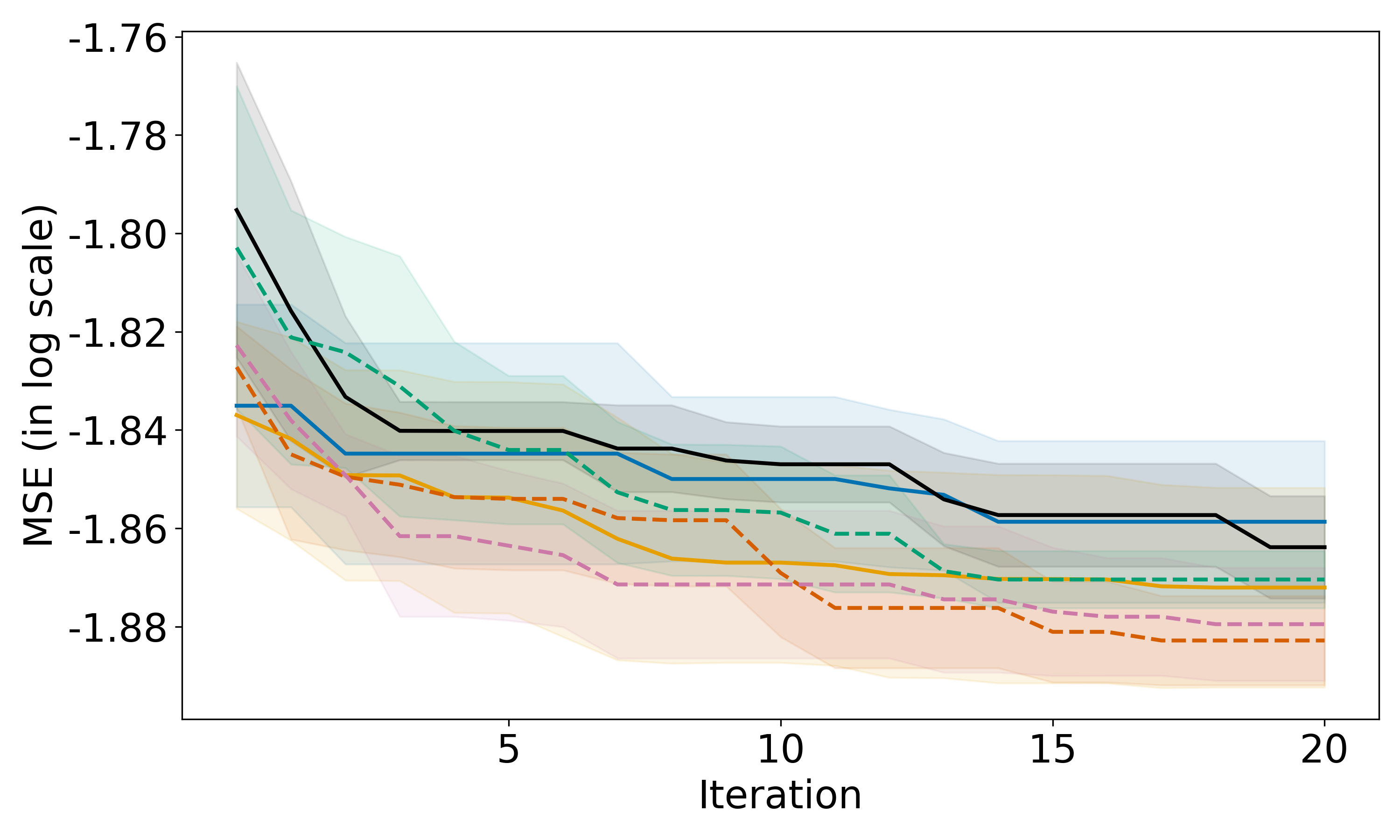}
        \caption{Robot with XGB-4$D$}
    \end{subfigure}

\caption{
MSE comparison for HPT. Each line shows the mean MSE, shaded with 95\% confidence intervals.
\textbf{Proposed methods}:\dashedline{darkgreen}{1.5em} \texttt{LLINBO-Transient}, 
\dashedline{magenta}{1.5em} \texttt{LLINBO-Justify}, 
\dashedline{brickred}{1.5em} \texttt{LLINBO-Constrained}. \textbf{Baselines}:
\solidline{burntorange}{1.5em} \texttt{LLAMBO}, 
\solidline{cobaltblue}{1.5em} \texttt{LLAMBO-light}, 
\solidline{black}{1.5em} BO.}
\label{fig:hpores}
\end{figure}

We end by noting that, as highlighted earlier, HAIC-BO methods generally require much richer forms of information from humans than what is elicited from LLMs in \textit{LLM-assisted BO} approaches. This makes a direct comparison between our method and HAIC-BO particularly challenging, even if one were to treat humans as LLMs. Nevertheless, in Appendix~\ref{app:pibo} we adapt $\pi$BO introduced by \cite{hvarfnerpi} so that the belief functions originally provided by humans can instead be extracted from LLMs, and we present a comparison between $\pi$BO and our proposed method.

While all proposed methods perform well in both BBO and HPT tasks, the choice among them ultimately depends on practical needs. We provide details for choosing among them and selecting hyperparameters in Appendix~\ref{app:select}.


\section{Application to 3D Printing}
In addition to the numerical evaluation, we further assess the performance of our method through a case study in 3D printing, aimed at reducing stringing in a printed product. Stringing (Fig.~\ref{fig:stringingvisual}(b)) is a prevalent defect in fused filament fabrication (FFF) 3D printing. FFF is commonly used for rapid prototyping and low-cost part production. However, stringing degrades surface quality and often requires additional post-processing (\cite{paraskevoudis2020real}). This study aims to optimize the design parameters of a Creality Ender 3 desktop FFF printer (Fig.~\ref{fig:stringingvisual}(a)), including nozzle temperature, Z hop height, retraction distance, outer wall wipe distance, and coasting volume, using stringing percentage as the outcome variable. Further details about the parameters can be found in Appendix~\ref{sec:appc}.

\begin{figure}[!htbp]
    \centering
    \begin{subfigure}[t]{0.18\textwidth}
        \centering
        \includegraphics[width=\textwidth,height=1.27\textwidth]{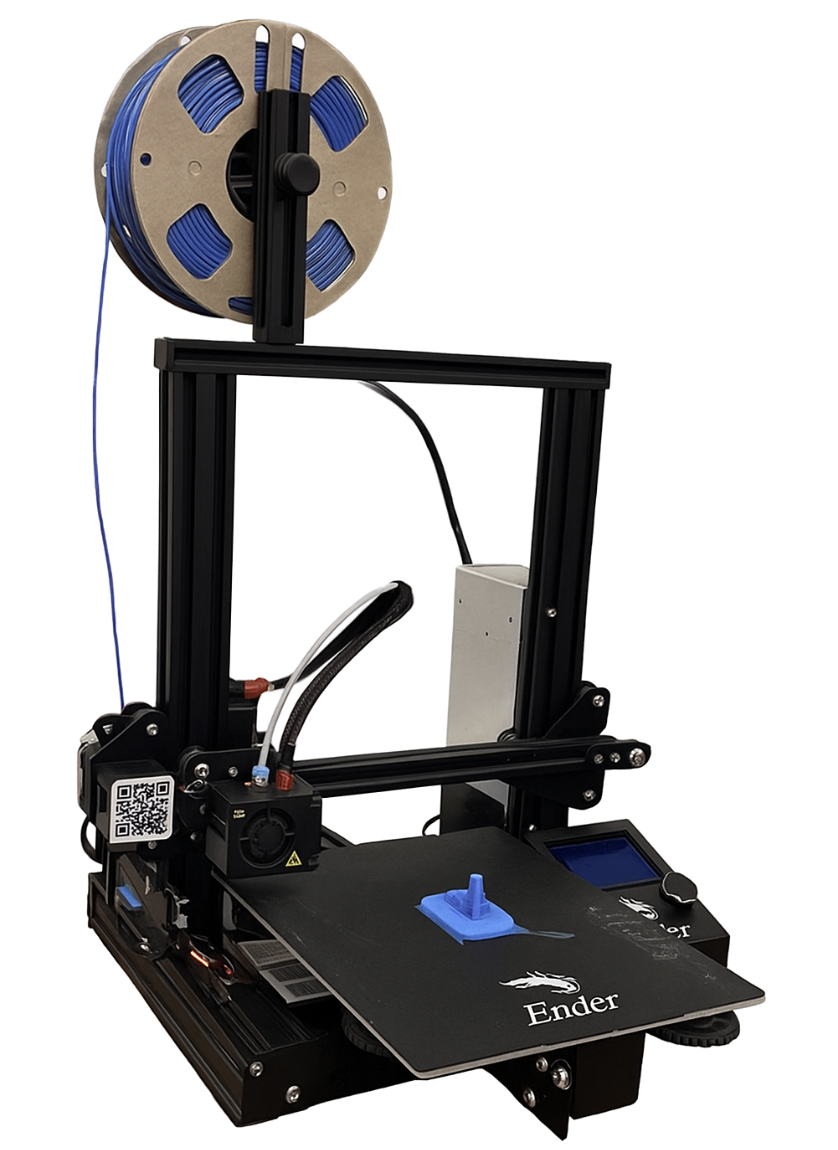}
        \caption{}
    \end{subfigure}
    \hfill
    \begin{subfigure}[t]{0.26\textwidth}
        \centering
\includegraphics[width=\textwidth]{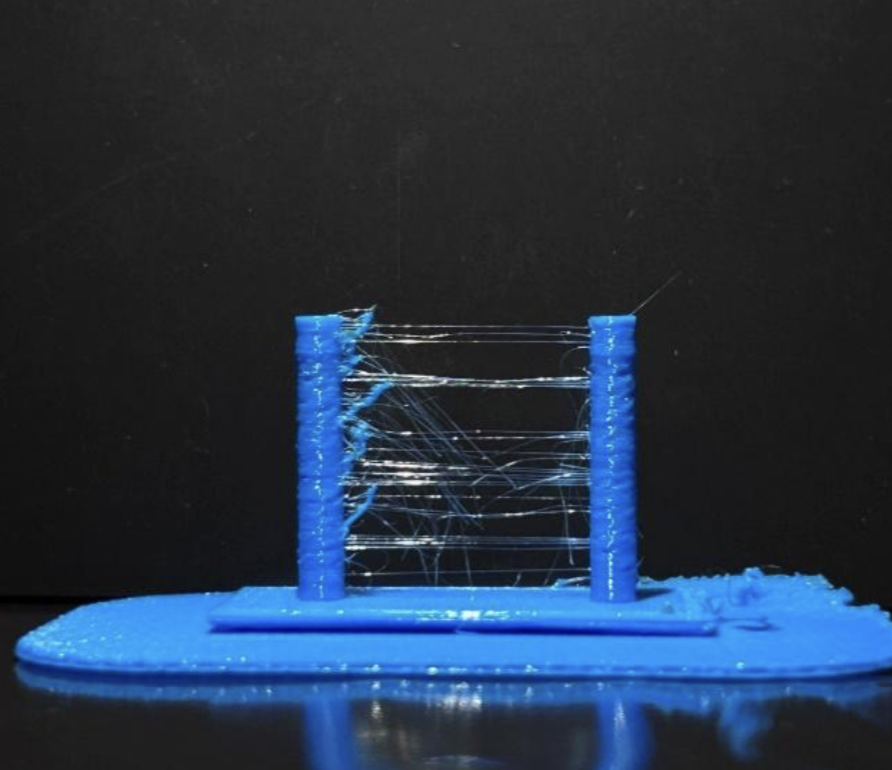}
        \caption{}
    \end{subfigure}
    \hfill
    \begin{subfigure}[t]{0.5\textwidth}
        \centering
        \includegraphics[width=\textwidth,height=0.46\textwidth]{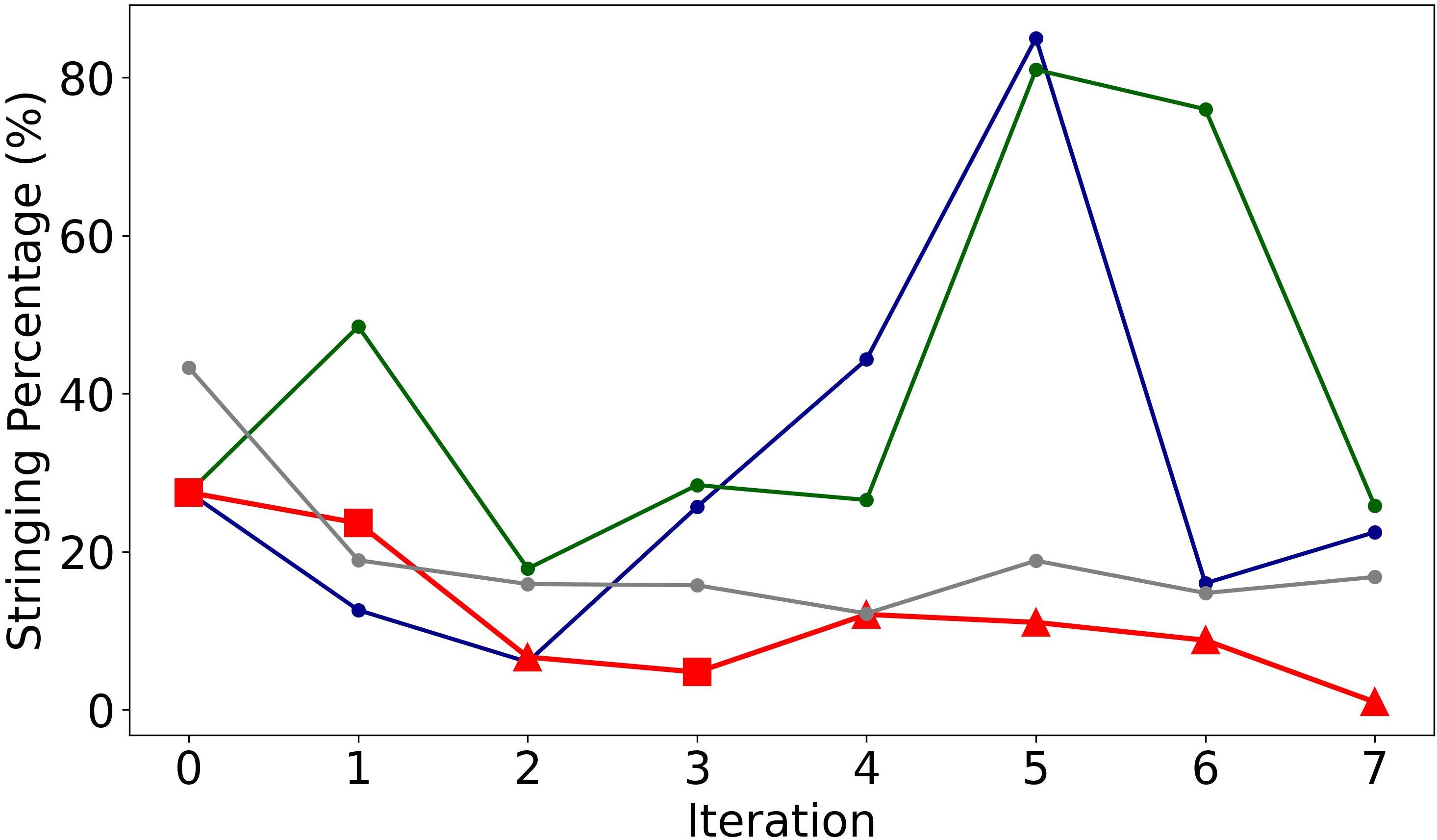}
        \caption{}

    \end{subfigure}
    \caption{Demonstration of 3D printing experiments and results. \textbf{(a):} printer used, \textbf{(b):} stringing between two columns, \textbf{(c):} benchmark results.
Benchmarks: 
\textcolor{3dblue}{\textbf{---} \texttt{LLAMBO-light}},
\textcolor{3dgreen}{\textbf{---} \texttt{LLAMBO}},
\textcolor{red}{\textbf{---} \texttt{LLINBO-Transient}}, and
\textcolor{3dgray}{\textbf{---} BO}. For \textcolor{red}{\texttt{LLINBO-Transient}}, we use square and triangle markers to indicate updates chosen based on an LLM or $\mathcal{GP}$, respectively.}
    \label{fig:stringingvisual}
\end{figure}
\textbf{Experiment setup.} All experiments were conducted on a single printer using PETG filament (\cite{holcomb2022optimized}), selected for its high tendency to produce stringing (see Fig.\ref{fig:stringingvisual}(b)). We adopted a standard two-column geometry with a horizontal gap, commonly used in stringing evaluations (\cite{haque2020minimizing}). At each iteration, after printing the object with the proposed parameters, the stringing percentage (ranging from 0 to 100\%) was quantified (details in Appendix~\ref{sec:appc1}).

Due to the cost associated with this experiment (each run takes several hours), we limit our comparison to \texttt{LLINBO-Transient} with \( p_t = 1 - \frac{1}{t} \), evaluated against \texttt{LLAMBO}, \texttt{LLAMBO-light}, and BO. All other settings follow Sec.~\ref{sec:num}. The prompts specifying the problem context and controllable parameters are provided in Appendix~\ref{sec:appc2}. Unlike Sec.~\ref{sec:num}, the objective here is not full evaluation, but to demonstrate the effectiveness of our method and the broader potential of LLMs in optimal design.

Several insights can be draw from the results shown in Fig.~\ref{fig:stringingvisual}(c):
(i) Our approach demonstrates strong overall performance and ultimately achieves near-zero stringing.
(ii) Methods utilizing LLMs achieve a strong head start compared to BO, highlighting the value of LLMs in optimal design. (iii) Consistent with our simulation results, \texttt{LLAMBO} and \texttt{LLAMBO-light} perform poorly and do not exhibit a decreasing trend in regret. (iv) While BO shows improvement over time, our hybrid approach outperforms it. This again highlights the collaboration benefits between LLMs and surrogate experts.

\section{Conclusion}
The proposed \texttt{LLINBO} framework leverages LLMs’ contextual reasoning to generate high-quality designs early, while surrogate models refine and guide the search as data accumulates. The mechanisms developed under this framework exhibit strong performance, as demonstrated by both simulation and real-world case studies. While the use of LLMs in optimization remains in its infancy, we believe this line of research holds great promise for enabling more adaptive, data-efficient, and practical optimization strategies across a wide range of applications. The strength of our hybrid framework depends on parameters that are sensitive to how well the LLM understands the problem context in early stages. A promising direction is to link these parameters to a metric that quantifies an LLM understanding. Our overarching framework can potentially help design LLM-assisted optimization beyond black-box settings.

\bibliography{iclr2026_conference}
\bibliographystyle{iclr2026_conference}

\clearpage
\appendix
\section{More Related Works} \label{app:rw}
\paragraph{LLM-assisted BO.} Recently, with the few-shot learning ability of LLMs to generate high-quality answers from limited input, leveraging LLMs in the BO process has emerged as a promising yet relatively new research direction. For example, \citet{liu2025large} employed LLMs to solve multi-objective optimization problems, while \citet{guo2023towards} extended their use to a broader set of tasks, including combinatorial optimization. More recently, \citet{song2024position} explored how LLMs can enhance BBO by leveraging textual knowledge and sequence modeling to improve generalization. These works highlight the potential of LLMs in various optimization settings, a direction that remains actively under investigation.

\paragraph{F-BO} To enable collaboration between LLMs and statistical surrogates in enhancing BO, while leveraging the distinctive ability of LLMs to provide a set of designs, we draw inspiration from the literature on F-BO. Federated learning (FL) aims to establish a collaborative framework that allows clients to work together while keeping their own data private. This setting has directly influenced prior work in F-BO, where a single design is often shared across clients. 

For example, \citet{yue2025collaborative} developed a consensus framework for collaborative BO, where the next design to query is selected as a weighted combination, dictated by a dynamically coupled stochastic consensus matrix, of the AF maximizers from all clients in the system, including each client’s own. Other works like \cite{chen2025multi} and \cite{dai2020federated} require only the design from other clients, while the former requires a design point from other clients directly, and the latter requires Random Fourier Features (\cite{rahimi2007random}). A recent review on federated and collaborative BO can be found in \cite{al2024collaborative}.

\paragraph{HAIC-BO} In contrast, HAIC-BO requires richer information compared with F-BO, as privacy concerns are not considered in this setting. For instance, COBOL (\cite{xu2024principled}) requires explicit beliefs about the function from the user, while CoExBO (\cite{adachi2024looping}) relies on preference pairs provided by a human. Similarly, $\pi$BO (\cite{hvarfnerpi}) assumes access to a formal prior distribution specified by a human expert, and the method in (\cite{av2022human}) requires information about good and bad regions of the design space. Another key difference between our framework and HAIC-BO lies in the assumptions placed on LLMs or humans. Notably, our method does not impose any assumptions on the capacity of LLMs; instead, it hedges against poor suggestions through three distinct hedging processes. By contrast, HAIC-BO often introduces behavioral assumptions about humans, for example, \cite{av2022human} assumes that a human expert follows a BO-like strategy.

\section{Technical Results}
We first introduce two Lemmas that are quite common in BO analysis. Lemma~\ref{lemma:UCBconcentrate} derives the concentration between the posterior mean and the ground truth. 
\begin{lemma}(Theorem 2 of \cite{chowdhury2017kernelized})
\label{lemma:UCBconcentrate}
Under Assumption~\ref{assumption:rkhs_noise} and \ref{assumption:UCB}, and let \( \hat{\lambda}_t = 1 + 2/t \). For arbitrary $\delta \in (0,1)$, with probability at least \( 1 - \delta \), we have:
\begin{align}
| \mu_{t-1}(x) - f(x)| 
&\leq | k_{t-1}(x)^\top (K_{t-1} + \hat{\lambda}_t I)^{-1} [\delta_1,...,\delta_{t-1}]^\top|\nonumber\\
&\ \ \ \ + | f(x) - k_{n,t}(x)^\top (K_{t-1} + \hat{\lambda}_t I)^{-1} [f(x_1),...,f(x_{t-1})]^\top | \label{eq:lemma1-1}\\
&\leq (B + R\sqrt{2\left( \gamma_{t-1} + 1 + \ln\left({1}/{\delta}\right) \right)})\sigma_{t-1}(x)\nonumber\\
&=\beta_t\sigma_{t-1}(x)\label{eq:lemma1-2},
\end{align} where $\delta_i=f(x_i)-y_i\ \forall i\in[t-1]$.
\end{lemma}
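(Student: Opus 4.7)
The plan is to follow the approach of Chowdhury and Gopalan, which proceeds in three main steps: (i) an algebraic decomposition of the prediction error, (ii) a deterministic RKHS bound on the approximation term, and (iii) a high-probability bound on the noise contribution via self-normalized martingale concentration.

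First, starting from the kernel-ridge form of the posterior mean with regularizer $\hat{\lambda}_t$, I would substitute $y_i = f(x_i) - \delta_i$ into $\mu_{t-1}(x) = k_{t-1}(x)^\top (K_{t-1} + \hat{\lambda}_t I)^{-1} y$. Splitting the resulting expression into a ``noise-free predictor'' applied to the true function values plus a linear functional of the noise vector, and then applying the triangle inequality, would immediately yield the decomposition (\ref{eq:lemma1-1}).

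Second, for the approximation term, I would invoke the reproducing property to write $f(x) = \langle f, k(\cdot, x)\rangle_{\mathcal{H}_k}$ and $f(x_i) = \langle f, k(\cdot, x_i)\rangle_{\mathcal{H}_k}$. Denoting by $\Phi_{t-1}$ the feature operator with ``columns'' $k(\cdot, x_i)$, the approximation error becomes $\langle f,\, k(\cdot, x) - \Phi_{t-1}(K_{t-1} + \hat{\lambda}_t I)^{-1} k_{t-1}(x)\rangle_{\mathcal{H}_k}$. Cauchy-Schwarz in $\mathcal{H}_k$ then bounds this by $\|f\|_{\mathcal{H}_k}\leq B$ times the RKHS norm of the residual feature, and a short algebraic calculation (expanding the squared norm and using $K_{t-1} = (K_{t-1} + \hat{\lambda}_t I) - \hat{\lambda}_t I$) shows that residual norm is at most $\sigma_{t-1}(x)$.

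Third, and this is the main technical step, I would bound the noise term $|k_{t-1}(x)^\top (K_{t-1} + \hat{\lambda}_t I)^{-1} \delta|$ by Cauchy-Schwarz in the $(K_{t-1} + \hat{\lambda}_t I)^{-1}$ inner product, factoring it as $\|k_{t-1}(x)\|_{(K_{t-1} + \hat{\lambda}_t I)^{-1}} \cdot \|\delta\|_{(K_{t-1} + \hat{\lambda}_t I)^{-1}}$, where the first factor is at most $\sigma_{t-1}(x)$. Bounding the second factor requires a self-normalized martingale inequality of Abbasi-Yadkori type, adapted to the kernel setting: with probability at least $1-\delta$ one obtains $\|\delta\|_{(K_{t-1} + \hat{\lambda}_t I)^{-1}}^2 \leq 2R^2\bigl(\gamma_{t-1} + 1 + \ln(1/\delta)\bigr)$, where the information gain $\gamma_{t-1}$ controls the log-determinant $\ln\det(I + \hat{\lambda}_t^{-1} K_{t-1})$ and the specific choice $\hat{\lambda}_t = 1 + 2/t$ cleanly absorbs the extra constants. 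Combining the approximation and noise bounds then yields $\beta_t \sigma_{t-1}(x)$ as claimed.

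The hardest part is the self-normalized step: constructing the relevant supermartingale via the method of mixtures in an infinite-dimensional RKHS, and carefully bounding the resulting log-determinant by the maximum information gain $\gamma_{t-1}$. The remaining ingredients (decomposition, RKHS Cauchy-Schwarz, and the algebraic identities for $(K_{t-1} + \hat{\lambda}_t I)^{-1}$) are essentially routine once that concentration tool is in hand.
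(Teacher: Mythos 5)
This lemma is imported verbatim from Theorem 2 of \cite{chowdhury2017kernelized} (the paper gives no proof), so the benchmark is that standard argument, and your steps (i) and (ii) reproduce it correctly: the substitution $y_i=f(x_i)-\delta_i$ gives exactly the decomposition in (\ref{eq:lemma1-1}), and the reproducing-property/Cauchy--Schwarz argument does bound the noise-free term by $B\,\sigma_{t-1}(x)$. The gap is in your step (iii). If you apply Cauchy--Schwarz in the $(K_{t-1}+\hat{\lambda}_t I)^{-1}$ inner product, the first factor is $\|k_{t-1}(x)\|_{(K_{t-1}+\hat{\lambda}_t I)^{-1}}=\sqrt{k(x,x)-\sigma_{t-1}^2(x)}$, which is \emph{not} bounded by $\sigma_{t-1}(x)$; in the data-rich regime where $\sigma_{t-1}(x)$ is small it approaches $\sqrt{k(x,x)}$, so this pairing loses precisely the $\sigma_{t-1}(x)$ weighting the lemma needs. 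Moreover, the second factor $\|\delta\|_{(K_{t-1}+\hat{\lambda}_t I)^{-1}}^2=\delta^\top(K_{t-1}+\hat{\lambda}_t I)^{-1}\delta$ is not what the kernelized self-normalized inequality controls: on a compact domain with a smooth kernel, most of the noise vector lies in the near-null space of $K_{t-1}$, where $(K_{t-1}+\hat{\lambda}_t I)^{-1}\approx \hat{\lambda}_t^{-1}I$, so this quantity grows like $tR^2$ while $\gamma_{t-1}$ grows only polylogarithmically; the claimed bound $2R^2(\gamma_{t-1}+1+\ln(1/\delta))$ is therefore false for that norm.

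The correct execution pairs in feature space: write $k_{t-1}(x)^\top(K_{t-1}+\hat{\lambda}_t I)^{-1}\delta=\varphi(x)^\top(\Phi^\top\Phi+\hat{\lambda}_t I)^{-1}\Phi^\top\delta$ and apply Cauchy--Schwarz there, which yields
\[
\bigl|k_{t-1}(x)^\top(K_{t-1}+\hat{\lambda}_t I)^{-1}\delta\bigr|\;\le\;\hat{\lambda}_t^{-1/2}\,\sigma_{t-1}(x)\,\sqrt{\delta^\top K_{t-1}(K_{t-1}+\hat{\lambda}_t I)^{-1}\delta},
\]
i.e.\ exactly the first inequality of Lemma~\ref{lemma:algebra} in this paper (Appendix C of \cite{chowdhury2017kernelized}). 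The self-normalized, method-of-mixtures theorem (Theorem 1 of \cite{chowdhury2017kernelized}, the kernelized Abbasi--Yadkori bound with the $\eta=2/t$ noise augmentation) then bounds the $K_{t-1}(K_{t-1}+\hat{\lambda}_t I)^{-1}$-weighted norm, equivalently $\|\Phi^\top\delta\|^2_{(\Phi^\top\Phi+\hat{\lambda}_t I)^{-1}}$, by $2R^2\ln\bigl(\sqrt{\det((1+\eta)I+K_{t-1})}/\delta\bigr)\le 2R^2(\gamma_{t-1}+1+\ln(1/\delta))$, where the extra ``$+1$'' comes from absorbing $\det(I+\eta(I+K_{t-1})^{-1})\le e^{\eta t}\le e^2$. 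So your overall skeleton (decomposition, RKHS Cauchy--Schwarz, self-normalized concentration, log-determinant controlled by $\gamma_{t-1}$) is the right one, but the specific Cauchy--Schwarz split and the norm fed to the concentration inequality must be the $K$-weighted ones; as written, both halves of your step (iii) fail.
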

With this Lemma, we can bound the regret raised at every iteration, which is stated in Lemma~\ref{lemma:UCBregretbd}.
\begin{lemma}[Theorem 3 in \cite{chowdhury2017kernelized}]
\label{lemma:UCBregretbd}
Assume that Assumptions 1 and 2 hold. UCB is used to select $x_t$ $\forall t\in[T]$. With probability at least $1-\delta$, where $\delta\in(0,1)$, the regret at time $t$ can be upper bounded by\[
r_t=f(x^*)-f(x_t)\leq \beta_t\sigma_{t-1}(x_t)+\mu_{t-1}(x_t)-f(x_t)\leq 2\beta_t\sigma_{t-1}(x_t).
\]
\end{lemma}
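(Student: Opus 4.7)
The plan is to derive the stated two-step bound using Lemma~\ref{lemma:UCBconcentrate} at both the (unknown) optimum $x^{\ast}$ and the UCB-selected design $x_t$, together with the optimality property of $x_t$ as an argmax of $\alpha_{\text{UCB}}$. I would work on the event of probability at least $1-\delta$ on which the concentration inequality in~(\ref{eq:lemma1-2}) holds uniformly in $x$, and then produce both inequalities in the statement via a standard chain of bounds.

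For the first inequality, I would begin by applying Lemma~\ref{lemma:UCBconcentrate} at $x=x^{\ast}$ to obtain $f(x^{\ast})\leq \mu_{t-1}(x^{\ast})+\beta_t\sigma_{t-1}(x^{\ast})=\alpha_{\text{UCB}}(x^{\ast},F_{t-1})$. Because $x_t$ is defined in (\ref{eq:AF}) as the maximizer of $\alpha_{\text{UCB}}(\cdot,F_{t-1})$, I immediately get $\alpha_{\text{UCB}}(x^{\ast},F_{t-1})\leq \alpha_{\text{UCB}}(x_t,F_{t-1})=\mu_{t-1}(x_t)+\beta_t\sigma_{t-1}(x_t)$. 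Subtracting $f(x_t)$ from both sides then yields $r_t=f(x^{\ast})-f(x_t)\leq \beta_t\sigma_{t-1}(x_t)+\mu_{t-1}(x_t)-f(x_t)$, which is the first inequality in the statement.

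For the second inequality, I would invoke Lemma~\ref{lemma:UCBconcentrate} a second time, now at $x=x_t$, which gives $\mu_{t-1}(x_t)-f(x_t)\leq |\mu_{t-1}(x_t)-f(x_t)|\leq \beta_t\sigma_{t-1}(x_t)$. Substituting this into the previous display produces $r_t\leq 2\beta_t\sigma_{t-1}(x_t)$, completing the proof. Both uses of Lemma~\ref{lemma:UCBconcentrate} occur on the same high-probability event, so no additional union bound is required, and the claimed probability $1-\delta$ is inherited directly.

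I do not anticipate any genuine obstacle: the argument is the textbook UCB ``sandwich'' proof, and all heavy lifting (the kernelized concentration of $\mu_{t-1}$ around $f$ with the specific $\beta_t$ from Assumption~\ref{assumption:UCB}) is already absorbed into Lemma~\ref{lemma:UCBconcentrate}. The only point worth stating carefully is that the concentration bound must be applied uniformly over $x\in\mathcal{X}$ so that it is valid simultaneously at $x^{\ast}$ and at the random point $x_t$; this is exactly what the cited result of \cite{chowdhury2017kernelized} provides.
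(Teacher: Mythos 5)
Your proposal is correct: the paper does not re-derive this lemma (it is imported as Theorem 3 of \cite{chowdhury2017kernelized}), and your argument—concentration from Lemma~\ref{lemma:UCBconcentrate} applied at $x^\ast$, the argmax property of $x_t$ under $\alpha_{\text{UCB}}$, and concentration again at $x_t$, all on one uniform high-probability event—is exactly the standard sandwich proof underlying the cited result and mirrors the chain of bounds the paper itself uses in the proof of Theorem~\ref{thm:gpj}. No gaps.
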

Next, when using the UCB as the AF, we present a commonly used lemma that bounds the cumulative posterior variance at the selected design points in terms of the information gain.
\begin{lemma}[Lemma 4 in Appendix of \cite{chowdhury2017kernelized}]
\label{lemma:sumvarbd}
Let \( x_1, \ldots, x_T \) be the designs selected by the algorithm. Then, the sum of the predictive standard deviations at these points can be bounded by 
\[
\sum_{t=1}^T \sigma_{t-1}(x_t) \leq \sqrt{4(T + 2)\gamma_T}=\mathcal{O}(\sqrt{T\gamma_T}).
\]

\end{lemma}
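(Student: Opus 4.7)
The plan is to bound the telescoping sum of posterior standard deviations by a telescoping sum of posterior variances through Cauchy–Schwarz, and then connect that variance sum to the maximum information gain $\gamma_T$ via the standard mutual-information identity for $\mathcal{GP}$s. Concretely, as a first step I would write
\[
\sum_{t=1}^T \sigma_{t-1}(x_t) \;\leq\; \sqrt{T \sum_{t=1}^T \sigma_{t-1}^2(x_t)}
\]
by Cauchy–Schwarz, which reduces the problem to upper-bounding $\sum_t \sigma_{t-1}^2(x_t)$ by $\mathcal{O}(\gamma_T)$ with the right leading constant $4(T+2)/T$.

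The second step is the standard ``variance-to-log'' trick. Since the kernel satisfies $k(x,x)\leq 1$, we have $\sigma_{t-1}^2(x_t)\leq 1$, and more generally $\sigma_{t-1}^2(x_t)/\hat\lambda_t^2 \leq 1/\hat\lambda_t^2$ for the regularizer $\hat\lambda_t = 1+2/t$ used in Lemma~\ref{lemma:UCBconcentrate}. On such a bounded interval one has the elementary inequality $u \leq \frac{\log(1+u)}{\log(1+c)}\, c$ for $u\in[0,c]$, which I would apply with $u = \sigma_{t-1}^2(x_t)/\hat\lambda_t^2$ and $c = 1/\hat\lambda_t^2$. Since $\hat\lambda_t\leq 1+2/1 = 3$ and $\hat\lambda_T = 1+2/T$, I would simplify the resulting multiplicative constant into $4(T+2)/T$ (absorbing the $\log(1+\hat\lambda_t^{-2})^{-1}$ factor into $\hat\lambda_T^2/\log(\cdot)$), giving
\[
\sigma_{t-1}^2(x_t) \;\leq\; \frac{4(T+2)}{T}\cdot\tfrac{1}{2}\log\!\Bigl(1+\hat\lambda_t^{-2}\sigma_{t-1}^2(x_t)\Bigr).
\]

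The third step is to recognize the right-hand sum as (twice) the information gain of the selected queries. For a $\mathcal{GP}$ with noise variance $\hat\lambda_t^2$, the mutual information between $f$ and the observations admits the well-known decomposition
\[
I(y_{1:T}; f_{1:T}) \;=\; \tfrac{1}{2}\sum_{t=1}^T \log\!\Bigl(1+\hat\lambda_t^{-2}\sigma_{t-1}^2(x_t)\Bigr),
\]
and this quantity is at most $\gamma_T$ by the definition of maximum information gain in \cite{vakili2021information}. Summing the per-step bound from step~2 therefore gives $\sum_t \sigma_{t-1}^2(x_t) \leq \frac{4(T+2)}{T}\gamma_T$. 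Plugging this into the Cauchy–Schwarz bound from step~1 yields the claim $\sum_t \sigma_{t-1}(x_t)\leq \sqrt{4(T+2)\gamma_T} = \mathcal{O}(\sqrt{T\gamma_T})$.

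The main obstacle I anticipate is bookkeeping the constant, specifically arranging the regularizer $\hat\lambda_t = 1+2/t$ (which changes with $t$, matching Lemma~\ref{lemma:UCBconcentrate}) so that the per-step ratio $c/\log(1+c)$ collapses cleanly into the factor $4(T+2)$ and the mutual-information identity remains valid with a common noise level. A secondary but easy subtlety is that the bound must hold deterministically in $\sigma_{t-1}(x_t)$ (it does not depend on the high-probability event of Lemma~\ref{lemma:UCBconcentrate}), so no union bound over $t$ is needed — the inequality is purely analytic once the variance/log inequality and the mutual-information identity are in place.
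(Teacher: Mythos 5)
The paper does not actually prove this lemma --- it is imported verbatim as Lemma~4 from the appendix of \citet{chowdhury2017kernelized} --- and your argument is precisely the standard proof given there: Cauchy--Schwarz, the elementary inequality $u \le \frac{c}{\ln(1+c)}\ln(1+u)$ on $[0,c]$, and the mutual-information telescoping identity bounded by $\gamma_T$. The one caveat is exactly the point you flag yourself: the telescoping identity requires a single fixed regularizer, so the $t$-varying $\hat\lambda_t$ should be replaced by the terminal value $1+2/T$ throughout, which is where the constant comes from via $2/\ln\bigl(1+\lambda^{-1}\bigr) \le 4\lambda = 4(T+2)/T$.
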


\subsection{Proof of Theorem~\ref{thm:llmrgp}}
\label{sec:theorem1}
The proof builds on the approach of \cite{dai2020federated}, which uses the Azuma-Hoeffding inequality to derive a high-probability upper bound on the regret, transforming the expected regret into a probabilistic guarantee. Recall that when \texttt{LLINBO-Transient} is applied, $x_t$ is selected as \[
x_t=\begin{cases}
    x_{\text{LLM},t}\ \text{with probability }1-p_t\\
    x_{\mathcal{GP},t}\ \text{with probability }p_t
\end{cases}.
\]

Let $A_t$ and $B_t$ be the event when $x_t$ is selected the same as $x_{\text{LLM},t}$ and $x_{\mathcal{GP},t}$, respectively. When event $A_t$ happens, the regret conditioned on $A_t$ can be upper bounded with high probability via Lemma~\ref{lemma:UCBregretbd}. In this case, the expected regret at time $t$ can be controlled via Lemma~\ref{lemma:expcontrol}.
\begin{lemma}
\label{lemma:expcontrol}
Pick $\delta \in (0,1)$, let $\delta' = \frac{\delta}{2}$ and define
$\beta_t$ the same as Assumption~\ref{assumption:UCB}. Then, with probability at least $1-\delta'$, we have
\[
\mathbb{E}[r_t|\mathcal{F}_{t-1}]\leq p_t(2\beta_t\sigma_{t-1}(x_{\mathcal{GP},t}))+(1-p_t)\nu_t,
\] where $\mathcal{F}_{t-1}$ denotes the filtration until $t-1$ and $\nu_t=\mathbb{E}[r_t|\mathcal{F}_{t-1},B_t]$.
\end{lemma}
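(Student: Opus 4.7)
The plan is to decompose the one-step conditional expectation of $r_t$ along the Bernoulli draw $z_t$, then control the $\mathcal{GP}$ branch using the UCB concentration machinery already cited, and leave the LLM branch untouched by folding it into $\nu_t$. Concretely, since $z_t$ is drawn independently of the filtration $\mathcal{F}_{t-1}$ and $\{A_t, B_t\}$ partitions the sample space with $\mathbb{P}(B_t)=p_t$, the tower property gives
\[
\mathbb{E}[r_t \mid \mathcal{F}_{t-1}] \;=\; p_t\, \mathbb{E}[r_t \mid \mathcal{F}_{t-1}, B_t] + (1-p_t)\, \mathbb{E}[r_t \mid \mathcal{F}_{t-1}, A_t].
\]
The second summand is exactly $(1-p_t)\nu_t$, where $\nu_t$ is the conditional regret on the LLM branch (the stated ``$B_t$'' in the definition of $\nu_t$ reads as a labelling slip; otherwise the two terms in the bound would both correspond to the same event).

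Next I would bound the $\mathcal{GP}$ branch. On $B_t$ we have $x_t=x_{\mathcal{GP},t}$, the UCB maximizer against $F_{t-1}$, so I invoke Lemma~\ref{lemma:UCBconcentrate} at confidence level $\delta'=\delta/2$: there is an event $\mathcal{E}_t\in\mathcal{F}_{t-1}$ with $\mathbb{P}(\mathcal{E}_t)\geq 1-\delta'$ on which $|\mu_{t-1}(x)-f(x)|\leq \beta_t\sigma_{t-1}(x)$ uniformly in $x\in\mathcal{X}$. On $\mathcal{E}_t$ the standard UCB chain
\[
f(x^*)\leq \mu_{t-1}(x^*)+\beta_t\sigma_{t-1}(x^*)\leq \mu_{t-1}(x_{\mathcal{GP},t})+\beta_t\sigma_{t-1}(x_{\mathcal{GP},t}),
\]
together with $f(x_{\mathcal{GP},t})\geq \mu_{t-1}(x_{\mathcal{GP},t})-\beta_t\sigma_{t-1}(x_{\mathcal{GP},t})$, reproduces Lemma~\ref{lemma:UCBregretbd} and yields $r_t\leq 2\beta_t\sigma_{t-1}(x_{\mathcal{GP},t})$ whenever $B_t$ holds.

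Finally, because $\sigma_{t-1}(x_{\mathcal{GP},t})$ and $\mathcal{E}_t$ are $\mathcal{F}_{t-1}$-measurable while $z_t$ is independent of $\mathcal{F}_{t-1}$, conditioning on $B_t$ preserves the pointwise inequality: on $\mathcal{E}_t$, $\mathbb{E}[r_t\mid\mathcal{F}_{t-1},B_t]\leq 2\beta_t\sigma_{t-1}(x_{\mathcal{GP},t})$. Substituting into the decomposition above delivers the desired inequality on an event of probability at least $1-\delta'$. The main obstacle is purely bookkeeping: one must keep the UCB high-probability event (driven by the observation-noise randomness carried in $\mathcal{F}_{t-1}$) cleanly separated from the independent Bernoulli randomness of $z_t$, so that the pointwise UCB bound survives conditioning on $B_t$ and the LLM term is simply absorbed into $\nu_t$ with no further argument. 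The choice $\delta'=\delta/2$ is reserved to leave room for a second $\delta/2$ budget in the outer regret proof (e.g., an Azuma--Hoeffding step following \cite{dai2020federated}) that converts the conditional bound above into a cumulative, high-probability regret guarantee.
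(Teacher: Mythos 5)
Your proposal matches the paper's own argument: condition on the Bernoulli draw to split $\mathbb{E}[r_t\mid\mathcal{F}_{t-1}]$ into the $\mathcal{GP}$ branch (weight $p_t$), bounded by $2\beta_t\sigma_{t-1}(x_{\mathcal{GP},t})$ via the UCB concentration/instantaneous-regret bound of Lemmas~\ref{lemma:UCBconcentrate}--\ref{lemma:UCBregretbd} at confidence $\delta'=\delta/2$, and the LLM branch (weight $1-p_t$), absorbed into $\nu_t$; re-deriving Lemma~\ref{lemma:UCBregretbd} instead of citing it is only a cosmetic difference. You also correctly identify the $A_t$/$B_t$ labelling slip in the statement (the paper's appendix has the same inconsistency), and your reading of $\nu_t$ as the LLM-branch conditional regret is the intended one.
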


\begin{proof}
As the choice of the next evaluation design is stochastic, one needs to consider the expected regret given the current filter $\mathcal{F}_{t-1}$, which can be written as\[
\mathbb{E}[r_t|\mathcal{F}_{t-1}]=p(A_t)\mathbb{E}[r_t|\mathcal{F}_{t-1},A_t]+p(B_t)\mathbb{E}[r_t|\mathcal{F}_{t-1},B_t].\] Note that the term $\mathbb{E}[r_t|\mathcal{F}_{t-1},A_t]$ is deterministic and can be upper bounded with probability $1-\delta'$ via Lemma~\ref{lemma:UCBregretbd}. Let $\nu_t=\mathbb{E}[r_t|\mathcal{F}_{t-1},B_t]$, we have\begin{align}
\mathbb{E}[r_t|\mathcal{F}_{t-1}]&=p_t(f(x^*)-f(x_{\mathcal{GP},t}))+(1-p_t)\nu_t\notag \\
&\leq p_t(2\beta_t\sigma_{t-1}(x_{\mathcal{GP},t}))+(1-p_t)\nu_t.\label{eq:expbd}
\end{align}
\end{proof}
The following lemma is used to transform the expected regret to an unexpected form with high probability. 
\begin{lemma}(Azuma-Hoeffding Inequality) Given $\delta\in(0,1)$ and a super-martingale $Y_t,\ t\in[T]$. Suppose with probability $1-\delta$, $Y_t-Y_{t-1}\leq k_t\ \forall t\in[T]$ we have\[
\text{p}\left(|Y_T-Y_0|\leq\sqrt{-2\text{log}\delta\sum_{t=1}^Tk_t^2}\right)>1-\delta.
\]   
\label{lemma:ineq}
\end{lemma}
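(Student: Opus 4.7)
The plan is to derive this high-probability bound by combining the classical Azuma-Hoeffding martingale concentration inequality with a standard truncation/union-bound argument to accommodate the \emph{high-probability} (rather than almost-sure) bounded-difference hypothesis.

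First, I would establish the almost-sure version as a stepping stone. Suppose momentarily that $Y_t - Y_{t-1} \leq k_t$ holds almost surely for every $t$. The exponential Markov (Chernoff) trick gives $P(Y_T - Y_0 \geq \epsilon) \leq e^{-s\epsilon}\,\mathbb{E}[e^{s(Y_T - Y_0)}]$ for any $s > 0$. Unfolding the expectation via the tower property and applying Hoeffding's lemma to each one-step conditional MGF---which uses both the super-martingale property $\mathbb{E}[Y_t - Y_{t-1}\mid\mathcal{F}_{t-1}] \leq 0$ and the one-sided increment bound---yields $\mathbb{E}[e^{s(Y_t - Y_{t-1})}\mid\mathcal{F}_{t-1}] \leq e^{s^2 k_t^2/2}$. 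Iterating over $t = 1, \dots, T$ and optimizing over $s$ delivers the sub-Gaussian tail
\[
P(Y_T - Y_0 \geq \epsilon) \leq \exp\!\left(-\frac{\epsilon^2}{2\sum_{t=1}^T k_t^2}\right),
\]
and choosing $\epsilon = \sqrt{-2\log\delta\,\sum_{t=1}^T k_t^2}$ converts the tail into the claimed $\delta$-level form.

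Second, to accommodate the fact that $Y_t - Y_{t-1} \leq k_t$ only holds on an event $E$ with $P(E) \geq 1 - \delta$ rather than almost surely, I would introduce a stopped process $\tilde Y_t$ that agrees with $Y_t$ on $E$ but is halted at the first time the increment bound is violated. By the optional stopping theorem, $\tilde Y_t$ inherits the super-martingale property, and by construction its increments satisfy $\tilde Y_t - \tilde Y_{t-1} \leq k_t$ almost surely. The almost-sure Azuma-Hoeffding from step one then applies to $\tilde Y_t$, and the decomposition
\[
P(Y_T - Y_0 \geq \epsilon) \leq P(\tilde Y_T - \tilde Y_0 \geq \epsilon) + P(E^c)
\]
transfers the bound back to $Y_t$, with the slack $P(E^c) \leq \delta$ absorbed into constants. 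The two-sided statement $|Y_T - Y_0|$ follows by running the same argument on $-Y_t$ (when an analogous lower-tail bound is available, as is the case for the martingale difference sequence actually used in the regret analysis of Theorem~\ref{thm:llmrgp}) and union-bounding the two tails.

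The main obstacle is the second step: constructing $\tilde Y_t$ so that it simultaneously enforces the one-sided increment bound almost surely and preserves the super-martingale structure. The stopping-time construction handles this cleanly, at the price of absorbing a constant factor (effectively $\delta$ replaced by $2\delta$ after the union bound, and another factor of two if one symmetrizes for the absolute value) into the stated $\delta$. These constants vanish inside the $\mathcal{O}(\cdot)$ scaling that ultimately drives the cumulative regret bound of Theorem~\ref{thm:llmrgp}, so the asymptotic conclusion is unaffected.
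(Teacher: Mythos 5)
The paper itself gives no proof of this lemma: it is quoted as the classical Azuma--Hoeffding inequality (following the usage in \cite{dai2020federated}) and is applied directly in Appendix A.1. So you are supplying a proof for a statement the paper treats as standard, which is fine in principle, but your sketch has two genuine gaps. First, the MGF step is not valid as stated: Hoeffding's lemma needs the conditional increment to be bounded on \emph{both} sides. A one-sided bound $Y_t-Y_{t-1}\leq k_t$ together with $\mathbb{E}[Y_t-Y_{t-1}\mid\mathcal{F}_{t-1}]\leq 0$ does not yield $\mathbb{E}[e^{s(Y_t-Y_{t-1})}\mid\mathcal{F}_{t-1}]\leq e^{s^2k_t^2/2}$; for instance, an increment equal to $k_t$ with probability close to one and a very large negative value with tiny probability has nonpositive conditional mean and satisfies the one-sided bound, yet its MGF is roughly $e^{sk_t}$, which exceeds $e^{s^2k_t^2/2}$ for $s<2/k_t$. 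Hence even the \emph{upper} tail, not only the two-sided conclusion $|Y_T-Y_0|$, requires $|Y_t-Y_{t-1}|\leq k_t$ (or an explicit lower bound on the increments). This stronger hypothesis is exactly what is available in the application to Theorem~\ref{thm:llmrgp}, where the paper bounds $|X_t|$, so the honest fix is to state and prove the lemma under the two-sided increment condition.

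Second, the stopping-time construction does not do what you need: if $\tau$ is the first time the increment bound is violated, the stopped process $Y_{t\wedge\tau}$ \emph{includes} the violating increment at time $\tau$, so its increments are not almost surely bounded by $k_t$; and ``stopping one step earlier'' does not define a stopping time with respect to $\mathcal{F}_t$. The standard repairs are either to truncate the increments themselves (e.g.\ replace $X_t$ by $X_t\mathbf{1}\{|X_t|\leq k_t\}$ and control the effect on the conditional mean) or, as in the paper's actual use, to note that the bound on $|X_t|$ is deterministic under Assumption~\ref{assumption:rkhs_noise} --- the probability $1-\delta$ in the lemma's hypothesis really enters through the event on which the supermartingale property (via Lemma~\ref{lemma:expcontrol}) holds --- so one simply intersects with that event and pays $\delta$ once in a union bound, with no truncation machinery needed. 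Finally, note that your version ends with failure probability $2\delta$ (or more) and a modified radius, which is not literally the displayed inequality at level $1-\delta$; this is immaterial for the $\mathcal{O}(\cdot)$ regret rate, but the lemma should then be restated rather than the slack absorbed silently.
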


Let \( X_t = r_t - \left( p_t (2\beta_t \sigma_{t-1}(x_{\mathcal{GP},t})) + (1 - p_t) \nu_t \right) \), and define \( Y_t = \sum_{s=1}^t X_s \) with \( Y_0 = 0 \). We claim that \( Y_t \) forms a super-martingale and hence apply Lemma~\ref{lemma:ineq} to bound \( Y_T - Y_0 = Y_T \).
To verify the super-martingale property of \( Y_t \), we compute the conditional expectation of its increments:\begin{align*}
    \mathbb{E}[Y_t-Y_{t-1}|\mathcal{F}_{t-1}]&=\mathbb{E}[X_t|\mathcal{F}_{t-1}]\\
    &=\mathbb{E}[r_t-(p_t(2\beta_t\sigma_{t-1}(x_{\mathcal{GP},t}))+(1-p_t)\nu_t)|\mathcal{F}_{t-1}]\\
    &=\mathbb{E}[r_t|\mathcal{F}_{t-1}]-(p_t(2\beta_t\sigma_{t-1}(x_{\mathcal{GP},t}))+(1-p_t)\nu_t)\\
    &\leq 0.\hfill\tag{by (\ref{eq:expbd})}
\end{align*} In this case, $Y_t$ is a super-martingale. Next, we derive the upper bound of $|Y_t-Y_{t-1}|$, which is essential for applying Lemma~\ref{lemma:ineq}:
\begin{align*}
    |Y_t-Y_{t-1}|&=|X_t|\\
    &=|r_t-(p_t(2\beta_t\sigma_{t-1}(x_{\mathcal{GP},t}))+(1-p_t)\nu_t)|\\
    &\leq |r_t|+p_t(2\beta_t\sigma_{t-1}(x_{\mathcal{GP},t}))+(1-p_t)\nu_t\hfill\tag{by triangle inequality}\\
    &\leq B+ p_t(2\beta_t\sigma_{t-1}(x_{\mathcal{GP},t}))+(1-p_t)B\hfill\tag{by Assumption~\ref{assumption:rkhs_noise}}.
\end{align*} As a result, by Lemma~\ref{lemma:ineq} and with probability $1-\delta'$, $\delta'=\frac{\delta}{2}$, \begin{align*}
Y_T
\leq \sqrt{ -2 \log \delta' \sum_{t=1}^T 
\Big( B + (1 - p_t)B + 2p_t\beta_t \sigma_{t-1}(x_{\mathcal{GP},t}) \Big)^2 }.
\end{align*}

With some simple algebra and with probability $1-\delta'-\delta'=1-\delta$, we can upper bound the cumulative regret as \begin{align*}
    R_T&=\sum_{t=1}^Tr_t\\
    &\leq \underbrace{\sum_{t=1}^T p_t(2\beta_t\sigma_{t-1}(x_{\mathcal{GP},t}))}_{A}+\underbrace{\sum_{t=1}^T (1-p_t)\nu_t}_{B}\\
    &\ \ \ \ \ +\underbrace{\sqrt{-2\text{log}\delta'\sum_{t=1}^T(B+(1-p_t)B+ 2p_t\beta_t\sigma_{t-1}(x_{\mathcal{GP},t}))^2}}_{C}\\
    &\leq \underbrace{\beta_T\mathcal{O}(\sqrt{T\gamma_T})}_{A}+\underbrace{B\mathcal{O}(\text{log}T)}_{B}+\underbrace{B\mathcal{O}(\sqrt{T})+B\mathcal{O}(\text{log}T)+\beta_T\mathcal{O}(\sqrt{T\gamma_T})}_{C}\hfill\tag{by Lemma~\ref{lemma:sumvarbd}}\\
    &=B\mathcal{O}(\sqrt{T})+\beta_T\mathcal{O}(\sqrt{T\gamma_T}).
\end{align*}

\subsection{Proof of Theorem~\ref{thm:gpj}}
\label{sec:theorem2}
The process of selecting $x_t$ via \texttt{LLINBO-Justify} can be written as \[
x_t=\begin{cases}
    x_{\mathcal{GP},t}\ \text{if}\ \alpha_{\text{UCB}}(x_{\text{LLM},t},F_{t-1})<\alpha_{\text{UCB}}(x_{\mathcal{GP},t},F_{t-1})-\psi_t\\
    x_{\text{LLM},t}\ \text{else}
\end{cases}.
\]

Note that no matter which cases is fulfilled, $x_t$ is the $\psi_t$-suboptimal of $\alpha_{\text{UCB}}(\cdot,\cdot)$. Also, for $\delta\in(0,1)$ and $\beta_t$ is selected the same as in Assumption~\ref{assumption:UCB}. We can upper bound $r_t$ by
\begin{align*}
    r_t&=f(x^*)-f(x_t)\\
    &\leq \underbrace{\mu_{t-1}(x^*)+\beta_t\sigma_{t-1}(x^*)}_{A}-\underbrace{f(x_t)}_{B}\hfill\tag{by Lemma~\ref{lemma:UCBconcentrate}}\\
    &\leq \underbrace{\mu_{t-1}(x_{\mathcal{GP},t})+\beta_t\sigma_{t-1}(x_{\mathcal{GP},t})}_{A}-\underbrace{(\mu_{t-1}(x_t)-\beta_t\sigma_{t-1}(x_t))}_{B}\hfill\tag{by Lemma~\ref{lemma:UCBconcentrate}}\\
    &\leq \underbrace{\mu_{t-1}(x_t)+\beta_t\sigma_{t-1}(x_t)+\psi_t}_{A}-\underbrace{(\mu_{t-1}(x_t)-\beta_t\sigma_{t-1}(x_t))}_{B}\\
    &\leq \psi_t+2\beta_t\sigma_{t-1}(x_t).
\end{align*}

By assuming that $\psi_t=\mathcal{O}(1/\sqrt{t})$ and by the Lemma 4 in \cite{chowdhury2017kernelized}, which allows us to bound the sum of variance at the evaluated designs, we have\begin{align*}R_T&=\sum_{t=1}^Tr_t\leq\sum_{i=1}^T\delta_t+2\beta_T\sum_{i=1}^T\sigma_{t-1}(x_t)=\mathcal{O}(\sqrt{T})+\beta_T\mathcal{O}(\sqrt{T\gamma_T})\hfill\tag{by Lemma~\ref{lemma:sumvarbd}}.
\end{align*} 
\subsection{Proof of Theorems~\ref{thm:uniform_bound_cgp} and \ref{thm:CGPRegret}}
\label{sec:theorem34}
We first introduce a lemma that includes some algebraic derivations, which will be useful for proving the subsequent results.

\begin{lemma}[Appendix C in \cite{chowdhury2017kernelized}]
\label{lemma:algebra}
For any vector $\epsilon$ and let $\hat{\lambda}_t=1+2/t$, the following holds algebraically
\begin{align*}
    \left|k_{t}(x)^\top(K_{t-1} + \hat{\lambda}_t I)^{-1} \epsilon \right| 
    &\leq \hat{\lambda}_t^{-1/2} \sigma_{t-1}(x) \sqrt{\epsilon^\top K_{t-1}(K_{t-1} + \hat{\lambda}_t I)^{-1} \epsilon},\\
    \epsilon^\top K_{t-1}(K_{t-1} + \hat{\lambda}_t I)^{-1} \epsilon 
    &\leq \epsilon^\top \left( (K_{t-1} + (1 - \hat{\lambda}_t)I)^{-1} \right) \epsilon,
\end{align*}
\end{lemma}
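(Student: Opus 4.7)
The plan is to treat both claims as pure linear algebra, with everything keyed off the feature-space representation of the kernel. Let $\Phi=[\phi(x_1),\ldots,\phi(x_{t-1})]$ be the operator whose columns are the feature-map images of the training inputs, so that $K_{t-1}=\Phi^\top\Phi$, $k_{t-1}(x)=\Phi^\top\phi(x)$, and set $V_t:=\Phi\Phi^\top+\hat{\lambda}_t I$. The identity that does the heavy lifting throughout is the push-through relation $\Phi^\top V_t^{-1}=(K_{t-1}+\hat{\lambda}_t I)^{-1}\Phi^\top$, immediate from the matrix equality $(\Phi^\top\Phi+\hat{\lambda}_t I)\Phi^\top=\Phi^\top(\Phi\Phi^\top+\hat{\lambda}_t I)$.

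For the first inequality, I would first use push-through to rewrite
\[
k_{t-1}(x)^\top(K_{t-1}+\hat{\lambda}_t I)^{-1}\epsilon \;=\; \phi(x)^\top V_t^{-1}\Phi\epsilon,
\]
and then apply Cauchy--Schwarz in the inner product induced by the positive definite operator $V_t^{-1}$ to obtain $|\phi(x)^\top V_t^{-1}\Phi\epsilon|\leq\|\phi(x)\|_{V_t^{-1}}\|\Phi\epsilon\|_{V_t^{-1}}$. Each factor can then be rewritten back in kernel terms. A Woodbury computation on $V_t$ produces the standard RKHS identity
\[
\|\phi(x)\|_{V_t^{-1}}^{2} \;=\; \hat{\lambda}_t^{-1}\bigl[k(x,x)-k_{t-1}(x)^\top(K_{t-1}+\hat{\lambda}_t I)^{-1}k_{t-1}(x)\bigr] \;=\; \hat{\lambda}_t^{-1}\sigma_{t-1}^{2}(x),
\]
which delivers the prefactor $\hat{\lambda}_t^{-1/2}\sigma_{t-1}(x)$. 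For the second factor, push-through gives $\Phi^\top V_t^{-1}\Phi=(K_{t-1}+\hat{\lambda}_t I)^{-1}K_{t-1}$, hence $\|\Phi\epsilon\|_{V_t^{-1}}^{2}=\epsilon^\top K_{t-1}(K_{t-1}+\hat{\lambda}_t I)^{-1}\epsilon$. Chaining these two identifications with the Cauchy--Schwarz bound is exactly the first claim.

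For the second inequality, the plan is purely spectral. Because $K_{t-1}$ and $(K_{t-1}+\hat{\lambda}_t I)^{-1}$ commute, I would simultaneously diagonalize them via $K_{t-1}=U\Lambda U^\top$ and reduce the matrix inequality $K_{t-1}(K_{t-1}+\hat{\lambda}_t I)^{-1}\preceq(K_{t-1}+(1-\hat{\lambda}_t)I)^{-1}$ to the family of scalar comparisons $\lambda_i/(\lambda_i+\hat{\lambda}_t)\leq 1/(\lambda_i+1-\hat{\lambda}_t)$, one per eigenvalue, which become short algebraic rearrangements once $\hat{\lambda}_t=1+2/t$ is substituted. The main obstacle I expect here is that $K_{t-1}+(1-\hat{\lambda}_t)I=K_{t-1}-(2/t)I$ is not \emph{a priori} positive semidefinite on eigendirections where $\lambda_i$ is very small, so the spectral comparison has to be interpreted with care in those directions. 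To avoid dividing by small or negative quantities I would fall back on the operator identity $K_{t-1}(K_{t-1}+\hat{\lambda}_t I)^{-1}=I-\hat{\lambda}_t(K_{t-1}+\hat{\lambda}_t I)^{-1}$, which rewrites the left-hand side as a bounded perturbation of the identity and allows the comparison to be carried out between PSD perturbations of $I$ rather than between raw inverses. This matrix-positivity subtlety is where the bulk of the work will sit; the RKHS rewrite plus Cauchy--Schwarz in the first claim is otherwise entirely routine.
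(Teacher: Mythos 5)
Your treatment of the first inequality is correct and is essentially the argument of the cited source (the paper itself offers no proof of this lemma, deferring to Appendix~C of Chowdhury and Gopalan): the push-through identity $\Phi^\top V_t^{-1}=(K_{t-1}+\hat\lambda_t I)^{-1}\Phi^\top$, Cauchy--Schwarz in the $V_t^{-1}$ inner product, and the identifications $\|\phi(x)\|^2_{V_t^{-1}}=\hat\lambda_t^{-1}\sigma_{t-1}^2(x)$ and $\Phi^\top V_t^{-1}\Phi=(K_{t-1}+\hat\lambda_t I)^{-1}K_{t-1}$ are exactly the standard steps. One caveat you inherit from the paper's notation: the first identification requires $\sigma_{t-1}$ to be defined with regularizer $\hat\lambda_t$ rather than the noise variance $\lambda^2$ used in Section~2.1; that is how the cited source defines it, so this is a notational mismatch rather than an error on your part.

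For the second inequality your suspicion is right, but you stop short of the conclusion: the statement as printed is false, and neither a careful spectral comparison nor the rewriting $K_{t-1}(K_{t-1}+\hat\lambda_t I)^{-1}=I-\hat\lambda_t(K_{t-1}+\hat\lambda_t I)^{-1}$ can rescue it. Since $\hat\lambda_t=1+2/t$, the right-hand side is $\epsilon^\top(K_{t-1}-(2/t)I)^{-1}\epsilon$; if $K_{t-1}$ has an eigenvalue below $2/t$ (e.g., nearly duplicated inputs) this is negative in the corresponding eigendirection while the left-hand side is nonnegative, and even on directions with large eigenvalues the scalar comparison $\lambda_i/(\lambda_i+\hat\lambda_t)\leq 1/(\lambda_i-2/t)$ fails (take $\lambda_i=10$). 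The printed statement is a mis-transcription of the inequality actually proved in Chowdhury--Gopalan and actually invoked later in this paper's Appendix~A.3, namely
\[
\epsilon^\top K_{t-1}(K_{t-1}+\hat\lambda_t I)^{-1}\epsilon \;\leq\; \epsilon^\top\bigl((K_{t-1}+(\hat\lambda_t-1)I)^{-1}+I\bigr)^{-1}\epsilon,
\]
which follows from $K_{t-1}\preceq K_{t-1}+(\hat\lambda_t-1)I$ (note $\hat\lambda_t-1=2/t>0$), commutativity with $(K_{t-1}+\hat\lambda_t I)^{-1}$, and the identity $A(A+I)^{-1}=(A^{-1}+I)^{-1}$ applied to $A=K_{t-1}+(\hat\lambda_t-1)I\succ 0$. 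The resulting matrix is $\preceq I$, which is precisely what delivers the $\|\tilde\delta\|_2$ bound downstream. You should state and prove this corrected version rather than attempting the one as printed.
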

where $K_{t-1}$ denotes the Gram matrix at time $t$, defined identically as in the main paper but indexed with a subscript to emphasize its dependence on the data available up to time $t-1$.
Next, we derive the AF via models constructed by $\mathcal{D}_{t-1}\cup \{(x_{\text{LLM},t},\tilde{f}_{t-1,s}(x_{\text{LLM},t}))\}$, which we denoted those models as $\mathcal{M}_{t,s}\ \forall s\in I_t$.
\begin{lemma}(Lemma 1 in \cite{chen2025multi})
\label{lemma:expect}
Assuming \( \mathbb{E}_{\mathcal{M}_{t,s}}[\alpha(x, \mathcal{M}_{t,s})] \) exists, and there exists a function \( a : \mathbb{R} \rightarrow \mathbb{R} \) such that 
\[
\alpha(x; {F}^+_{t-1}) = \mathbb{E}_{g \sim {F}_{t-1}^+} [a(g(x))],
\]
then
\[
\alpha(x, {F}^+_{t-1}) = \mathbb{E}_{\mathcal{M}_{t,s}} [\alpha(x, \mathcal{M}_{t,s})].
\]
\end{lemma}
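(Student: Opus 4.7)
The plan is to establish Lemma~\ref{lemma:expect} via the tower property of conditional expectation, using the fact that sampling $g \sim F^+_{t-1}$ admits a two-stage description: first draw the value $Y = g(x_{\text{LLM},t})$ from the $F^+_{t-1}$-marginal at $x_{\text{LLM},t}$, then draw the remainder of $g$ conditional on that value and on $\mathcal{D}_{t-1}$. The key observation is that the second stage is exactly $\mathcal{M}_{t,s}$, since conditioning $\mathcal{GP}(\mathcal{D}_{t-1})$ on the point-value observation $g(x_{\text{LLM},t}) = Y$ yields the posterior $\mathcal{GP}$ fit to $\mathcal{D}_{t-1} \cup \{(x_{\text{LLM},t}, Y)\}$.

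First, I would apply the hypothesized representation of $\alpha$ to each realization of the random $\mathcal{GP}$ $\mathcal{M}_{t,s}$, writing $\alpha(x, \mathcal{M}_{t,s}) = \mathbb{E}_{g \sim \mathcal{M}_{t,s}}[a(g(x))]$. Taking expectation over $\mathcal{M}_{t,s}$ and invoking the tower property then gives
\[
\mathbb{E}_{\mathcal{M}_{t,s}}\!\bigl[\alpha(x, \mathcal{M}_{t,s})\bigr]
= \mathbb{E}_{Y}\!\Bigl[\mathbb{E}_{g \sim F^+_{t-1} \mid g(x_{\text{LLM},t}) = Y}\bigl[a(g(x))\bigr]\Bigr],
\]
where $Y$ is distributed according to the $F^+_{t-1}$-marginal at $x_{\text{LLM},t}$, i.e., the corresponding unconstrained $\mathcal{GP}(\mathcal{D}_{t-1})$-marginal truncated to $\{Y > \kappa_{t-1}\}$.

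Second, I would collapse the iterated expectation into a single expectation under $F^+_{t-1}$: the outer law of $Y$ together with the inner conditional law of $g$ given $g(x_{\text{LLM},t}) = Y$ reproduce the joint law of $g \sim F^+_{t-1}$. Thus,
\[
\mathbb{E}_{\mathcal{M}_{t,s}}\!\bigl[\alpha(x, \mathcal{M}_{t,s})\bigr]
= \mathbb{E}_{g \sim F^+_{t-1}}\bigl[a(g(x))\bigr]
= \alpha(x, F^+_{t-1}),
\]
where the last equality invokes the hypothesis applied to $F^+_{t-1}$. This chains the two expressions and closes the argument.

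The main subtle point I expect to handle is the treatment of the truncation defining $F^+_{t-1}$: one must argue that the event $\{g(x_{\text{LLM},t}) > \kappa_{t-1}\}$ is measurable with respect to $g(x_{\text{LLM},t})$ alone, so conditioning further on $g(x_{\text{LLM},t}) = Y$ with $Y > \kappa_{t-1}$ is equivalent to conditioning on $g(x_{\text{LLM},t}) = Y$ in the unconstrained $\mathcal{GP}(\mathcal{D}_{t-1})$. Once this measurability observation is in place, the rest reduces to a standard Fubini/tower argument combined with the closed-form conditioning formulas for $\mathcal{GP}$s, and the existence assumption on $\mathbb{E}_{\mathcal{M}_{t,s}}[\alpha(x, \mathcal{M}_{t,s})]$ justifies the interchange of expectations.
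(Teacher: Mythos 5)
The paper does not actually prove this lemma; it is imported verbatim as Lemma~1 of \citet{chen2025multi}, so there is no in-paper argument to compare against. Your reconstruction is the natural one and is essentially sound: decompose sampling from $F^+_{t-1}$ into first drawing $Y=g(x_{\text{LLM},t})$ from the truncated marginal and then drawing the rest of $g$ conditionally, note that the truncation event is measurable with respect to $g(x_{\text{LLM},t})$ alone so the inner conditional law is the unconstrained one, identify that inner law with $\mathcal{M}_{t,s}$, and finish by the tower property; the existence assumption licenses the interchange. This is the standard disintegration argument behind the Monte Carlo approximation of constrained-$\mathcal{GP}$ acquisitions, and it is presumably what the cited source does.

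Two points deserve sharper treatment than your sketch gives them. First, as literally stated the hypothesis only provides the representation $\alpha(x;F^+_{t-1})=\mathbb{E}_{g\sim F^+_{t-1}}[a(g(x))]$, yet your first step writes $\alpha(x,\mathcal{M}_{t,s})=\mathbb{E}_{g\sim\mathcal{M}_{t,s}}[a(g(x))]$; you need (and should state) the intended stronger reading that $\alpha(\cdot,F)$ is generated by the same $a$ for every posterior belief $F$, otherwise the right-hand side of the lemma is not even tied to $a$. Second, your key identification "conditioning $\mathcal{GP}(\mathcal{D}_{t-1})$ on $g(x_{\text{LLM},t})=Y$ yields the $\mathcal{GP}$ fit to $\mathcal{D}_{t-1}\cup\{(x_{\text{LLM},t},Y)\}$" is exact only if the fantasy pair is incorporated as a noise-free observation. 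In this paper the augmented surrogate is computed with a regularized inverse $(K^+_{t-1}+\hat{\lambda}_t I)^{-1}$, i.e.\ a noisy-likelihood update, and the sampled value $\tilde f_{t-1,s}(x_{\text{LLM},t})$ is drawn from the latent marginal $\mathcal{N}(\mu_{t-1},\sigma^2_{t-1})$ rather than the noisy predictive; with that combination the mixture of the $\mathcal{M}_{t,s}$ laws does not exactly reproduce $F^+_{t-1}$, so the identity holds exactly only under the noise-free-fantasy convention (or after adjusting the sampling distribution accordingly). Flagging which convention $\mathcal{M}_{t,s}$ uses is the one substantive gap to close; the rest of your argument goes through as written.
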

Lemma~\ref{lemma:expect} arrives at the conclusion that the AF under the $\mathcal{CGP}$ can be computed by the expectation of the AF across all models $\mathcal{M}_{t,s}$ for all $s\in I_t$ under certain conditions. Recall from Lemma~\ref{lemma:UCBconcentrate} that for the $\mathcal{GP}$ constructed using \( \mathcal{D}_{t-1} \), previously denoted by \( \mathcal{F}_{t-1} \), the difference between the posterior mean \( \mu_{t-1}(x) \) and the ground truth function \( f(x) \) can be bounded with a suitable \( \beta_t \). However, this bound does not directly apply to the $\mathcal{CGP}$, as it is constructed using both historical data and imagined data \( (x_{\text{LLM},t}, \tilde{f}_{t-1,s}(x_{\text{LLM},t})) \). The following lemma provides a bound on this difference using a newly constructed \( \tilde{\beta}_t \).
\begin{theorem}(Theorem~\ref{thm:uniform_bound_cgp} in the main paper)
\label{thm:theorem3mp}
Under Assumption~\ref{assumption:rkhs_noise}, for any $\delta \in (0, 1)$ and $T \in \mathbb{N}$, with probability at least $1 - \frac{\delta}{T}$, any sample index \( s\in I_t \), and any $t$, we have:
\[
| \mu^+_{t-1,s}(x) - f(x) |\leq\tilde{\beta}_t\sigma_{t-1}^+(x),
\]
where $\tilde{\beta}_t = 2B + 2R\sqrt{2( \gamma_t + 1 + \ln(4T/\delta))}+\sqrt{2\ln(4S_tT/\delta)}.$
\end{theorem}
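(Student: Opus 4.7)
The proof proposal is to insert a \emph{counterfactual} posterior mean between $\mu^+_{t-1,s}(x)$ and $f(x)$, so that the sampling randomness is decoupled from the usual RKHS concentration. Specifically, let $\tilde{\mu}^+_{t-1}(x)$ denote the posterior mean of the GP built on the \emph{hypothetical} augmented dataset $\mathcal{D}_{t-1}\cup\{(x_{\text{LLM},t},f(x_{\text{LLM},t}))\}$, i.e., with the imagined label $\tilde{f}_{t-1,s}(x_{\text{LLM},t})$ replaced by the unknown ground truth $f(x_{\text{LLM},t})$. Because the GP predictive variance depends only on inputs, this counterfactual GP still has posterior standard deviation $\sigma^+_{t-1}(x)$, which is exactly the denominator that appears in the theorem. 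We then write
\[
\mu^+_{t-1,s}(x)-f(x)=\underbrace{[\mu^+_{t-1,s}(x)-\tilde{\mu}^+_{t-1}(x)]}_{(A)}+\underbrace{[\tilde{\mu}^+_{t-1}(x)-f(x)]}_{(B)}
\]
and bound the two pieces separately.

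\textbf{Bounding (B).} Term $(B)$ is the standard RKHS concentration for a GP with $t$ observations. The first $t-1$ residuals are $R$-sub-Gaussian by Assumption~\ref{assumption:rkhs_noise}, and the $t$-th residual is identically zero (and so trivially $R$-sub-Gaussian). Applying Lemma~\ref{lemma:UCBconcentrate} with information gain $\gamma_t$ and failure probability $\delta/(4T)$ gives $|(B)|\leq\bigl(B+R\sqrt{2(\gamma_t+1+\ln(4T/\delta))}\bigr)\,\sigma^+_{t-1}(x)$ uniformly in $x$.

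\textbf{Bounding (A).} Because the posterior mean is linear in the observation vector, $(A)=k^+_t(x)^\top(K^+_t+\hat{\lambda}_tI)^{-1}e_t\cdot \tau_{t,s}$, where $\tau_{t,s}=\tilde{f}_{t-1,s}(x_{\text{LLM},t})-f(x_{\text{LLM},t})$ and $e_t$ is the last standard basis vector. By Lemma~\ref{lemma:algebra} applied with $\epsilon=\tau_{t,s}e_t$, the vector factor is bounded by $\sigma^+_{t-1}(x)\cdot|\tau_{t,s}|$, since $\hat{\lambda}_t^{-1/2}\leq 1$ and $e_t^\top K^+_t(K^+_t+\hat{\lambda}_tI)^{-1}e_t\leq 1$. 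To bound $|\tau_{t,s}|$, split
\[
\tau_{t,s}=[\tilde{f}_{t-1,s}(x_{\text{LLM},t})-\mu_{t-1}(x_{\text{LLM},t})]+[\mu_{t-1}(x_{\text{LLM},t})-f(x_{\text{LLM},t})].
\]
The first bracket is a zero-mean Gaussian with variance $\sigma_{t-1}^2(x_{\text{LLM},t})\leq k(x_{\text{LLM},t},x_{\text{LLM},t})\leq 1$; a Gaussian tail bound with failure probability $\delta/(2S_tT)$, union-bounded over the at most $S_t$ retained samples, yields $\sqrt{2\ln(4S_tT/\delta)}$. The second bracket is bounded by $\beta_t\leq B+R\sqrt{2(\gamma_{t-1}+1+\ln(4T/\delta))}$ via Lemma~\ref{lemma:UCBconcentrate} applied to the original posterior at $x_{\text{LLM},t}$ with failure probability $\delta/(4T)$ (using $\sigma_{t-1}(x_{\text{LLM},t})\leq 1$). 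Summing $(A)$ and $(B)$, using $\gamma_{t-1}\leq\gamma_t$, and collecting failure probabilities $\delta/(4T)+\delta/(2T)+\delta/(4T)=\delta/T$ gives the claimed $\tilde{\beta}_t=2B+2R\sqrt{2(\gamma_t+1+\ln(4T/\delta))}+\sqrt{2\ln(4S_tT/\delta)}$.

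\textbf{Main obstacle.} The delicate step is extracting a clean constant of one in front of $\sigma^+_{t-1}(x)|\tau_{t,s}|$ in $(A)$, which requires verifying that $e_t^\top K^+_t(K^+_t+\hat{\lambda}_tI)^{-1}e_t\leq 1$ (immediate since this matrix has eigenvalues in $[0,1)$) and that the $\hat{\lambda}_t^{-1/2}$ factor from Lemma~\ref{lemma:algebra} cooperates; otherwise a loose constant would spoil the match with the stated $\tilde{\beta}_t$. A secondary subtlety is the measurability/conditioning argument: the Gaussian tail bound on $\tilde{f}_{t-1,s}(x_{\text{LLM},t})-\mu_{t-1}(x_{\text{LLM},t})$ must hold conditionally on the filtration $\mathcal{F}_{t-1}$ and jointly with the retention event $\{\tilde{f}_{t-1,s}(x_{\text{LLM},t})>\kappa_{t-1}\}$, which is handled by proving the bound on the underlying unconditional sample and then intersecting with the retention event.
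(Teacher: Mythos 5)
Your proposal is correct and takes essentially the same route as the paper: your counterfactual split coincides term-by-term with the paper's decomposition, with your term (B) matching the paper's bias-plus-noise terms (bounded via Lemma~\ref{lemma:UCBconcentrate} with the $t$-th residual set to zero) and your term (A) matching the paper's last term in $\tilde{\delta}$ (bounded via Lemma~\ref{lemma:algebra}, a Gaussian tail bound union-bounded over the $S_t$ samples, and the concentration of $\mu_{t-1}$ at $x_{\text{LLM},t}$), with the same failure-probability bookkeeping giving $\tilde{\beta}_t$. Your explicit handling of the retention-event conditioning is a minor refinement that the paper leaves implicit.
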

\begin{proof}
As $s$ is fixed and we focusing on deriving the difference between $\mu_{t-1,s}^+(x)$ and $f(x)$, we drop the subscript $s$ for simplicity. Let \( k_{t-1}^+ \) and \( K_{t-1}^+ \) denote the kernel vector and Gram matrix, respectively, defined as in Section~2.1, except with the input set augmented to include \( x_{\text{LLM},t} \); that is, the input consists of the union of the previously observed designs \( x_1, \dots, x_{t-1} \) and the LLM-suggested point \( x_{\text{LLM},t} \). Let $\tilde{\delta}=f(x_{\text{LLM},t})-\tilde{f}_{t-1}(x_{\text{LLM},t})$, one can express the term $|\mu_{t-1}^+(x)-f(x)|$ as
\begin{align*}
| \mu^+_{t-1}(x) - f(x) | 
&\leq | f(x) - k^+_{t-1}(x)^\top \left( K^+_{t-1} + \hat{\lambda}_t I \right)^{-1} [f(x_1),...,f(x_{t-1}),f(x_{\text{LLM},t})]^\top|\\
&\ \ \ \ \ + | k^+_{t-1}(x)^\top ( K^+_{t-1} + \hat{\lambda}_t I )^{-1} [\delta_1,...,\delta_{t-1},\tilde{\delta}]^\top|\hfill\tag{by (\ref{eq:lemma1-1})}\\
&\underbrace{\leq | f(x) - k^+_{t-1}(x)^\top \left( K^+_{t-1} + \hat{\lambda}_t I \right)^{-1} [f(x_1),...,f(x_{t-1}),f(x_{\text{LLM},t})]^\top|}_{A}\\
&\ \ \ \ \ + \underbrace{| k^+_{t-1}(x)^\top ( K^+_{t-1} + \hat{\lambda}_t I )^{-1} [\delta_1,...,\delta_{t-1},0]^\top|}_{B}\\
&\ \ \ \ \ + \underbrace{| k^+_{t-1}(x)^\top ( K^+_{t-1} + \hat{\lambda}_t I )^{-1} [0,...,0,\tilde{\delta}]^\top|}_{C}\hfill\tag{by triangle inequality}.
\end{align*}
Note that terms A and B can be bounded by $B+R\sqrt{2( \gamma_t + 1 + \ln(2T/\delta)))}$ with probability at least $1-\frac{\delta}{2T}$ according to (\ref{eq:lemma1-2}). Based on Lemma~\ref{lemma:algebra}, we can further bound the term $C$ as \begin{align*}
\left| {k^{+}_{t-1}(x)}^\top (K^{+}_{t-1} + \hat{\lambda}_t I)^{-1} [0,...,0,\tilde{\delta}]^\top \right| &\leq \hat{\lambda}_t^{-1/2} \, \sigma^{+}_{t-1}(x) \, \sqrt{ \begin{bmatrix} 0 & \tilde{\delta} \end{bmatrix} K^{+}_{t-1} (K^{+}_{t-1} + \hat{\lambda_t} I)^{-1} \begin{bmatrix} 0 & \tilde{\delta} \end{bmatrix}^\top }.
\end{align*}
With probability $1-\frac{\delta}{4T}-\frac{\delta}{4T}=1-\frac{\delta}{2T}$ and by Lemma~\ref{lemma:algebra}, the square root part of the above equation can be further simplified as\begin{align*}    
&\sqrt{ \begin{bmatrix} 0 & \tilde{\delta} \end{bmatrix} K^{+}_{t-1} (K^{+}_{t-1} + \hat{\lambda_t} I)^{-1} \begin{bmatrix} 0 & \tilde{\delta} \end{bmatrix}^\top}\\
&\leq \sqrt{ \begin{bmatrix} 0 & \tilde{\delta} \end{bmatrix} K^{+}_{t-1} (K^{+}_{t-1} + (1-\hat{\lambda_t}) I^{-1}+I)^{-1} \begin{bmatrix} 0 & \tilde{\delta} \end{bmatrix}^\top}\\
&\leq ||\tilde{\delta}||_2\\
&\leq |f(x_{\text{LLM},t})-\tilde{f}_{t-1}(x_{\text{LLM},t})|\\
&\leq |f(x_{\text{LLM},t})-\mu_{t-1}(x_{\text{LLM},t})|+|\mu_{t-1}(x_{\text{LLM},t})-\tilde{f}_{t-1}(x_{\text{LLM},t})|\\
&\leq (B + R\sqrt{2( \gamma_t + 1 + \ln(4T/\delta)))})\sigma_{t-1}(x_{\text{LLM},t})\\
&\ \ \ +\sqrt{2\ln(4S_tT/\delta)}\sigma_{t-1}(x_{\text{LLM},t})\hfill\tag{by Chernoff bound}.
\end{align*}
Note that $\tilde{f}_{t-1}(x_{\text{LLM},t})$ is sampled from a normal distribution ($F_{t-1}$) with mean $\mu_{t-1}(x_{\text{LLM},t})$ and variance $\sigma_{t-1}^2(x_{\text{LLM},t})$. In this case, one can apply the Chernoff Bound to control the difference between all the samples and the mean response of the $\mathcal{GP}$. As a result, term $C$ can be bounded by $(B + R\sqrt{2( \gamma_t + 1 + \ln(4T/\delta))}+\sqrt{2\ln(4S_tT/\delta)})\sigma_{t-1}^+(x)$ with high probability. Finally, by combining with term A, and with probability $1-\frac{\delta}{2T}-\frac{\delta}{2T}=1-\frac{\delta}{T}$, we have\begin{align*}
| \mu^+_{t-1}(x) - f(x) |&\leq(2B + 2R\sqrt{2( \gamma_t + 1 + \ln(4T/\delta))}+\sqrt{2\ln(4S_tT/\delta)})\sigma_{t-1}^+(x)\\
&=\tilde{\beta}_t\sigma_{t-1}^+(x),
\end{align*}
where $\tilde{\beta}_t=2B + 2R\sqrt{2( \gamma_t + 1 + \ln(4T/\delta))}+\sqrt{2\ln(4S_tT/\delta)}$.
\end{proof}

\begin{lemma}
\label{lemma:varbd}
For a set of $S \geq 2$ samples $X_1, \dots, X_S$, if $|X_s| \leq c$, $\forall s \in [S]$, then the sample variance satisfies:
\[
\varsigma = \frac{1}{S-1} \sum_{s=1}^S (X_s - \bar{X})^2\leq 2c^2.
\]  

\end{lemma}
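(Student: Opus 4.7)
The plan is to prove the bound via the standard computational form of the sample variance, rather than expanding $(X_s-\bar X)^2$ directly with the triangle inequality (which would only yield the weaker bound $8c^2$). The key algebraic identity is
\[
\frac{1}{S-1}\sum_{s=1}^{S}(X_s-\bar X)^2 \;=\; \frac{1}{S-1}\left(\sum_{s=1}^{S} X_s^2 \;-\; S\bar X^2\right),
\]
which follows by expanding the square and using $\sum_{s=1}^S X_s = S\bar X$. This form is convenient because each term $X_s^2$ can be individually bounded by the hypothesis, while the subtracted $S\bar X^2$ is nonnegative and can simply be dropped.

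From here, I would bound $\sum_{s=1}^S X_s^2 \leq Sc^2$ using $|X_s|\leq c$, and discard the $-S\bar X^2$ term since it only helps. This yields
\[
\varsigma \;\leq\; \frac{Sc^2}{S-1}.
\]
The final step is to observe that the map $S \mapsto S/(S-1)$ is decreasing for $S\geq 2$ and achieves its maximum at $S=2$, where it equals $2$. Therefore $S/(S-1)\leq 2$ for all $S\geq 2$, giving $\varsigma \leq 2c^2$, as claimed.

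There is really no obstacle here: the argument is a two-line algebraic manipulation followed by the elementary monotonicity observation on $S/(S-1)$. The only subtlety worth flagging is the choice to use the computational form of the variance instead of the triangle inequality on $|X_s - \bar X| \leq |X_s|+|\bar X| \leq 2c$; the latter produces a constant of $8$ rather than $2$, so the former is essential for obtaining the sharp constant stated in the lemma.
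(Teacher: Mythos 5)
Your proof is correct, and it ends with exactly the same chain as the paper, namely $\varsigma \leq \frac{S}{S-1}c^2 \leq 2c^2$ with the observation that $S/(S-1)$ is maximized at $S=2$. The difference lies in how the middle inequality is justified: the paper simply writes $\frac{1}{S-1}\sum_{s=1}^S |X_s-\bar X|^2 \leq \frac{S}{S-1}c^2$ without explanation, which invites the (false) per-term reading $|X_s-\bar X|\leq c$; that reading fails, e.g., for $X_1=c$, $X_2=X_3=-c$, where $|X_1-\bar X|=4c/3$. Your route through the computational identity
\[
\frac{1}{S-1}\sum_{s=1}^{S}(X_s-\bar X)^2=\frac{1}{S-1}\Bigl(\sum_{s=1}^{S}X_s^2-S\bar X^2\Bigr)\leq\frac{S}{S-1}c^2
\]
is precisely the argument that makes this step valid, since each $X_s^2\leq c^2$ and $-S\bar X^2\leq 0$ can be dropped. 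So your proposal is not merely equivalent to the paper's proof; it supplies the justification the paper glosses over, and your remark that the crude triangle-inequality bound $|X_s-\bar X|\leq 2c$ would only give $\frac{S}{S-1}\cdot 4c^2$ correctly identifies why the computational form is the right tool for the stated constant.
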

\begin{proof}
Let $\bar{X}$ be the sample mean as  
$\bar{X} = \frac{1}{S} \sum_{s=1}^S X_s$.  
This proof follows the definition of sample variance  
\[
\varsigma = \frac{1}{S-1} \sum_{s=1}^S (X_s - \bar{X})^2 = \frac{1}{S-1} \sum_{s=1}^S |X_s - \bar{X}|^2 \leq \frac{S}{S-1}c^2\leq 2c^2.
\]  
\end{proof}

Now we are ready to derive the upper bound for the cumulative regret. Note that $x_t$ is selected as the maximizer of the $\mathcal{CGP}$-UCB, which means \[
\bar{\mu}_{t-1}(x_t)+\tilde{\beta}_t\sqrt{\sigma_{t-1}^+(x_t)^2+s_{t-1}^2(x_t)}\geq \bar{\mu}_{t-1}(x)+\tilde{\beta}_t\sqrt{\sigma_{t-1}^+(x)^2+s_{t-1}^2(x)}\ \forall x\in\mathcal{X}.
\]
We first deal with the error cause by $s^2_{t-1}(x)$, which is the sample variance of the predicted mean at $x$, or namely, $k_{t-1}^+(x)(K_{t-1}^+-\hat{\lambda}_{t}I)^{-1}(y_1,...,y_{t-1},\tilde{f}_{t-1,s}(x_{\text{LLM},t}))^\top\ \forall s\in I_t$. Note that there is no uncertainty in $k_{t-1}^+(x)(K_{t-1}^+-\hat{\lambda}_{t}I)^{-1}$ and also $(y_1,...,y_{t-1})$, hence we can substract it and simply consider the variance of \[k_{t-1}^+(x)(K_{t-1}^+-\hat{\lambda}_{t}I)^{-1}\begin{bmatrix}
    0 & \tilde{f}_{t-1,s}(x_{\text{LLM},t})
\end{bmatrix}^\top\ \forall s\in I_t.\]
In order to apply Lemma~\ref{lemma:varbd}, we first derive the upper bound for $k_{t-1}^+(x)(K_{t-1}^+-\hat{\lambda}_{t}I)^{-1}\begin{bmatrix}
    0 & \tilde{f}_{t-1,s}(x_{\text{LLM},t})-M
\end{bmatrix}^\top\ \forall s\in I_t$, where $M=\frac{1}{|I_t|}\sum_{s\in I_t}\tilde{f}_{t-1,s}(x_{\text{LLM},t})$. With probability$1-\frac{\delta}{4T}$ and by Lemma~\ref{lemma:algebra}, we have\begin{align*}
&k_{t-1}^+(x)(K_{t-1}^+-\hat{\lambda}_{t}I)^{-1}\begin{bmatrix}
    0 & \tilde{f}_{t-1,s}(x_{\text{LLM},t})-M
\end{bmatrix}^\top\\
&\leq \hat{\lambda}_t^{-1/2} \sigma_{t-1}^{+}(x) \sqrt{ [0, \tilde{f}_{t-1,s}(x_{\text{LLM},t}) - M]^\top (K_{t-1}^{+} + \hat{\lambda}_t I)^{-1} [0, \tilde{f}_{t-1,s}(x_{\text{LLM},t}) - M] }\\
&\leq \hat{\lambda}_t^{-1/2} \sigma_{t-1}^{+}(x) \sqrt{ (\tilde{f}_{t-1,s}(x_{\text{LLM},t})-M)^2}\\
&\leq \hat{\lambda}_t^{-1/2} \sigma_{t-1}^{+}(x) \sqrt{ (\tilde{f}_{t-1,s}(x_{\text{LLM},t})-\mu_{t-1}(x_{\text{LLM},t}))^2}\\
&= \hat{\lambda}_t^{-1/2} \sigma_{t-1}^{+}(x) | \tilde{f}_{t-1,s}(x_{\text{LLM},t})-\mu_{t-1}(x_{\text{LLM},t})|\\
&\leq \sigma_{t-1}^{+}(x) \sqrt{2\ln(4S_tT/\delta)},
\end{align*}
where the last inequality uses the fact that $\hat{\lambda}\leq1$ and by the Chernoff Bound. In this case, by Lemma~\ref{lemma:varbd}, the variance of $k_{t-1}^+(x)(K_{t-1}^+-\hat{\lambda}_{t}I)^{-1}\begin{bmatrix}
    0 & \tilde{f}_{t-1,s}(x_{\text{LLM},t})
\end{bmatrix}^\top\ \forall s\in I_t$ can be bounded as\begin{equation}
    s^2_{t-1}(x)\leq 4\sigma_{t-1}^{+}(x)^2\ln(4S_tT/\delta).
\label{eq:samplevarbd}
\end{equation}
Note that by Theorem~\ref{thm:theorem3mp}, the ground truth $f(x_t)$ can be bounded by ${\mu}^+_{t-1,s}(x)\pm \tilde{\beta}_t\sigma_{t-1}^+(x)$ with high probability for all index $s$ in $I_t$, this also holds for the mean over all $s\in I_t$, that is, \[
\bar{\mu}^+_{t-1}(x)- \tilde{\beta}_t\sigma_{t-1}^+(x)\leq f(x)\leq \bar{\mu}^+_{t-1}(x)+ \tilde{\beta}_t\sigma_{t-1}^+(x).\] With probability at least $1-\delta$, we can derive the upper bound for $r_t=f(x^*)-f(x_t)$ as
\begin{align*}
r_{t} &= f(x^*) - f(x_t) \\
&\leq \bar{\mu}_{t-1}^+({x}^*) + \tilde{\beta}_t \sigma_{t-1}^+(x^*) - \left(\bar{\mu}_{t-1}^+(x_{t}) - \tilde{\beta}_t\sigma_{t-1}^+(x_{t})\right)\\
&=\left(\bar{\mu}_{t-1}^+({x}^*)-\bar{\mu}_{t-1}^+(x_{t})\right) + \tilde{\beta}_t \sigma_{t-1}^+(x^*) + \tilde{\beta}_t\sigma_{t-1}^+(x_{t})\\
&\leq \tilde{\beta}_{t} \sqrt{\sigma_{t-1}^+(x_{t})^2 + s^2_{t-1}(x_{t})} 
- \tilde{\beta}_{t} \sqrt{\sigma_{t-1}^+(x^*)^2 + s^2_{t-1}(x^*)}+\tilde{\beta}_t \sigma_{t-1}^+(x^*) +\tilde{\beta}_t\sigma_{t-1}^+(x_t) \\
&\leq \tilde{\beta}_{t}\sigma_{t-1}^+(x_t) + \tilde{\beta}_{t}s_{t-1}(x_t)-\tilde{\beta}_{t}\sigma_{t-1}^+(x^*)+\tilde{\beta}_t \sigma_{t-1}^+(x^*) +\tilde{\beta}_t\sigma_{t-1}^+(x_t) \\ 
&=2\tilde{\beta}_{t}\sigma_{t-1}^+(x_t) + \tilde{\beta}_{t}s_{t-1}(x_t) \\
&\leq \mathcal{O}(\sqrt{\gamma_t + \ln(t)}) \sigma_{t-1}^+(x_{t}) + \mathcal{O}(\sqrt{\gamma_t}\ln(t)/t) \sigma_{t-1}^+(x_{t})\hfill\tag{by (\ref{eq:samplevarbd}) and Theorem~\ref{thm:theorem3mp}} \\
&\leq \mathcal{O}(\sqrt{\gamma_t+\ln(t)} \sigma_{t-1}^+(x_{t}).
\end{align*}

The cumulative regret can be bounded as \begin{align*}
    R_t=\sum_{i=1}^T r_t&=\sum_{i=1}^T \mathcal{O}(\sqrt{\gamma_t+\ln(t)}) \sigma_{t-1}^+(x_{t})\\
    &\leq \mathcal{O}(\sqrt{\gamma_T+\ln(T)})\sum_{i=1}^T\sigma_{t-1}^+(x_{t})\\
    &\leq \mathcal{O}(\sqrt{\gamma_T+\ln(T)}) \mathcal{O}(\sqrt{T\gamma_T})\hfill\tag{by Lemma~\ref{lemma:sumvarbd}}\\
    &=\mathcal{O}(\sqrt{T\gamma_T(\gamma_T+\ln(T)}).
\end{align*}

\section{Additional Experiments on HAIC BO} \label{app:pibo}
While the scale of external information considered in previous HAIC works is not directly comparable to the setting of either \textit{LLM-assisted BO} or the proposed methods, in this section we compare our approach with $\pi$BO (\cite{hvarfnerpi}) on the Branin-2$D$ and Levy-2$D$ functions by replace human's effort on suggesting $\pi(x)$, the preference function, using LLMs. Specifically, we provide the problem context and the initial dataset as input to the LLM. For each function (defined on $[0,1]^2$), we then randomly select 100 points $\{z_i\}_{i=1}^{100}$ and query the LLM for the probability of each point being the optimum, denoted $p_i,\ i \in [100]$. To approximate a continuous prior $\pi(x)$, we normalize the probabilities to sum to one and apply Kernel Density Estimation. The hyperparameter $\beta$ is set to $T/100$, following the settings in~\cite{hvarfnerpi}, and all other configurations remain the same as in Section~\ref{sec:num}. The acquisition function in $\pi$BO is given by
\[
\alpha_{\pi}(x,F_{t-1}) = \alpha(x,F_{t-1}) \, \pi(x)^{\beta/t},
\]
where $\alpha$ is the acquisition function, which we set to UCB in this experiment. We also evaluate a dynamic variant in which $\pi(x)$ is updated at each iteration by re-querying the LLM with both the problem context and the historical observations, where we call it $\pi$BO-dynamic.

\begin{figure}[ht]
    \centering
    \begin{subfigure}[t]{0.48\textwidth}
\includegraphics[width=\linewidth]{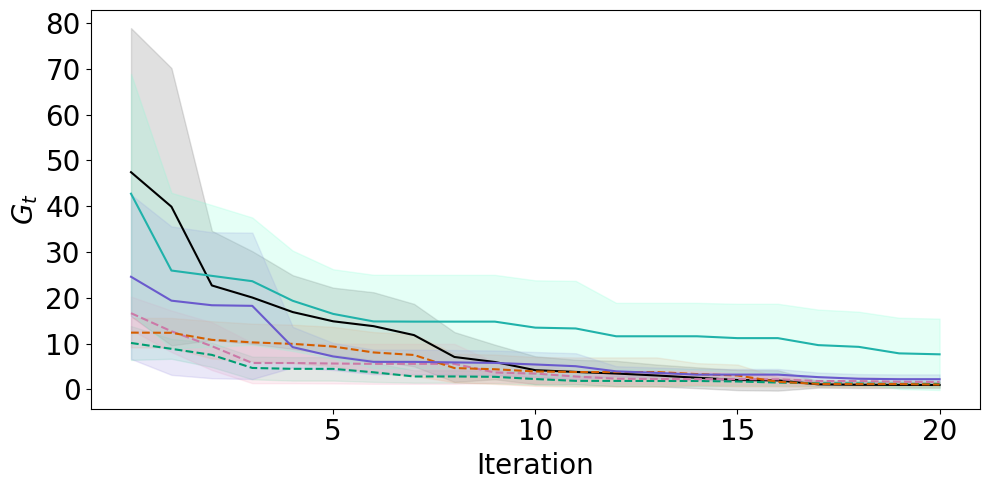}
        \caption{Branin-2$D$}
    \end{subfigure}
    \hfill
    \begin{subfigure}[t]{0.48\textwidth}
\includegraphics[width=\linewidth]{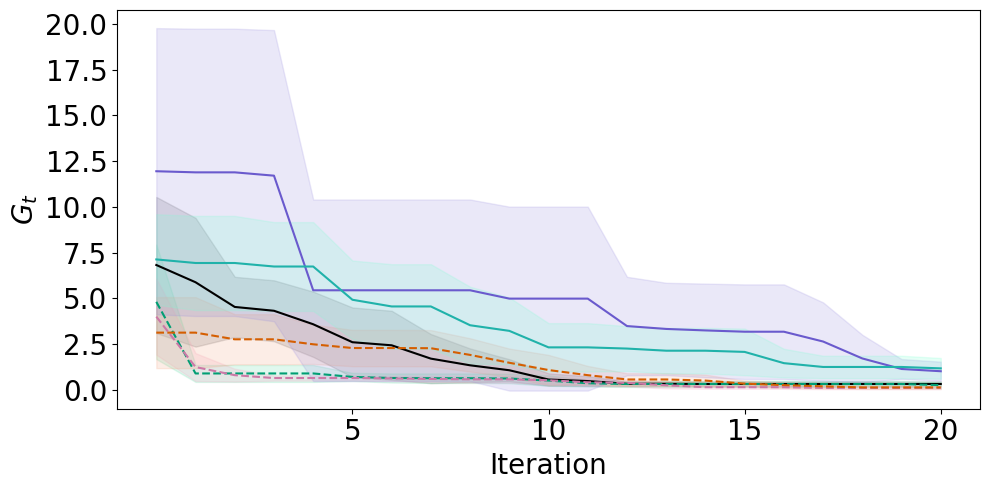}
        \caption{Levy-2$D$}
    \end{subfigure}

\caption{
Regret comparison between proposed methods and $\pi$BO. Each line shows the regret $G_t$, shaded with 95\% confidence intervals.
\textbf{Proposed methods}:\dashedline{darkgreen}{1.5em} \texttt{LLINBO-Transient}, 
\dashedline{magenta}{1.5em} \texttt{LLINBO-Justify}, 
\dashedline{brickred}{1.5em} \texttt{LLINBO-Constrained}. \textbf{Baselines}:
\solidline{piBO}{1.5em} $\pi$BO,   
\solidline{piBOi}{1.5em} $\pi$BO-dynamic
\solidline{black}{1.5em} BO.
}
\label{fig:hporesa}
\end{figure}

\section{Selection between the Proposed Algorithms and Hyperparameters}
\label{app:select}

\paragraph{Choosing between the proposed methods.} 
It is noteworthy that the regret bounds for all three methods contain no variables or assumptions on the LLMs, thereby ensuring the \textbf{no-harm guarantees} introduced by \cite{xu2024principled}. In other words, the quality of LLM suggestions does not degrade their performance, and the choice among them can therefore be guided by practical needs. \texttt{LLINBO-Transient} is the most interpretable and practical for non-expert users, employing an explicit temporal schedule to reduce LLM influence over time. Importantly, the reliance on LLM suggestions diminishes as the probability of querying the LLM approaches zero, making this variant suitable for practitioners prioritizing transparency, simplicity, or scenarios where accessing LLMs is costly. \texttt{LLINBO-Justify} adopts a more data-driven approach by learning a justification threshold for LLM suggestions without altering the BO machinery, thereby maintaining interpretability while offering adaptive control—an attractive option when flexibility is desired without structural changes. Finally, \texttt{LLINBO-Constrained} is the most robust and theoretically grounded variant, integrating LLMs and BO through a probabilistic constraint that automatically hedges against finite-sample uncertainty and requires no additional hyperparameter tuning, making it particularly well-suited for safety-critical or resource-constrained settings where minimizing risk and avoiding hyperparameter tuning are essential.

\paragraph{Selecting hyperparameters.} 
We acknowledge that leveraging LLMs for BO is still in its early stages. As such, tuning the algorithm’s parameters based on the LLM’s level of understanding remains an open but important research direction. Nevertheless, we outline below general-purpose strategies for selecting these parameters. For $p_t$ in \texttt{LLINBO-Transient}, our approach introduces a diminishing reliance on the LLM over time. We therefore set $1 - p_t \in \mathcal{O}(1/t^2)$, which limits the influence of potentially unreliable LLM suggestions as optimization progresses. Indeed, we consistently observed that the LLM’s ability to exploit diminishes rapidly over time, unsurprising since LLMs lack explicit surrogate modeling and calibrated uncertainty; however, when the problem domain is well understood by the LLM (e.g., hyperparameter tuning on standard datasets), the increase in $p_t$ can be made more gradual. In contrast, the performance of the LLM is less critical in \texttt{LLINBO-Justify}, as this variant is primarily data-driven and can automatically hedge against unreliable suggestions. Following our theoretical results, we recommend using a conservative decreasing schedule for $\psi_t$ ($\mathcal{O}(1/t)$, Theorem~\ref{thm:gpj} in the main paper) and setting $\psi_0$ in a way that reflects the structure of the acquisition function. For instance, when using UCB, $\psi_0$ can be the posterior variance at the first LLM-suggested point, or in the case of Thompson Sampling, the difference between the maximum and minimum values in a posterior sample. Finally, \texttt{LLINBO-Constrained} was specifically designed to minimize the need for hyperparameter tuning, with the only parameter being the sampling size from the constrained GP, which should be dictated by available computational resources. We recommend starting with the largest feasible sample size and then gradually reducing it (as permitted by our theory) based on constraints. In our implementation, we began with a large sample size and reduced it at a rate of $\mathcal{O}(1/t^2)$, which offered a good balance between computational efficiency and performance.

\section{Numerical Experiments Details}
\label{sec:appb}
We utilize \texttt{GPT-3.5-turbo} as the LLM agent, selected for its demonstrated capability to generate high-quality responses. The temperature parameter is set to its default value of 1.0. Prompt structures for \texttt{LLAMBO} are primarily adapted from the methodology proposed by \cite{liu2024large}. For each task, we define a task-specific system prompt. Specifically, the system prompt for BBO is:
\textit{"You are an AI assistant that helps people find the maximum of a black-box function."}
and for hyperparameter tuning tasks:
\textit{"You are an AI assistant that helps me reduce the mean square error by tuning the hyperparameters in a machine learning model."}

We use \texttt{SingleTaskGP} in Python's BOTorch package \cite{balandat2020botorch} as the surrogate model when a statistical model is involved. Namely, its prior mean is set to be constant, where the constant is learned while training, and the kernel function is set to be matern 5/2 with automatic relevance determination. 
\subsection{Experimental Details for BBO}
\label{sec:appb1}

For the BBO task, we employ the following simulation functions: Levy-2$D$, Rastrigin-2$D$, Branin-2$D$, Bukin-2$D$, Hartmann-4$D$, and Ackley-6$D$, as implemented in the Virtual Library of Simulation Experiments \cite{surjanovic2013virtual}. Each function is rescaled to the unit hypercube $[0,1]^D$ , and a negative sign is applied to the response to convert the problem into a maximization task. A summary of these simulation functions is provided below.
\begin{itemize}
    \item  Levy-2$D$ 
    \[
w_i = 1 + \frac{x_i - 0.5}{4}, \quad i = 1, 2
\]
\[
f(x) = -\sin^2(\pi w_1) - \sum_{i=1}^{1} (w_i - 1)^2 \left[1 + 10 \sin^2(\pi w_i + 1)\right] - (w_2 - 1)^2 \left[1 + \sin^2(2\pi w_2)\right]
\]
\item Rastrigin-2$D$
\[
x'=10.24x-5
\]
\[
f(x) = -12 - \sum_{i=1}^{2} \left[{x'}_i^2 - 10 \cos(2\pi {x'}_i)\right]
\]
\item Branin-2$D$
\[
x_1' = 15x_1 - 5, \quad x_2' = 15x_2
\]
\[
f(x) = -\left(x_2' - \frac{5.1}{4\pi^2}x_1'^2 + \frac{5}{\pi}x_1' - 6\right)^2 - 10\left(1 - \frac{1}{8\pi}\right)\cos(x_1') - 10
\]
\item Bukin-2$D$
\[
x_1' = 20x_1 - 15, \quad x_2' = 6x_2 - 3
\]
\[
f(x) = -100 \sqrt{\left|x_2' - 0.01 x_1'^2\right|} - 0.01 \left|x_1' + 10\right|
\]
\item Hartmann-4$D$
\[
f(x) = -\sum_{i=1}^{4} a_i \exp\left(-\sum_{j=1}^{4} A_{ij} (x_j - P_{ij})^2\right)
\]
With constants:
\[
\begin{aligned}
a &= [1.0, 1.2, 3.0, 3.2] \\
A &= \begin{bmatrix}
10 & 3 & 17 & 3.5 \\
0.05 & 10 & 17 & 0.1 \\
3 & 3.5 & 1.7 & 10 \\
17 & 8 & 0.05 & 10
\end{bmatrix} \\
P &= 10^{-4} \times
\begin{bmatrix}
1312 & 1696 & 5569 & 124 \\
2329 & 4135 & 8307 & 3736 \\
2348 & 1451 & 3522 & 2883 \\
4047 & 8828 & 8732 & 5743
\end{bmatrix}
\end{aligned}
\]
\item Ackley-6$D$
\[
f(x) = -20 \exp\left(-0.2 \sqrt{\frac{1}{6} \sum_{i=1}^{6} x_i^2}\right) - \exp\left(\frac{1}{6} \sum_{i=1}^{6} \cos(2\pi x_i)\right) + 20 + e
\]
\end{itemize}

\paragraph{Prompts design for BBO task. }To facilitate effective reasoning by the LLM, each function is accompanied by a \colorbox{lightgreen}{\textbf{Description Card}}, which provides essential contextual information. The \colorbox{lightgreen}{\textbf{Description Card}} includes the following components:

\begin{itemize}
\item \texttt{Function Patterns}: A high-level summary of the function's characteristics, offering partial information to guide the LLM's reasoning. For example:

\textit{"Non-convex and multi-modal. The function exhibits a nearly flat outer region with a prominent central depression, resulting in multiple local optima surrounding a single global optimum. It is highly symmetric and separable, yet optimization remains challenging due to the abundance of local maxima."}

\item \texttt{Dimensionality}: Specifies the number of input dimensions. Given that the input space is normalized to the unit hypercube, this field simply indicates the dimensionality of the design space.
\end{itemize}
The \texttt{Function Patterns} included in each \colorbox{lightgreen}{\textbf{Description Card}} are derived from the benchmark function descriptions provided by \cite{surjanovic2013virtual}, and a summary of these patterns is presented in Table~\ref{tab:llmbbfofp}.

\begin{table}[htbp]
\centering
\renewcommand{\arraystretch}{1.2}
\begin{tabular}{>{\centering\arraybackslash}p{0.16\textwidth} p{0.83\textwidth}}
\toprule
\textbf{Simulation functions} & \colorbox{lightgreen}{\textbf{Description Card}}[\texttt{Function Patterns}]\\
\midrule
Levy-2$D$&highly multimodal but with a unique global maximum.\\
Rastrigin-2$D$& which is highly multimodal, non-convex function with a large number of regularly spaced local minima.\\
Branin-2$D$&smooth, multimodal benchmark with three global maxima\\
Bukin-2$D$&steep, narrow, and highly non-convex landscape with a sharp valley and a unique global maximum\\
Hartmann-4$D$&4-dimensional, non-convex, multi-modal and is composed of weighted, anisotropic Gaussian-like bumps centered at different points, making it highly non-separable and challenging to optimize.\\
Ackley-6$D$&6-dimensional, non-convex, and multi-modal. The function exhibits a nearly flat outer region and a large hole at the center, resulting in many local optima surrounding a single global optimum. It is highly symmetric and separable in nature, but optimization is still challenging due to the numerous local maxima.\\
\bottomrule
\end{tabular}
\caption{Function patterns used in the \colorbox{lightgreen}{\textbf{Description Card}} for each simulation function.}
\label{tab:llmbbfofp}
\end{table}

Next, we introduce \colorbox{lightpurple}{\textbf{Data Card}}, which collects the information of previously observed designs and the responses. For example, at iteration 4, the \colorbox{lightpurple}{\textbf{Data Card}} would be \textit{x: (0.2334, 0.12), f(x): 1.2311; x: (0.1217, 0.433), f(x): 1.091; x: (0.9, 0.5), f(x): 4.502; x: (0.108, 0.203), f(x): 3.22}.  

In the \texttt{LLAMBO} framework, candidate sampling is facilitated by a structured prompt designed to elicit a diverse set of potential query points. This mechanism is illustrated in the Candidate sampling phase of Table~\ref{tab:llmbpo}. At each iteration, we prompt LLM 10 times to generate a total of 10 candidate points. To enhance the diversity of these candidates, we follow the strategy outlined in \cite{liu2024large}, where the content of the \colorbox{lightpurple}{\textbf{Data Card}} is permuted across prompts.

The \texttt{LLAMBO} framework~\cite{liu2024large} introduces a hyperparameter \( \alpha = 0.1 \) to balance exploration and exploitation during the candidate sampling phase. At iteration \( t \), we compute the \colorbox{lightorange}{\textbf{Target Score}} based on the current observed values \( \{ y_i \} \) as follows:
\[
\colorbox{lightorange}{\textbf{Target Score}} =
\begin{cases}
    \min_i y_i + \alpha \cdot (\max_i y_i - \min_i y_i), & \text{for minimization}, \\
    \max_i y_i - \alpha \cdot (\max_i y_i - \min_i y_i), & \text{for maximization}.
\end{cases}
\]
This value serves as a dynamic threshold to guide the LLM in proposing candidates that are both competitive with current best observations and diverse enough to enable exploration.

In the \texttt{LLAMBO} framework, a surrogate prompt is used to estimate the predictive mean and variance at each candidate point generated by the candidate sampling prompt. This process corresponds to the Surrogate modeling phase illustrated in Table~\ref{tab:llmbpo}. To promote variability in the surrogate responses, we similarly permute the \colorbox{lightpurple}{\textbf{Data Card}} across prompts. Finally, an AF is applied to select the next query point. We adopt the Expected Improvement (EI) criterion~\cite{jones1998efficient}, consistent with the acquisition strategy employed in~\cite{liu2024large}.

In contrast, the \texttt{LLAMBO-light} variant bypasses explicit surrogate querying by prompting LLM directly with the problem formulation and historical observations to generate the next evaluation point. This streamlined design process corresponds to the Candidate generation phase shown in Table~\ref{tab:llmbpo}.

\begin{table}[htbp]
\centering
\renewcommand{\arraystretch}{1.2}
\begin{tabular}{>{\centering\arraybackslash}p{0.21\textwidth} p{0.78\textwidth}}
\toprule
\textbf{Phases} & \textbf{Prompts} \\
\midrule
Warmstarting \texttt{LLAMBO LLAMBO-light}& 
You are assisting me with maximizing a black-box function. The function is \colorbox{lightgreen}{\textbf{Description Card}}[\texttt{Function Patterns}]. Suggest \colorbox{lightgreen}{\textbf{Description Card}}[\texttt{Dimensionality}] promising starting points in the range \([0, 1]\textasciicircum{}{\colorbox{lightgreen}{\textbf{Description Card}}[\texttt{Dimensionality}]}\). Return the points strictly in JSON format as a list of \colorbox{lightgreen}{\textbf{Description Card}}[\texttt{Dimensionality}]-dimensional vectors. Do not include any explanations, labels, formatting, or extra text. The response must be strictly valid JSON. \\
Candidate sampling \texttt{LLAMBO}& 
The following are past evaluations of a black-box function. The function is \colorbox{lightgreen}{\textbf{Description Card}}[\texttt{Function Patterns}]. \colorbox{lightpurple}{\textbf{Data Card}} The allowable ranges for x is [0, 1]\textasciicircum{}\colorbox{lightgreen}{\textbf{Description Card}}[\texttt{Dimensionality}]. Recommend a new x that can achieve the function value of \colorbox{lightorange}{\textbf{Target Score}}. Return only a single \colorbox{lightgreen}{\textbf{Description Card}}[\texttt{Dimensionality}]-dimensional numerical vector with the highest possible precision. Do not include any explanations, labels, formatting, or extra text. The response must be strictly valid JSON. \\
Surrogate modeling \texttt{LLAMBO}& 
The following are past evaluations of a black-box function, which is \colorbox{lightgreen}{\textbf{Description Card}}[\texttt{Function Patterns}]. \colorbox{lightpurple}{\textbf{Data Card}} The allowable ranges for x is [0, 1]\textasciicircum{}\colorbox{lightgreen}{\textbf{Description Card}}[\texttt{Dimensionality}]. Predict the function value at x = $x$. Return only a single numerical value. Do not include any explanations, labels, formatting, or extra text. The response must be strictly a valid floating-point number.\\

Candidate generation \texttt{LLAMBO-light}& The following are past evaluations of a black-box function, which is \colorbox{lightgreen}{\textbf{Description Card}}[\texttt{Function Patterns}]. \colorbox{lightpurple}{\textbf{Data Card}} The allowable ranges for x is [0, 1]\textasciicircum{} \colorbox{lightgreen}{\textbf{Description Card}}[\texttt{Dimensionality}]. Based on the past data, recommend the next point to evaluate that balances exploration and exploitation: - Exploration means selecting a point in an unexplored or less-sampled region that is far from the previously evaluated points. - Exploitation means selecting a point close to the previously high-performing evaluations. The goal is to eventually find the global maximum. Return only a single \colorbox{lightgreen}{\textbf{Description Card}}[\texttt{Dimensionality}]-dimensional numerical vector with high precision. The response must be valid JSON with no explanations, labels, or extra formatting. Do not include any explanations, labels, formatting, or extra text. 
\\

\bottomrule
\end{tabular}
\caption{Prompts used across different stages of \texttt{LLAMBO} and \texttt{LLAMBO-light} in the BBO task.}
\label{tab:llmbpo}
\end{table}
\clearpage
\subsection{Experiment Details for Hyperparameter Tuning Task}
\label{sec:appb2}
The tuning objective for all models is to minimize the MSE. The search spaces for the hyperparameters are specified as follows.
\subsection*{RF-4$D$}
\begin{itemize}
\item \texttt{max\_depth} (Maximum depth of a tree): $[-1, 50]$ (integer; $-1$ indicates no limit)
\item \texttt{min\_samples\_split} (Minimum samples to split an internal node): $[2, 20]$ (integer)
\item \texttt{min\_samples\_leaf} (Minimum samples required in a leaf node): $[1, 20]$ (integer)
\item \texttt{max\_features} (Fraction of features to consider for best split): $[0.1, 1.0]$
\end{itemize}

\subsection*{SVR-3$D$}
\begin{itemize}
\item \texttt{C} (Regularization parameter): $C \in [0.01, 1000.0]$
\item \texttt{epsilon} (Epsilon in the $\epsilon$-insensitive loss): $\epsilon \in [0.0001, 1.0]$
\item \texttt{gamma} (Kernel coefficient for RBF kernel): $\gamma \in [0.0001, 1.0]$
\end{itemize}

\subsection*{XGB-4$D$}
\begin{itemize}
\item \texttt{max\_depth} (Maximum depth of a tree): $[1, 10]$ (integer)
\item \texttt{learning\_rate} (Step size shrinkage): $[0.01, 0.3]$
\item \texttt{subsample} (Subsample ratio of the training set): $[0.5, 1.0]$
\item \texttt{colsample\_bytree} (Subsample ratio of columns per tree): $[0.5, 1.0]$
\end{itemize}

\paragraph{Prompts design for hyperparameter tuning task.} The prompt settings for both \texttt{LLAMBO} and \texttt{LLAMBO-light} in the hyperparameter tuning task follow the same configuration as in the BBO task ($\alpha$ and AF), with the exception of the prompt structure. In particular, the hyperparameter tuning prompts also require both the \colorbox{lightgreen}{\textbf{Description Card}} and the \colorbox{lightpurple}{\textbf{Data Card}} to capture the relevant model specifications and historical evaluations.  

Each \colorbox{lightgreen}{\textbf{Description Card}} specifies four key components:

   \begin{itemize}
        \item \texttt{Data Patterns}: Summarize key dataset features that help the LLM understand the task.
        \begin{enumerate}
            \item Piston simulation function: \textit{"The dataset models the cycle time of a piston moving within a cylinder, based on seven physical input variables including mass, surface area, pressure, and temperature."}
            \item Robot simulation function: \textit{"The dataset models the position of a planar robotic arm consisting of four rotating joints and link lengths, computing the Euclidean distance of the arm’s endpoint from the origin."}
        \end{enumerate}

        \item \texttt{Model Patterns}: Describe the predictive model being used and any fixed configurations.

        \item \texttt{Controllable Hyperparameters}: List the tunable hyperparameters along with their types and ranges, and this matches the controllable parameters described previously.
        \item \texttt{Dimensionality}: The dimensions of controllable hyperparamters. 
    \end{itemize}

The \colorbox{lightpurple}{\textbf{Data Card}} for the hyperparameter tuning task may, for instance, take the form:
\textit{(C, gamma): (0.21, 12), accuracy: 0.899; (C, gamma): (0.98, 422), mean squared error: 1.00},
where each entry reflects a past evaluation consisting of a specific hyperparameter configuration and its corresponding performance metric (i.e., MSE).

Together with the \colorbox{lightgreen}{\textbf{Description Card}}, which outlines the model and search space, the complete prompt structure used in both \texttt{LLAMBO} and \texttt{LLAMBO-light} is illustrated in Table~\ref{tab:llmhpo}.

\begin{table}[htbp]
\centering
\renewcommand{\arraystretch}{1.2}
\begin{tabular}{>{\centering\arraybackslash}p{0.21\textwidth} p{0.78\textwidth}}
\toprule
\textbf{Phases} & \textbf{Prompts} \\
\midrule
Warmstarting \texttt{LLAMBO LLAMBO-light}& 
You are assisting with automated machine learning using \colorbox{lightgreen}{\textbf{Description Card}}[\texttt{Model Patterns}] for a regression task. \colorbox{lightgreen}{\textbf{Description Card}}[\texttt{Data Patterns}]. Model performance is evaluated using mean squared error. I’m exploring a subset of hyperparameters defined as \colorbox{lightgreen}{\textbf{Description Card}}[\texttt{Controllable Hyperparameters}]. Please suggest \colorbox{lightgreen}{\textbf{Description Card}}[\texttt{Dimensions}] diverse yet effective configurations to initiate a Bayesian optimization process. Return the points strictly in JSON format as a list of \colorbox{lightgreen}{\textbf{Description Card}}[\texttt{Dimensions}]-dimensional vectors. Do not include any explanations, labels, formatting, or extra text.\\

Candidate sampling \texttt{LLAMBO}& 
The following are examples of the performance of a \colorbox{lightgreen}{\textbf{Description Card}}[\texttt{Model Patterns}] measured in mean square error and the corresponding model hyperparameter configurations. \colorbox{lightpurple}{\textbf{Data Card}} \colorbox{lightgreen}{\textbf{Description Card}}[\texttt{Data Patterns}] The allowable ranges for the hyperparameters are: \colorbox{lightgreen}{\textbf{Description Card}}[\texttt{Controllable Hyperparameters}]. Recommend a configuration that can achieve the target mean square error of \colorbox{lightorange}{\textbf{Target Score}}. Return only a single \colorbox{lightgreen}{\textbf{Description Card}}[\texttt{Dimensions}] -dimensional numerical vector with the highest possible precision. The response needs to be a list and must be strictly valid JSON. Do not include any explanations, labels, formatting, or extra text.\\

Surrogate modeling \texttt{LLAMBO}& The following are examples of the performance of a \colorbox{lightgreen}{\textbf{Description Card}}[\texttt{Model Patterns}] measured in mean square error and the corresponding model hyperparameter configurations. The model is evaluated on a regression task. \colorbox{lightpurple}{\textbf{Data Card}} \colorbox{lightgreen}{\textbf{Description Card}}[\texttt{Data Patterns}] Predict the mean square error when the model hyperparameter configurations are set to be $x$. Return only a single numerical value between 0 and 1. Do not include any explanations, labels, formatting, or extra text. The response must be strictly a valid floating-point number.\\

Candidate generation \texttt{LLAMBO-light}& The following are examples of the performance of a \colorbox{lightgreen}{\textbf{Description Card}}[\texttt{Model Patterns}] measured in mean square error and the corresponding model hyperparameter configurations. \colorbox{lightpurple}{\textbf{Data Card}} \colorbox{lightgreen}{\textbf{Description Card}}[\texttt{Data Patterns}] Based on the past data, recommend the next point to evaluate that balances exploration and exploitation: - Exploration means selecting a point in an unexplored or less-sampled region that is far from the previously evaluated points. - Exploitation means selecting a point close to the previously high-performing evaluations. The goal is to eventually find the global maximum. Return only a single \colorbox{lightgreen}{\textbf{Description Card}}[\texttt{Dimensionality}]-dimensional numerical vector with high precision. The response must be valid JSON with no explanations, labels, or extra formatting. Do not include any explanations, labels, formatting, or extra text.
\\

\bottomrule
\end{tabular}
\caption{Prompts used across different stages of \texttt{LLAMBO} and \texttt{LLAMBO-light} in the hyperparameter tuning task.}
\label{tab:llmhpo}
\end{table}

\clearpage
\section{3D Printing Details}
\label{sec:appc}
We define the controllable design parameters of the printer via a comprehensive correlation analysis, and the selected variables of interest are summarized below.
\begin{itemize}
    \item Nozzle Temperature: Temperature of the hot-end nozzle in °C.
    \item Z Hop Height: The vertical lift of the nozzle during travel (non-printing) moves.
    \item Coasting Volume: Volume of filament not extruded at the end of a line.
    \item Retraction Distance: Distance (mm) the filament is pulled back before a travel move.
    \item Outer Wall Wipe Distance: Distance (mm) the nozzle continues moving after the outer wall ends.
\end{itemize}

\subsection{Qualifying the Stringing Percentage}
\label{sec:appc1}
An image-based metric is used to qualify the stringing percentage. Printed parts were photographed under consistent lighting conditions against a black background. Each image was converted to grayscale to simplify processing, and a fixed region of interest (ROI) was cropped to capture the space between the two vertical columns (see the left panel of Figure~\ref{fig:measure}). This region should appear empty when no stringing is present.

To differentiate potential stringing from the background, a pixel intensity threshold was selected through trial-and-error. Pixels with intensity below the threshold were set to black, while those above were set to white (see the right panel of Figure~\ref{fig:measure}). The stringing percentage was then calculated as the ratio of white pixels to the total number of pixels within the ROI. This approach offers a fast and consistent approximation of stringing severity across multiple prints.

\begin{figure}[!htbp]
\includegraphics[height = 0.35\textwidth]{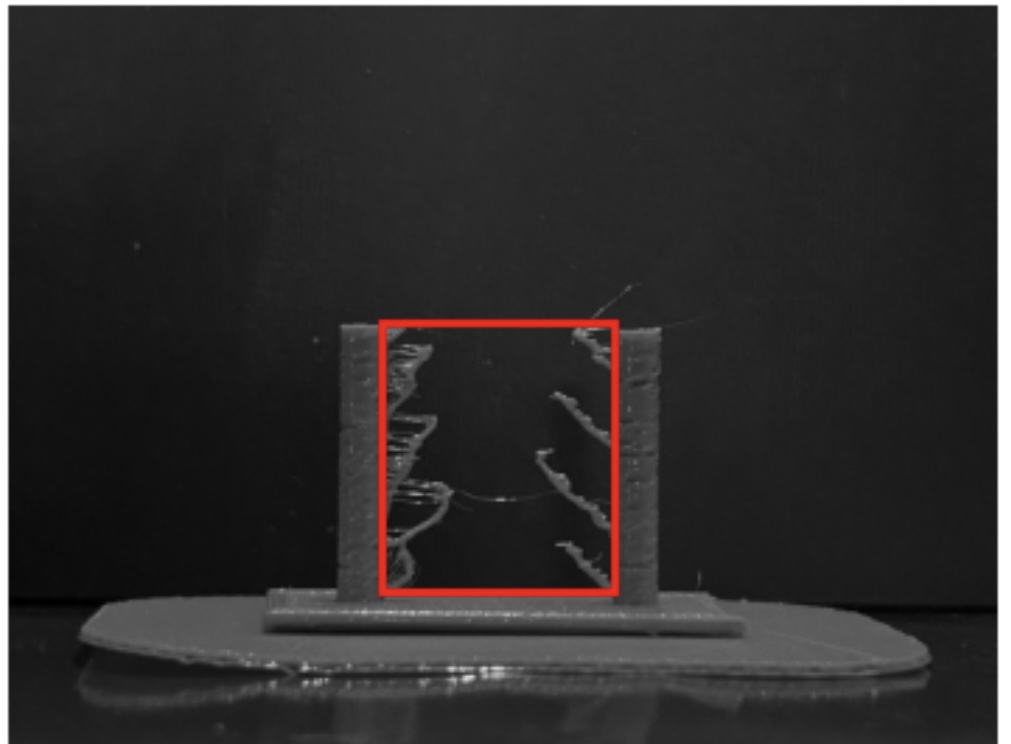}\hfill\includegraphics[height = 0.35\textwidth]{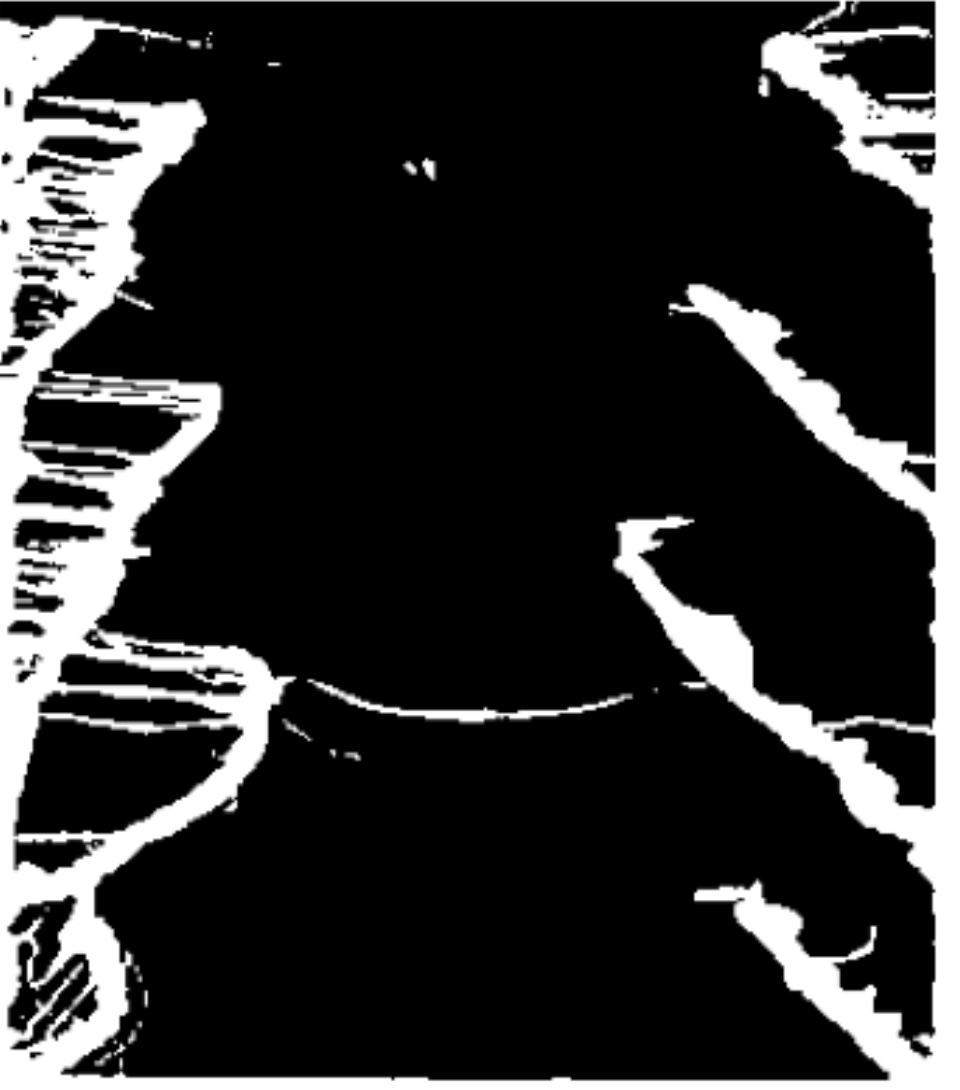}\hfill
\caption{Grayscale image (BO, iteration 2) of the printed part with the region of interest (left panel), and white pixels approximating the stringing amount (15.9\%) over the region of interest (right panel).} 
\label{fig:measure}
\end{figure}

\subsection{Prompts Design}
\label{sec:appc2}
The settings of LLMs are the same as in Appendix~\ref{sec:appb1}. The system prompt is \textit{You are an AI assistant that helps me optimize the 3D manufacturing process by controlling parameters}. An example of the \colorbox{lightpurple}{\textbf{Data Card}} is \textit{"(Nozzle Temperature, Z Hop Height, Coasting Volume, Retraction Distance, Outer Wall Wipe Distance): (235, 0.3, 0.06, 4, 0.3), Stringing percentage: 12\%}. We also need a \colorbox{lightblue}{\textbf{Parameter Description Card}} to describe the controllable and fixed variables, which is \\
\textit{You are allowed to adjust only five slicing parameters: \textbf{Nozzle Temperature}: Range 220--260\textdegree{}C (step: 1\textdegree{}C), \textbf{Z Hop Height}: Range 0.1--1.0 mm (step: 0.1 mm), \textbf{Coasting Volume}: 0.02--0.1 mm\textsuperscript{3} (step: 0.01 mm\textsuperscript{3}), \textbf{Retraction Distance}: 1.0--10.0 mm (step: 1 mm), and \textbf{Outer Wall Wipe Distance}: 0.0--1.0 mm (step: 0.1 mm) Slicing settings below are fixed: Retraction Speed = 60 mm/s, Travel Speed = 178 mm/s, Fan Speed = 60\%. Other slicing settings are set to be the software's default values}.

The warmstarting prompt (for \texttt{LLAMBO-light} and \texttt{LLAMBO}), candidate sampling prompt (for \texttt{LLAMBO}), surrogate modeling prompt (for \texttt{LLAMBO}), and candidate generation prompt(for \texttt{LLAMBO-light}) are shown in Table~\ref{tab:llm3d}. 

\begin{table}[htbp]
\centering
\renewcommand{\arraystretch}{1.2}
\begin{tabular}{>{\centering\arraybackslash}p{0.21\textwidth} p{0.78\textwidth}}
\toprule
\textbf{Phases} & \textbf{Prompts} \\
\midrule
Warmstarting \texttt{LLAMBO LLAMBO-light}& 
You are assisting with process planning for 3D printing a simple part using Overture PETG filament on an Ender 3 Pro in a room-temperature environment (around 22°C). The objective is to reduce stringing as much as possible, using knowledge of PETG printing behavior. \colorbox{lightblue}{\textbf{Parameter Description Card}} After each print, stringing is measured via an image-based algorithm, returning a percentage between 0 and 100\%. You must now propose 2 promising combinations of Nozzle Temperature (°C), Z Hop Height (mm), Coasting Volume (mm³), Retraction Distance (mm), Outer Wall Wipe Distance (mm) that are likely to minimize stringing, based on your understanding of PETG behavior. Format your answer strictly as a valid JSON list of 5-dimensional vectors. Each vector should be: [Nozzle Temperature (°C), Z Hop Height (mm), Coasting Volume (mm³), Retraction Distance (mm), Outer Wall Wipe Distance (mm)]. Do not include any explanations, labels, formatting, or extra text.\\

Candidate sampling \texttt{LLAMBO}& 
The following are past evaluations of the stringing percentage and their corresponding Nozzle Temperature (°C), Z Hop Height (mm), Coasting Volume (mm³), Retraction Distance (mm), Outer Wall Wipe Distance (mm) values: \colorbox{lightpurple}{\textbf{Data Card}} \colorbox{lightblue}{\textbf{Parameter Description Card}} Recommend a new ([Nozzle Temperature (°C), Z Hop Height (mm), Coasting Volume (mm³), Retraction Distance (mm), Outer Wall Wipe Distance (mm)) that can achieve the stringing percentage of \colorbox{lightorange}{\textbf{Target Score}}. Instructions: Return only one 5D vector: `[Nozzle Temperature (°C), Z Hop Height (mm), Coasting Volume (mm³), Retraction Distance (mm), Outer Wall Wipe Distance (mm)]`. Ensure the values respect the allowed ranges and increments. Respond with strictly valid JSON format. Do not include any explanations, comments, or extra text.\\

Surrogate modeling \texttt{LLAMBO}& 
The following are past evaluations of the stringing percentage and the corresponding Nozzle Temperature (°C), Z Hop Height (mm), Coasting Volume (mm³), Retraction Distance (mm), Outer Wall Wipe Distance (mm). \colorbox{lightpurple}{\textbf{Data Card}} \colorbox{lightblue}{\textbf{Parameter Description Card}} Predict the stringing percentage at ([Nozzle Temperature, Z Hop Height, Coasting Volume, Retraction Distance, Outer Wall Wipe Distance) = {x}. The stringing percentage needs to be a single value between 0 to 100. Return only a single numerical value. Do not include any explanations, labels, formatting, percentage symbol, or extra text.\\

Candidate generation \texttt{LLAMBO-light} & The following are past evaluations of the stringing percentage and their corresponding Nozzle Temperature (°C), Z Hop Height (mm), Coasting Volume (mm³), Retraction Distance (mm), Outer Wall Wipe Distance (mm) values: \colorbox{lightpurple}{\textbf{Data Card}} \colorbox{lightblue}{\textbf{Parameter Description Card}} Your goal is to recommend the next setting to evaluate that balances exploration and exploitation: Exploration favors regions that are less-sampled or farther from existing evaluations. Exploitation favors regions near previously low stringing percentages. The ultimate objective is to find the global minimum stringing percentage. The ideal stringing percentage is 0\%. Instructions: Return only one  5-dimensional vector: [Nozzle Temperature (°C), Z Hop Height (mm), Coasting Volume (mm³), Retraction Distance (mm), Outer Wall Wipe Distance (mm)]. Ensure the values respect the allowed ranges and increments. Respond with strictly valid JSON format. Do not include any explanations and comments.
\\

\bottomrule
\end{tabular}
\caption{Prompts used across different stages of \texttt{LLAMBO} and \texttt{LLAMBO-light} in the 3D printing experiment.}
\label{tab:llm3d}
\end{table}

\end{document}